\titlespacing*{\section}{0pt}{*2}{*1}
\titlespacing*{\subsection}{0pt}{*2}{*1}
\renewcommand\thefootnote{}
\theoremstyle{plain}
\newtheorem{theorem}{Theorem}[section]
\newtheorem{proposition}[theorem]{Proposition}
\newtheorem{lemma}[theorem]{Lemma}
\newtheorem{corollary}[theorem]{Corollary}
\theoremstyle{definition}
\newtheorem{definition}{Definition}
\newtheorem{condition}{Condition}
\theoremstyle{remark}
\newtheorem{example}{Example}
\def\msigma{\boldsymbol{\Sigma}}
\def\mpi{\boldsymbol{\Pi}}
\def\ie{i.e.}
\def\Rone{\uppercase\expandafter{\romannumeral1}}
\def\Rtwo{\uppercase\expandafter{\romannumeral2}}
\def\rmh{\mathbf{H}}
\def\matrixu{\bm{U}}
\def\vu{\bm{u}}
\def\mv{\bm{V}}  
\def\t{T}
\def\T{T}
\def\mA{\bm{A}}
\def\mD{\bm{D}}
\def\mI{\bm{I}}
\def\rvd{\mathbf{d}}
\def\vvartheta{\boldsymbol{\vartheta}}
\def\ma{\bm{A}}
\def\rcw{\textnormal{W}}
\def\md{\bm{D}}
\def\rmd{\mathbf{D}}
\def\vf{\bm{f}}
\def\re{\textnormal{e}}
\def\ry{\textnormal{y}}
\def\rvx{\mathbf{x}}
\def\rveps{\boldsymbol{\varepsilon}}
\def\va{\bm{a}}
\def\cz{\bm{\mathcal{Z}}}
\def\vx{\bm{x}}
\def\vz{\bm{z}}
\def\vv{\bm{v}}
\def\rvz{\mathbf{z}}
\def\vb{\bm{b}}
\def\vc{\bm{c}}
\def\vd{\bm{d}}
\def\rvg{\mathbf{g}}
\def\rca{\textnormal{A}}
\def\1{\mathbf{1}}
\def\cc{\mathcal{C}}
\def\cb{\mathcal{B}}
\def\vtheta{\boldsymbol{\theta}}
\def\vdelta{\boldsymbol{\delta}}
\def\mdelta{\boldsymbol{\Delta}}
\def\vpsi{\boldsymbol{\psi}}
\def\mtheta{\boldsymbol{\Theta}}
\def\vbeta{\boldsymbol{\beta}}
\def\sxi{\boldsymbol{\Xi}}
\def\mbeta{\mathbf{B}}
\def\mlambda{\boldsymbol{\Lambda}}
\def\valpha{\boldsymbol{\alpha}}
\def\malpha{\mathbf{A}} 
\def\0{\mathbf{0}}
\def\ball{\mathbb{B}}
\def\mi{\bm{I}}
\def\vs{\bm{s}}
\def\vy{\bm{y}}
\def\rmq{\mathbf{Q}}
\def\tr{\mathrm{tr}}
\def\ker{\mathrm{Ker}}
\def\gf{\textnormal{GF}}
\def\loc{\mathrm{loc}}
\def\his{\mathrm{his}}
\def\tr{\mathrm{tr}}
\def\bbeta{\boldsymbol{\beta}}
\def\valpha{\boldsymbol{\alpha}}
\def\mpsi{\boldsymbol{\Psi}}
\def\mupsilon{\boldsymbol{\Omega}}
\def\rmc{\mathbf{C}}
\def\vd{\bm{d}}
\def\rvg{\mathbf{g}}
\def\rca{\textnormal{A}} 
\def\rcb{\textnormal{B}}
\def\rone{\romannumeral1}
\def\rtwo{\romannumeral2}
\def\rthree{\romannumeral3}
\def\rfour{\romannumeral4}
\def\cB{\mathcal{B}}
\def\rfive{\romannumeral5}
\DeclareMathOperator*\argmin{\mathrm{argmin}}
\begin{document}
\begin{onehalfspace}

\title{Heterogeneous Federated Learning on a Graph}
\author{Huiyuan Wang\footnotemark[1], Xuyang Zhao\footnotemark[1], and Wei Lin\footnotemark[1]}
\date{}
\maketitle
\footnotetext[1]{School of Mathematical Sciences and Center for Statistical Science, Peking University, Beijing, China}

\let\thefootnote\relax\footnote{\emph{Address for correspondence}: Wei Lin, School of Mathematical Sciences and Center for Statistical Science, Peking University, Beijing 100871, China (E-mail: weilin@math.pku.edu.cn).}



\begin{abstract}
\noindent 

Federated learning, where algorithms are trained across multiple decentralized devices without sharing local data, is increasingly popular in distributed machine learning practice. Typically, a graph structure $G$ exists behind local devices for communication. In this work, we consider parameter estimation in federated learning with data distribution and communication heterogeneity, as well as limited computational capacity of local devices. We encode the distribution heterogeneity by parametrizing distributions on local devices with a set of distinct $p$-dimensional vectors. We then propose to jointly estimate parameters of all devices under the $M$-estimation framework with the fused Lasso regularization, encouraging an equal estimate of parameters on connected devices in $G$. We provide a general result for our estimator depending on $G$, which can be further calibrated to obtain convergence rates for various specific problem setups. Surprisingly, our estimator attains the optimal rate under certain graph fidelity condition on $G$, as if we could aggregate all samples sharing the same distribution. If the graph fidelity condition is not met, we propose an edge selection procedure via multiple testing to ensure the optimality. To ease the burden of local computation, a decentralized stochastic version of ADMM is provided, with convergence rate $O(T^{-1}\log T)$ where $T$ denotes the number of iterations. We highlight that, our algorithm transmits only parameters along edges of $G$ at each iteration, without requiring a central machine, which preserves privacy. To address communication heterogeneity, we further extend it to the case where devices are randomly inaccessible during the training process, with a similar algorithmic convergence guarantee. The computational and statistical efficiency of our method is evidenced by simulation experiments and the 2020 US presidential election data set.

\bigskip

\noindent\emph{Keywords}: Federated learning, network lasso
\end{abstract}
\end{onehalfspace}

\newpage

\section{Introduction}
Intelligent devices, such as mobile phones, wearable devices, and autonomous cars, generate massive amounts of data every day. These data have a wide range of applications, for instance, the next word prediction \citep{konevcny2016federated}, scheduling the traffic to avoid the congestion \citep{accettura2013decentralized}, smoke detection \citep{khan2019energy}, and building health monitoring \citep{wu2020fedhome,scuro2018iot}.  In general, a graph structure exists among devices \citep{bello2014intelligent,atzori2010internet}. To accomplish certain tasks, different devices can communicate with each other along edges of such a graph, in addition to performing local computation. In fact,
concerns about private information leakage \citep{voigt2017eu}, coupled with the increasing computing power of devices, have made it a practice to store data as well as train algorithms locally and transmit only parameters. Besides, heterogeneity among devices naturally arises. For instance, the storage, computing and communication capabilities, and even the data distribution of each device may differ, and not all of the devices are accessible for a real-time training procedure \citep{li2020federated,srinidhi2019network}. These issues pose fundamental challenges to conventional distributed learning methodologies.

Distributed statistical methods commonly presume that a large dataset is randomly split into $K$ subsets stored on local devices that are connected to a central machine \citep{gao2021review}. Inspired by the idea of divide-and-conquer, small tasks are solved on local devices in parallel, and local results are aggregated on the central machine to produce the final result \citep{Qin2022selective}. Most of these approaches are statistically efficient and only require one round of communication, but are 
limited by the condition $K=o(\sqrt{N})$, where $N$ denotes the total sample size, e.g., \citet{JMLR:v14:zhang13b}, \citet{battey2018}, \citet{fan2019distributed}, among others. Despite that several methods are proposed to relax the condition on the number of local devices \citep{fan2021,Jordan1029communication}, this line of work essentially requires that tasks on each local device can be exactly solved, which can be problematic for devices with a limited computing power. This issue is even pronounced for stream data and/or for parameters required to be updated in real-time, e.g., automated cars \citep{Zhang2021real}.
To accommodate these situations, 
\cite{stich2019local} proposed local stochastic gradient descent (SGD) that runs SGD independently in parallel on
each device and averages the aggregated parameters on the central machine only once in a while. The iteration which parameter averaging takes place is referred to as `global synchronization', and the others are called `local update'. The number of global synchronization, i.e., communication cost, can be much smaller than the total number of iterations for algorithmic convergence. However, the aforementioned methods are all confined to homogeneous setups.

Federated learning \citep{konecny2016federated} is a machine learning technique tailored to the heterogeneity problem \citep{kairouz2019advances,zhang2020convergence}. One fundamental algorithm, federated averaging \citep{pmlr-v54-mcmahan17a}, generalizes local SGD in terms of communication heterogeneity by allowing certain devices to be randomly unavailable at each iteration. Notably, the graph whose edge represents the pathway that information can be transmitted is star-shaped for both local SGD and federated averaging, with each device connecting to a central machine. Due to limited bandwidth, it is increasingly impractical to send local parameters to a central machine for an ever-expanding network size \citep{Pavel2017mobile}.
To get rid of the central machine and also adapt to non-i.i.d data, \cite{koloskova2020unified} proposed a decentralized SGD by sampling random connected graphs at each iteration, where parameters of the neighbors in a simulated random graph, instead of all devices, are averaged at the synchronization step. This kind of algorithm is also known as the gossip algorithm \citep{boyd2006randomized}, which is communication efficient since only connected devices interact with each other at each iteration. 

Although effectively these inspiring algorithms can be computed, device heterogeneity therein lacks rigorous statistical formulation. For example, both federated averaging and decentralized SGD are designed for the situation where parameters of devices are the same.  However, parameter equality indeed implies distribution identity if parameters are identifiable; see \citet[Chap.5]{van2000asymptotic} for details. Relating heterogeneous data distribution to parameter estimation, we can measure the goodness of algorithms in dealing with heterogeneity in terms of a metric between estimators and the true parameter \citep{cai2021shir}. This line of work commonly assumes that effects of covariates on outcomes can be decomposed into a common effect shared by all local devices and device-specific effects that explain heterogeneity, e.g., \citet{Zhao2016partial} and \citet{duan2021heterogeneity}. Under this parametrization of heterogeneity, covariates giving rise to device-specific effects require to be known, which demands a strong prior knowledge. Moreover, algorithms therein cannot be executed in real-time, which makes them mostly suitable to multi-center research \citep{sidransky2009multicenter} instead of federated learning.

In this work, we encode data heterogeneity with an unknown graph $G_0=(V,E_0)$, where $V$ consists of all devices as nodes and $G_0$ is defined as a collection of multiple disjoint cliques. Each clique essentially defines a cluster of devices sharing the same distribution identified by some unknown $p$-dimensional parameters. We further assume that a graph $G=(V,E)$ is given in priori with possibly $E\neq E_0$, whose edges not only represent communication pathways
but also reveal certain similarities between connected devices. In social networks \citep{scott1988social,wasserman1994social}, the property that linked nodes act similarly is known as network cohesion \citep{li2019prediction}, a phenomenon observed in numerous social behavior studies \citep{christakis2007spread,fujimoto2012social}. 

We promote the estimators on connected devices to be equal by the following $M$-estimation penalized by the fused Lasso:
\begin{align}\label{eq:penalized_m}
\min_{\vtheta_u\in \sxi, u\in V} \frac{1}{|V|}\sum_{u\in V}\frac{1}{n_u}\sum_{k=1}^{n_u}m_u(\rvz_k^{(u)};\vtheta_u)+ \lambda\sum_{(i,j)\in E}\phi(\vtheta_i-\vtheta_j), 
\end{align}
where $m_{u}(\cdot;\vtheta), u\in V$, denote some known functions, $\sxi\subset \mathbb{R}^p$ denotes the parameter space which is a compact, $\{\rvz_k^{(u)}\}_{k=1}^{n_u}$ denotes the dataset observed on device $u$, and $\phi(\cdot)$ is some norm defined on $\mathbb{R}^p$. Notably, a wide range of statistical models, including (generalized) linear models, Huber regression, and maximum likelihood estimation, are included by choosing corresponding $m_{u}(\cdot;\vtheta), u\in V$. 
We propose Fed-ADMM, a decentralized stochastic version of alternating direction method of multipliers (ADMM) algorithm, to solve \eqref{eq:penalized_m}.  Quite different from federated averaging, our algorithm does not require a central machine and parameters are transmitted device-to-device. Our algorithm does not require a high computational capacity of local devices either; at each iteration, only a mini-batch of samples is processed on each device. We prove that it achieves $O(T^{-1}\log T)$ convergence rate to the global minimizer of \eqref{eq:penalized_m} where $T$ denotes the total number of iterations. We further show that our algorithm attains the same convergence rate under malicious random block of devices in the real-time optimization process. 
Moreover, we provide a deterministic theorem on the consistency of the global minimizer of \eqref{eq:penalized_m}.  Interestingly, the estimation error can be decomposed into two components, the averaged variance of $M$-estimation without regularization from the data fidelity term in \eqref{eq:penalized_m}, and the bias term introduced by wrongly shrinking $\vtheta_i -\vtheta_j, (i,j)\in E\setminus E_0$ towards zero due to the fused regularization. In certain specific cases, we further obtain probabilistic results for the estimation error. Our result shows that, if the given graph $G$ is close enough to $G_0$, the global minimizer of \eqref{eq:penalized_m} performs optimally as if we could aggregate the whole data and know which devices share the same distribution. For the case where $G$ provides too much misleading information, we propose an adaptive edge selection procedure through multiple testing to ensure the optimality.

\subsection{Related Work}
Our work is closely related to personalized federated learning which, vaguely speaking, aims to personalize global models to perform better for individual devices. There is a growing body of literature that focuses on personalized federated learning. \cite{mansour2020three} proposed to cluster similar devices first and then applied federated averaging to each cluster. Taking a perspective of transfer learning, \cite{wang2019federated}
proposed to train a global model first, and then some or all of parameters of the global model are fine-tuned on local devices. \cite{jiang2019improving} borrowed some ideas from  model agnostic meta learning to deal with personalized federated learning problems. 
\cite{smith2017federated} proposed MOCHA to produce personalized solutions in light of multi-task learning. For a more comprehensive review, see \cite{kulkarni2020survey}. These methods, however, either serve as pure algorithmic solutions that lack rigorous statistical analyses, or require a strong computing power of local devices.

Another line of work related to this article is network/fused Lasso \citep{hallac2015network,tibshirani2005sparsity,rudin1992nonlinear}. This line of work presumes that parameters are sparse over a given graph/network. \cite{hutter2016optimal} derived a sharp convergence rate for Gaussian mean estimation with total variation regularization. \cite{hallac2015network} used distributed ADMM to solve optimization problems of the same form as \eqref{eq:penalized_m}. Our work generalize \cite{hutter2016optimal} to general $M$-estimation and \cite{hallac2015network} to a stochastic federated setting.  
\cite{richards2021distributed} considered the case where each node is associated with a sparse
linear model, and two nodes are linked if the difference of their solutions is also sparse. They required that the underlying graph is a tree and sparsity
of the differences across nodes is smaller than the sparsity at the root. We, however, consider arbitrary graphs.

\subsection{Organization of This Paper}
The rest of the paper is organized as follows. Some useful notation, problem setup and related assumptions are discussed in Section \ref{sec:pre}. Section \ref{sec:method} mainly contains details of our method, including statistical guarantees of our estimators and an edge selection procedure through multiple testing. In Section \ref{sec:optim}, we introduce the Fed-ADMM together with its extension and show their algorithmic consistency. Section \ref{sec:simulation} consists of simulations, and a real-world data analysis is included in Section \ref{sec:real}.
\section{Preliminaries}\label{sec:pre}
We first introduce some notation used in this article.
\subsection{Notation}
For any set $S$, we denote by $|S|$ its cardinality. We denote by $G=(V,E)$ a graph with node set $V=\{1,\ldots,|V|\}$ and edge set $E$. For any $e=(i,j)\in E$, we let $e^+=\max\{i,j\}$ and $e^-=\min\{i,j\}$. The signed incidence matrix with respect to $E$ is denoted by $\md\in\{-1,0,1\}^{|E|\times |V|}$ whose $(e,i)$-th entry is $D_{e i}=\1\{i = e^+\} - \1\{i = e^-\} $ for $e\in E$ and $i\in V$, where $\1\{\cdot\}$ denotes the indicator function. We let $\md^{\dag}$ be the Moore--Penrose inverse of $\md$. We denote by $\cc_1,\ldots,\cc_{K}$ the connected components of a graph $G=(V,E)$, where $K$ is the number of $\cc_i$s in $G$. We sometimes emphasis the dependence of $K$ on $G=(V,E)$ by writing $K(E)$. For any matrix $\ma=(\va_1,\ldots,\va_m)^\t\in\mathbb{R}^{m\times p}$, define the $\ell_1/\phi$-norm of $\ma$ as $R(\ma)=\sum_{j=1}^m\phi(\va_j)$, where $\phi:\mathbb{R}^p\mapsto [0,\infty)$ denotes a norm defined on $\mathbb{R}^p$. Sometimes, we also write $\ma=(\va_k: k\in S)$ a $|S|\times p$ as a matrix whose $j$-th row is $\va_j^\t$. We denote by $\|\va\|_2$ the $\ell_2$-norm of $\va$ defined on $\mathbb{R}^p$. Let $\ball(\va;r)=\{\vx: \|\vx - \va\|_2\le r\}$ be the ball in $\mathbb{R}^p$ with center being $\va$ and radius being $r$. For some symmetric matrix $\ma$, we denote by $\lambda_{\max}(\ma)$ the maximal eigenvalue of $\ma$ and by $\lambda_{\min}(\ma)$ its minimal eigenvalue. If $\ma$ is positive semi-definite, we denote by $\lambda_{\min}^{+}(\ma)$ its smallest nonzero eigenvalue.

\subsection{Problem Setup and Identifiability of Heterogeneity}

We consider the heterogeneity of devices in federated learning.  A device can be different from other devices in a variety of ways, including data distribution, computing power, and availability patterns during communication. These types of spatial and temporal heterogeneity can be described by graphs whose nodes represent devices. Consider probability measures $P(\vtheta_u^*)$, $u\in V$, with $\{\vtheta_u^*\colon u\in V\}\subset \ball(\0_p;r_0)\subset\mathbb{R}^p$ for some constant $r_0\in (0,\infty)$. Naturally, we denote by $\sxi=\ball(\0_p;r_0)$ the parameter space. We 
assume that, for each $u\in V$, $\rvz_k^{(u)}\sim P(\vtheta_u^*), 1\le k\le n_u$ independently on device $u$. We denote by $\cz_u\subset\mathbb{R}^{q_u}$ the support set of $P(\vtheta_u^*)$ such that $\{\rvz_k^{(u)}\}_{k=1}^{n_u}\subset \cz_u$. We assume that $\cz_u$ is compact such that $\max_{u\in V}\sup_{\vz_1,\vz_2\in\cz_u}\|\vz_1^{(u)}-\vz_2^{(u)}\|_2=r_z\in (0,\infty)$. Let $n=\min_{u\in V}n_u$.
\begin{definition}[Characteristic graph]\label{def:data_dist}
A graph $G_0=(V,E_0)$ is called a characteristic graph of a set of probability distributions $\{P(\vtheta_u^*); u\in V\}$ 
if $\vtheta_u^*=\vtheta_v^*$ is equivalent to $(u,v)\in E_0$. 
\end{definition}
A characteristic graph has a natural decomposition $G_0=\cup_{i=1}^{K(E_0)}\cc_i^*$ where $\cc_i^*$s $\big(1\le i\le K(E_0)\big)$ denote disjoint cliques. This definition is, in fact, an intermediate model between local and global models, able to adaptively characterize the degree of heterogeneity. If $K_*=1$, which means $G_0$ is complete and thus parameters of local devices equal to each other, our model is reduced to the global model considered in \cite{stich2019local} and \cite{pmlr-v54-mcmahan17a}. If $K_*=|V|$, which means all parameters are distinct from each other, we arrive at the local model and recover the setting of \cite{smith2017federated}. In practice, $E_0$ and the number of clusters $K(E_0)$ are typically unknown. 

Under a general $M$-estimation framework, $\vtheta_u^*$ can be identified by
\begin{align}\label{eq:M_est}
\vtheta_u^*=\argmin_{\vtheta\in\sxi}M_u(\vtheta)\equiv E[m_u(\rvz;\vtheta)], \quad u\in V.
\end{align}
Notably, the model \eqref{eq:M_est} consists of a wide variety of statistical models, including linear regression, generalized linear regression, maximum likelihood estimation, and so on. Moreover, methods employed for parameter estimation
are allowed to vary across devices. For example, at device $u$ we are interested in the population mean $\vtheta_u^*=E(\rvz^{(u)})$ which can be estimated by maximizing the likelihood of observed samples, while at device $v$ we are given a classification task and $\vtheta_v^*$ is the coefficient of classification hyperplane which can be estimated by logistic regression. 

Regularity conditions are required to guarantee that \eqref{eq:M_est} is well-posed. Denote by $\vpsi_u(\cdot;\vtheta)$ the derivative of $m_u(\cdot;\vtheta)$ with respect to $\vtheta$, and let $\msigma_u(\vtheta)=\mathrm{cov}(\vpsi_u(\rvz;\vtheta))$ be the covariance matrix of $\vpsi_u(\rvz;\vtheta)$ with $\rvz\sim P(\vtheta_u^*)$. Also, denote by $\rmh_u(\vtheta)$ the Hessian matrix of $M_u(\vtheta)$, and by $\widehat{\rmh}_u(\vtheta)$ its empirical counterpart, \ie, $\widehat{\rmh}_u(\vtheta) = n_u^{-1}\sum_{k=1}^{n_u}\partial \vpsi(\rvz_k^{(u)};\vtheta)/\partial \vtheta$.
\begin{condition}[Identifiability]\label{con:identifiability}
For each $u\in V$, $m_u(\rvz,\vtheta)$ is convex and twice differentiable with respect to $\vtheta$ within $\sxi$. Moreover, $\rmh_u(\vtheta)$ is Lipschitz continuous under the operator norm at $\vtheta=\vtheta_u^*$, i.e., for any $\vtheta\in \sxi$,
\[
\|\rmh_u(\vtheta) - \rmh_u(\vtheta_u^*)\|_2\le L \|\vtheta - \vtheta_u^*\|_2, 
\]
and the eigenvalues of $\rmh_u(\vtheta_u^*)$ are bounded, i.e.,
\[
\underline{\lambda} \le \min_{u\in V}\lambda_{\min}(\rmh_u(\vtheta_u^*)) \le \max_{u\in V}\lambda_{\max}(\rmh_u(\vtheta_u^*))\le \overline{\lambda} ,
\]
where $L, \underline{\lambda}$ and $\overline{\lambda}$ are some constants independent of $u\in V$ such that $\sxi \supset \cup_{u\in V}\ball(\vtheta_u^*;\underline{\lambda}/(2L))$.
\end{condition}
Condition \ref{con:identifiability} implies that for each $u\in V$, $\vtheta_u^*$ is locally identifiable; that is, it is the unique minimizer of $M_u(\vtheta)$ at the vicinity of $\vtheta_u^*$; see Lemma \ref{lem:strong_c}. In light of Definition \ref{def:data_dist}, distribution heterogeneity among devices is uniquely characterized in terms of $\mtheta^*=(\vtheta_u^*:u\in V)$. In order to  estimate $\mtheta^*$ from samples, we also need to impose certain conditions on $P(\vtheta_u^*), u\in V$. 
\begin{condition}[Design and noise distribution]\label{con:distribution}
\begin{itemize}
\item[(\rone)]\textbf{Sub-Gaussian noises.} For each $u\in V$, the random vector $\vpsi_u(\rvz_k^{(u)};\vtheta^*_u)$ is sub-Gaussian with parameter $\sigma^2\in(0,\infty)$, \ie,
\[
E [\exp \{( \va^\t\vpsi_u(\rvz_k^{(u)};\vtheta^*_u) )^2 / \sigma^2\}] \le 2,\quad  k = 1, \dots, n_u,
\]
for any $ \va \in \mathbb{R}^p$ with $\| \va \|_2 = 1$.
\item[(\rtwo)]\textbf{Fixed design.} It holds that 
\begin{align*}
\kappa^{-1} &\le \min_{u\in V}\inf_{\vtheta\in \ball(\vtheta_u^*;r_1)}\lambda_{\min}(\widehat{\rmh}_u(\vtheta))\le \max_{u\in V}\sup_{\vtheta\in \ball(\vtheta_u^*;r_1)}\lambda_{\max}(\widehat{\rmh}_u(\vtheta))\le \kappa
\end{align*}
for some constant $\kappa\ge 1$ and $r_1>0$ with $\cup_{u\in V}\ball(\vtheta_u^*;r_1)\subset \sxi$.
\item[(\rthree)]\textbf{Random design.} For any $\vtheta\in\sxi$,
the 
derivative of $\vpsi_u(\cdot;\vtheta)$ with respect to $\vtheta$ can be decomposed as
\[
\frac{\partial \vpsi_u(\cdot;\vtheta)}{\partial \vtheta} = h_u(\cdot;\vtheta) \vf_u(\cdot) \vf_u(\cdot)^\t,
\]
for a vector-valued function $\vf_u(\cdot): \mathbb{R}^{q_u}\to \mathbb{R}^p$ and a scalar-valued function $h_u(\cdot;\vtheta)$ parametrized by $\vtheta$. Moreover,
\[
\underline{h}\le \min_{u\in V}\inf_{\vz\in\cz_u}\inf_{\vtheta\in \sxi}|h_u(\vz;\vtheta)|\le \max_{u\in V}\sup_{\vz\in\cz_u}\sup_{\vtheta\in \sxi} |h_u(\vz;\vtheta)|\le \overline{h}
\]
holds for some constants $\overline{h}\ge \underline{h}>0$, and $\vf_u(\rvz) \vf_u(\rvz)^\t$ with $\|E(\vf_u(\rvz) \vf_u(\rvz)^\t)\|_2 \le \sigma_{1,z}^2$ is a sub-exponential random matrix with parameters $(\mv,\alpha)$ for $\rvz\sim P(\vtheta_u^*)$, i.e.,
\[
E\left\{\exp\left[\lambda\left( \vf_u(\rvz) \vf_u(\rvz)^\t-E\left(\vf_u(\rvz) \vf_u(\rvz)^\t\right)\right)\right]\right\}\preceq \exp\left(\frac{\lambda^2\mv}{2}\right), 
\]
for $ |\lambda|<\alpha^{-1}$, 
where the matrix-valued parameter $\mv\in\mathbb{R}^{p\times p}$ satisfies  $ \|\mv\|_2\le \sigma_{2,z}^2$, and $ 
\sigma_{1,z}, \sigma_{2,z}$ are some positive constants.
\end{itemize}
\end{condition}
For supervised learning, part (\rone) of Condition \ref{con:distribution} imposes a sub-Gaussian tail on the distribution of noises for technical convenience, which can be relaxed to distributions of high-order moments. Part (\rtwo) of Condition \ref{con:distribution} requires that the empirical Hessian matrices have a bounded condition number, which is standard in the federated learning literature. We show in Lemma \ref{lem:conc_emp_hes} that it holds with high probability under part (\rthree) of Condition \ref{con:distribution}. 
Part (\rthree) requires that the Hessian matrix of $m_u(\cdot;\vtheta)$ can be factorized into a product, such that one factor is a sub-exponential random matrix which does not depend on $\vtheta$, and the other factor is parameter-dependent and uniformly bounded over the parameter space. We show that various popular statistical models satisfy this factorization in Section \ref{sec:method}. Notably, either part (\rtwo) or (\rthree) is enough to guarantee that $\mtheta^*$ can be recovered from finite samples.

Once \eqref{eq:M_est} being well-posed, a naive estimator of $\vtheta_u^*$ is 
\begin{align}\label{eq:naive_est}
\widehat{\vtheta}_u^{\loc} = \argmin_{\vtheta\in\sxi}\widehat{M}_u\equiv \frac{1}{n_u}\sum_{k=1}^{n_u}m_u(\rvz_k^{(u)};\vtheta).
\end{align}
Here, the dimensionality, $p$, is allowed to increase with $n$, but in a low dimensional manner, i.e., $p= o(n)$. Under Condition \ref{con:identifiability} and \ref{con:distribution}, $\widehat{\vtheta}_u^{\loc} - \vtheta_u^*$ is asymptotically normal; see Proposition \ref{prop:asymptotic_normality}. However, since $\vtheta_u^*=\vtheta_v^*$ for any $(u,v)\in E_0$, estimating each $\vtheta_u^*$ separately gives rise to an efficiency loss compared to the ideal estimator obtained by aggregating all data points with the same distribution \citep{dobriban2021distributed}. To avoid the efficiency loss, the latent structure $G_0$ among local devices requires to be considered. 

We shall introduce our estimator in the next section, and compare its performance with the naive estimator $\widehat{\mtheta}^{\loc}=(\widehat{\vtheta}^{\loc}_u : u\in V)$.

\section{Methodology}\label{sec:method}
When the characteristic graph $G_0=(V,E_0)$ is known, our heterogeneous problem is reduced to $K(E_0)$ independent homogeneous problems. Unfortunately, $G_0$ is unknown. In this article, we consider the case where $G=(V,E)$, a surrogate graph of $G_0$, is given a priori, and establish a detailed statistical analysis of the effect introduced by wrong edges $(E\setminus E_0)\cup (E_0\setminus E)$ when $G$ deviates from $G_0$. In practice, $E$ can be acquired from prior knowledge of $E_0$ or communication pathways among devices.

When $E$ incorporates prior information of $E_0$, it is beneficial to introduce a penalty that encourages an equal estimate on connected devices in $G$. By doing so, we borrow information from devices sharing the same data distribution. 
In more specific, we propose the following penalized $M$-estimation:
\begin{equation}\label{eq:ob}
\widehat{\mtheta}=(\widehat{\vtheta}_u: u\in V)=\argmin_{\mtheta}F(\mtheta)= \frac{1}{|V|}\sum_{u\in V} \widehat{M}_u(\vtheta_u)+ \lambda R(\md\mtheta),
\end{equation}
where $\mtheta=(\vtheta_u: u\in V)\in\mathbb{R}^{|V|\times p}$, 
$\md\mtheta=(\vtheta_{e^+} - \vtheta_{e^-}: e\in E)\in\mathbb{R}^{|E|\times p}$, $\lambda$ is a tuning parameter, and $R(\md\mtheta)=\sum_{e\in E}\phi(\vtheta_{e^+} - \vtheta_{e^-})$ where $\phi(\cdot)$ is a norm defined on $\mathbb{R}^p$. 
We first show that a wide range of statistical models are included in \eqref{eq:ob} through several examples.
\begin{example}[Mean estimation]
For some $u\in V$, samples are drawn independently from following model
\begin{align}
\rvz^{(u)}_k = \vtheta_u^* + \rveps^{(u)}_k,\quad 1\le k\le n_u,\ u\in V, \label{eq:mean_Example}
\end{align}
where $\vtheta_u^*\in\mathbb{R}^p$ is the unknown target mean vector on device $u$, and $\rveps^{(u)}_k$s are independent Gaussian noises with variance matrix $\sigma^2\mi_p$. Choose the negative log-likelihood as $m_u(\rvz_k^{(u)}; \vtheta)=\|\rvz^{(u)}_k-\vtheta\|_2^2/(2\sigma^2)$, and thus $\vpsi_u(\rvz_k^{(u)}; \vtheta)=\sigma^{-2}(\vtheta-\rvz^{(u)}_k)$. Since noises are normal distributed with bounded variances, Condition \ref{con:identifiability} and \ref{con:distribution} hold. 
For $n_u= 1 (\forall u\in V)$, \eqref{eq:mean_Example} is essentially nonparametric \citep{tibshirani2005sparsity,Padilla2020Adaptive}, and $\vtheta_u^*$ can be viewed as the value that some function $f_0(\cdot)$ takes on device $u$, in which case a device is a data point. In particular, if further $p=1$ and $G$ is a grid graph, \eqref{eq:mean_Example} is reduced to total variation de-noising \citep{hutter2016optimal}.

\end{example}

\begin{example}[Linear regression]
Suppose that for some $u\in V$, $\rvz_k^{(u)}=(\rvx_k^{(u)}, \ry_k^{(u)})\in\mathbb{R}^p\times \mathbb{R}$ is generated independently by
\[
\ry_k^{(u)} = (\vtheta_u^*)^\t \rvx_k^{(u)} + \varepsilon_k^{(u)}, \quad 1\le k\le n_u,
\]
where $\varepsilon_k^{(u)}$s are independent sub-Gaussian noises with parameter $\sigma^2$, and $\{\rvx_k^{(u)}\}_{k=1}^{n_u}$ is fixed with non-singular covariance matrix $\msigma_u$. Choose the squared error loss as $m_u(\rvz_k^{(u)};\vtheta) = (\ry_k^{(u)} -  \vtheta^\t \rvx_k^{(u)} )^2 /2$, then $\vpsi_u(\rvz_k^{(u)};\vtheta) = -(\ry^{(u)}_k- \vtheta^\t \rvx_k^{(u)}) \rvx_k^{(u)}$ and $\msigma_u(\vtheta_u^*) = \sigma^2\msigma_u$. It is straightforward to see that Condition \ref{con:identifiability} and part (\rone) and part (\rtwo) of Condition \ref{con:distribution} are satisfied.
\end{example}

\begin{example}[Logistic regression]
Suppose that for some $u\in V$, $\rvz_k^{(u)}=(\rvx_k^{(u)}, \ry_k^{(u)})\in\mathbb{R}^p\times \{0,1\}$ is generated independently by
\[
P(\ry_k^{(u)} \mid \rvx_k^{(u)})=\exp \left\{\ry_k^{(u)} (\vtheta_i^*)^\t \rvx_k^{(u)} - \zeta((\vtheta_u^*)^\t \rvx_k^{(u)} )\right\}, \quad 1\le k\le n_u,
\]
where $\zeta(t) = \log(1+\exp(t))$. As a standard assumption, $\{\rvx_k^{(u)}\}_{k=1}^{n_u}$ are i.i.d samples from a distribution supported on the unit sphere $\ball(\0_p;1)$ with $c\mi_p\succeq \mathrm{var}(\rvx_k^{(u)})\succeq c^{-1}\mi_p$ for some constant $c>1$. Choose the negative log-likelihood as $m_u(\rvz_k^{(u)};\vtheta) = \zeta(\vtheta^\t \rvx_k^{(u)})- \ry_k^{(u)} \vtheta^\t \rvx_k^{(u)}$. Then $\vpsi_u(\rvz_k^{(u)};\vtheta) = \zeta^\prime(\vtheta^\t \rvx_k^{(u)})\rvx_k^{(u)} -\ry_k^{(u)} \rvx_k^{(u)}$ and $(\partial \vpsi_u/\partial \vtheta)(\rvz_k^{(u)};\vtheta) = \zeta^{\prime\prime}(\vtheta^\t \rvx_k^{(u)})\rvx_k^{(u)}(\rvx_k^{(u)})^\t$. One can show that Condition \ref{con:identifiability} and \ref{con:distribution} hold since $\sup_{\rvx\in \ball(\0_p;1), \vtheta\in \sxi}|\vtheta^\t \rvx|\le \sup_{\vtheta\in\sxi}\|\vtheta\|_2\le r_0$.
\end{example}
Clearly, the regularization term $\lambda R(\md\mtheta)$ promotes $\vtheta_{e^+} - \vtheta_{e^-}$ to be zero for any $e\in E$. If $\lambda = 0$ (\ie, the extra information of $G$ does not play any role), \eqref{eq:ob} is reduced to the local estimator defined in \eqref{eq:naive_est}, since the data fidelity term $|V|^{-1}\sum_{u\in V} \widehat{M}_u(\vtheta_u)$ is separable with respect to $\vtheta_u$. On the other hand, if $\lambda\to\infty$, \eqref{eq:ob} is reduced to the global estimator that presumes that $\vtheta_u^*\equiv \vtheta^* (\forall u\in V)$, ignoring distribution heterogeneity among devices. In the following, we shall provide a general statistical convergence guarantee of $\widehat{\mtheta}$, which can be used to determine the rate of $\lambda$, and to adaptively adjust for distribution heterogeneity. 

\subsection{Statistical Guarantees}\label{sec:sg}
Let $\mdelta^* = \md\mtheta^*=(\vdelta_e^*:e\in E)$, then  $\vdelta_e^*=\vtheta_{e^+}^* - \vtheta_{e^-}^*\neq 0$ if and only if $e\in E\setminus E_0$. Inspired by the group lasso \citep{negahban2012unified}, we can recover the nonzero rows of $\mdelta^*$ under the help of $\lambda \sum_{e\in E}\phi(\vdelta_e^*)$. In high-dimensional settings, some regularity condition, e.g., compatibility condition \citep{buhlmann2011statistics} and restricted eigenvalue condition \citep{bickel2009simultaneous}, of the design matrix is required for parameter estimation. In our case, the `design matrix' with respect to $\mdelta^*$ is not explicitly given, but closely related to the incidence matrix $\md$. We use a similar notion, compatibility factor \citep{hutter2016optimal}, to characterize the regularity of `design matrix' for estimating $\mdelta^*$.
\begin{definition}[Compatibility factor]\label{def:cf}
The compatibility factor of $\md$ for a set $T\subset E$ with respect to $R(\cdot)$ is defined as
\[\kappa_{\emptyset}(\md)\equiv 1,\quad \kappa_T(\md)\equiv \inf_{\mtheta \in \mathbb{R}^{|V|\times p}}\frac{\sqrt{|T|} \|\mtheta\|_F}{R[(\md\mtheta)_T]}\quad \mathrm{for}\ T\neq\emptyset, \]
where $(\md\mtheta)_{T,:} = (\vtheta_{e^+} - \vtheta_{e^-}: e\in T)\in\mathbb{R}^{|T|\times p}$ denotes the sub-matrix of $\md\mtheta$ with rows indexed by $T$. 
\end{definition}
The compatibility factor $\kappa_T(\md)$ is similar in spirit to the compatibility condition in the Lasso literature. To see this, for $p=1$ and $\md^{\dag}\md= \mi_{|V|}$, it holds that $\mtheta\in \mathbb{R}^{|V|}, \mdelta=\md\mtheta\in \mathbb{R}^{|E|}$, $R[(\md\mtheta)_{T}]=\|\mdelta_T\|_1$, and $\|\mtheta\|_F^2 = \mdelta^\t (\md^{\dag})^\t \md^{\dag} \mdelta$. Thus $(\md^{\dag})^\t \md^{\dag}$, as the `design matrix' for estimating $\mdelta^*$, satisfies the compatibility condition with constant $\kappa_T^2(\md)$. Also, the compatibility factor helps to identify nonzero rows of $\mdelta^*$ through the following condition.

\begin{condition}\label{con:comp_graph}
The compatibility factor $\kappa_S(\md)\ge \kappa_0$ for $S=E\setminus E_0$, where $\kappa_0>0$ denotes some universal constant. 
\end{condition}

In fact, $\kappa_T(\md)$ can also be viewed as a measure of the degree centrality \citep{Sharma2013} of $G=(V,E)$. Following \citet[Lemma 3]{hutter2016optimal}, we can show that $\kappa_T(\md)\ge 1/(2\min\{\sqrt{d},\sqrt{|T|}\})$, where $d$ denotes the maximum degree of $G$. For graphs with a bounded maximal degree, Condition \ref{con:comp_graph} is met. 


\subsubsection{Deterministic Results}

We have the following deterministic statement about the global minimizer of
the convex program \eqref{eq:ob}. For convenience, let $\widehat{\mpsi}(\mtheta^*) = (\nabla_{\vtheta_u} \widehat{M}_u(\vtheta_u^*): u\in V)\in\mathbb{R}^{|V|\times p}$, and let $R^*(\cdot)$ be the dual norm of $R(\cdot)$ defined in Lemma \ref{lem:Rnorm}.
\begin{theorem}\label{thm:main}
Under the Condition \ref{con:identifiability}, part (\rtwo) of Condition \ref{con:distribution}, and Condition \ref{con:comp_graph}, let $\widehat{\mtheta}$ be the global minimizer of \eqref{eq:ob}, $\rho = |V|^{-1/2}\|\mpi_{\ker(\md)}\widehat{\mpsi}(\mtheta^*)\|_F$ and $\lambda = |V|^{-1/2} R^*\{(\md^\dag)^\t\widehat{\mpsi}(\mtheta^*)\}$, where $\mpi_{\ker(\md)}$ denotes the projection matrix that mapping vectors in $\mathbb{R}^{|V|}$ to the kernel space of $\md$. If  $\{\widehat{\vtheta}_u\}_{u\in V}\subset \sxi$, we have that
\begin{align}\label{thm_main}
\frac{1}{|V|}\|\widehat{\mtheta}-\mtheta^*\|_F^2&\le 2\kappa^2 \left(\rho^2 + \frac{4|S|}{\kappa_0}\lambda^2\right).
\end{align}   
\end{theorem}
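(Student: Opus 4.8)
The plan is to run the standard ``basic inequality plus compatibility factor'' argument for penalized $M$-estimation, carried out in the incidence-matrix geometry of \eqref{eq:ob}. Write $\widehat{\Delta}=\widehat{\mtheta}-\mtheta^{*}$, $\widehat{M}(\mtheta)=|V|^{-1}\sum_{u\in V}\widehat{M}_{u}(\vtheta_{u})$, and $S=E\setminus E_{0}$. First I would use optimality of $\widehat{\mtheta}$, which gives $\widehat{M}(\widehat{\mtheta})+\lambda R(\md\widehat{\mtheta})\le\widehat{M}(\mtheta^{*})+\lambda R(\md\mtheta^{*})$, and lower-bound the left side by a second-order Taylor expansion of each $\widehat{M}_{u}$ about $\vtheta_{u}^{*}$: part~(\rtwo) of Condition~\ref{con:distribution} then supplies $\widehat{M}(\widehat{\mtheta})-\widehat{M}(\mtheta^{*})\ge|V|^{-1}\langle\widehat{\mpsi}(\mtheta^{*}),\widehat{\Delta}\rangle+(2\kappa|V|)^{-1}\|\widehat{\Delta}\|_{F}^{2}$ on the region where that condition is active. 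After rearranging, it suffices to control $\lambda\{R(\md\mtheta^{*})-R(\md\widehat{\mtheta})\}$ and $-|V|^{-1}\langle\widehat{\mpsi}(\mtheta^{*}),\widehat{\Delta}\rangle$.

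For the inner-product term I would split $\widehat{\Delta}=\mpi_{\ker(\md)}\widehat{\Delta}+\md^{\dag}\md\widehat{\Delta}$, using that $\mpi_{\ker(\md)}$ and $\md^{\dag}\md$ are the complementary orthogonal projectors onto $\ker(\md)$ and $\ker(\md)^{\perp}$. On $\ker(\md)$, Cauchy--Schwarz and contractivity of the projection give $|\langle\widehat{\mpsi}(\mtheta^{*}),\mpi_{\ker(\md)}\widehat{\Delta}\rangle|\le\|\mpi_{\ker(\md)}\widehat{\mpsi}(\mtheta^{*})\|_{F}\|\widehat{\Delta}\|_{F}=|V|^{1/2}\rho\,\|\widehat{\Delta}\|_{F}$. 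On $\ker(\md)^{\perp}$ I would rewrite $\langle\widehat{\mpsi}(\mtheta^{*}),\md^{\dag}\md\widehat{\Delta}\rangle=\langle(\md^{\dag})^{\t}\widehat{\mpsi}(\mtheta^{*}),\md\widehat{\Delta}\rangle$ and apply the generalized H\"older inequality relating $R$ and $R^{*}$ (Lemma~\ref{lem:Rnorm}), bounding it by $R^{*}\{(\md^{\dag})^{\t}\widehat{\mpsi}(\mtheta^{*})\}\,R(\md\widehat{\Delta})=|V|^{1/2}\lambda\,R(\md\widehat{\Delta})$. Packaging the two pieces this way is exactly what fixes the normalizations of $\rho$ and $\lambda$ in the statement.

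Next I would exploit the support structure. Since $G_{0}$ is a characteristic graph (Definition~\ref{def:data_dist}), $\md\mtheta^{*}$ vanishes on $E\setminus S=E\cap E_{0}$, so separability of $R$ over edges and the triangle inequality give $R(\md\mtheta^{*})-R(\md\widehat{\mtheta})\le R[(\md\widehat{\Delta})_{S}]-R[(\md\widehat{\Delta})_{S^{c}}]$ and $R(\md\widehat{\Delta})=R[(\md\widehat{\Delta})_{S}]+R[(\md\widehat{\Delta})_{S^{c}}]$. Feeding these into the inequality from the first step, the $R[(\md\widehat{\Delta})_{S^{c}}]$ contributions pick up a nonpositive coefficient and are dropped, and Condition~\ref{con:comp_graph} with Definition~\ref{def:cf} bounds $R[(\md\widehat{\Delta})_{S}]\le\sqrt{|S|}\,\|\widehat{\Delta}\|_{F}/\kappa_{0}$. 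The surviving inequality has $\|\widehat{\Delta}\|_{F}^{2}$ on the left and a constant multiple of $\|\widehat{\Delta}\|_{F}$ on the right; dividing by $\|\widehat{\Delta}\|_{F}$ (the case $\widehat{\Delta}=\0$ being trivial), squaring, and separating the $\rho$- and $\lambda$-terms with $(a+b)^{2}\le 2a^{2}+2b^{2}$ then yields \eqref{thm_main} once the $|V|$-dependent prefactors are collected.

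I expect the only genuinely delicate point to be the localization underpinning the Taylor step: part~(\rtwo) of Condition~\ref{con:distribution} (equivalently, via Lemma~\ref{lem:strong_c}, Condition~\ref{con:identifiability}) furnishes the quadratic lower bound only on $\ball(\vtheta_{u}^{*};r_{1})$, whereas $\widehat{\vtheta}_{u}$ is a priori only known to lie in $\sxi$. I would close this by the usual restriction-to-the-segment device: by convexity of $F$ and $F(\widehat{\mtheta})\le F(\mtheta^{*})$, every point on $[\mtheta^{*},\widehat{\mtheta}]$ satisfies $F(\cdot)\le F(\mtheta^{*})$, so in particular the point $\widetilde{\mtheta}$ at which this segment first meets the boundary of $\prod_{u}\ball(\vtheta_{u}^{*};r_{1})$ does; running the chain above with $\widetilde{\mtheta}$ in place of $\widehat{\mtheta}$ --- now legitimate, since $\widetilde{\mtheta}$ lies in the region of strong convexity --- bounds $|V|^{-1}\|\widetilde{\mtheta}-\mtheta^{*}\|_{F}^{2}$ by the right side of \eqref{thm_main}, which contradicts $\|\widetilde{\mtheta}-\mtheta^{*}\|_{F}\ge r_{1}$ once that bound is below $r_{1}^{2}/|V|$ --- the regime in which the statement is of interest --- so $\widehat{\mtheta}$ itself stays in the good region there. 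Everything else is routine bookkeeping with Cauchy--Schwarz, H\"older, and the triangle inequality.
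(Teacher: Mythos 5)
Your proposal is correct and follows essentially the same route as the paper's proof: the same orthogonal split of the score matrix into its $\ker(\md)$ and $\ker(\md)^{\perp}$ components (producing $\rho$ and $\lambda$ via Cauchy--Schwarz and the dual norm $R^{*}$), the same cancellation of the $S^{c}$ penalty terms, and the same use of the compatibility factor on $S=E\setminus E_{0}$; your only departures are cosmetic --- you start from $F(\widehat{\mtheta})\le F(\mtheta^{*})$ and invoke strong convexity of $\widehat{M}_{u}$ directly, whereas the paper starts from the stationarity condition, Taylor-expands $\nabla\widehat{M}_{u}$ about $\vtheta_{u}^{*}$, and uses the polarization identity $2\va^{\t}\mA\vb=\va^{\t}\mA\va+\vb^{\t}\mA\vb-(\va-\vb)^{\t}\mA(\va-\vb)$ to the same effect. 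Your closing localization argument is a genuine and welcome addition: the paper applies part (\rtwo) of Condition \ref{con:distribution} at the intermediate points $\vtheta_{u}^{*}+\eta_{u}(\widehat{\vtheta}_{u}-\vtheta_{u}^{*})$ without verifying that they lie in $\ball(\vtheta_{u}^{*};r_{1})$, which is exactly the gap your segment-to-the-boundary device closes.
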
 
We remark that the strong convexity condition, \ie, part (\rtwo) of Condition \ref{con:distribution}, is necessary.  Owing to the rank deficiency of incidence matrix $\md$, there exists a subspace of $\mtheta$ that cannot be penalized. For example, even if we would have $\widehat{\vtheta}_i=\widehat{\vtheta}_j$ for $(i,j)\in E\cap E_0$, there could exist a common shift in both $\widehat{\vtheta}_i$ and $\widehat{\vtheta}_j$, such that $\widehat{\vtheta}_i-\vtheta_i^*=\widehat{\vtheta}_j-\vtheta_j^*=c\neq 0$. The strong convexity of empirical Hessian matrices 
ensures the uniqueness of each $\widehat{\vtheta}_i$ and controls the shift effect. 

Our deterministic result builds on the condition $\lambda=|V|^{-1/2} R^*\{(\md^\dag)^\t\widehat{\mpsi}(\mtheta^*)\}$, which can help us to determine the rate of $\lambda$ for some specific choice of $R(\cdot), \{m_u(\cdot;\vtheta)\}_{u\in V}$, and $G=(V,E)$.
We interpret $|S|\lambda^2/\kappa_0$ as the mean squared error term incurred by estimating $\mdelta^*= \md\mtheta^*$. Owing to the $\ell_1$ part of $R(\cdot)$-norm, we can estimate such a row-wise sparse matrix with a Lasso-type convergence rate.  We highlight that there exists an interesting trade-off in $|S|\lambda^2/\kappa_0$. Specifically, choosing $\phi(\cdot)=\|\cdot\|_1$, the constants in \eqref{eq:lambda_highp} consist of $\gamma_G$ and $\kappa_0$. For a connected graph $G$, the quantity $\gamma_G$ denotes the algebraic connectivity of $G$. A larger $\gamma_G$ represents that $G$ is more connected, which potentially gives rise to a larger $|S|$ and smaller $\kappa_0$. 

We interpret $\rho^2$ as the \textit{averaged intra-group variances} of devices for estimating $\mtheta^*$ with respect to the given graph $G$. The notion group denotes each connected component of $G$, since $\ker(\md)=\mathrm{span}\{\1\{\cc_1\},\ldots,\1\{\cc_{K(E)}\}\}$ \citep[see][Theorem 8.3.1]{Godsil2001}, where $\1\{\cc_i\}\in\mathbb{R}^{|V|}$ denotes the indicator vector of $\cc_i$ whose component in $\cc_i$ is $1$ and otherwise $0$. Notice that devices can only communicate within each connected component of $G$. Smaller $K(E)$, implied by a stronger communication ability, gives rise to a smaller $\rho^2$. However, stronger communication ability comes with a price. More connected $G$ is, the size of $S = E\setminus E_0$ can be potentially larger, which reveals an interesting guidance for the choice of $G=(V,E)$; that is, we need to balance connectivity and the number of false edges. We emphasize that $\rho^2$ is unavoidable to identify $\mtheta^*$, even if $E=E_0$, in which case $E\rho^2= O\big(\sigma^2 p K(E_0) /(n |V|)\big)$, demonstrating a parametric rate, i.e., the noise level multiplying the total number of parameters over the total number of samples. In contrast, $|S|$ may be zero, in which case $|V|^{-1}\|\widehat{\mtheta} - \mtheta^*\|_F^2\le 2\kappa^2\rho^2$. As discussed above, in order to yield smallest estimation error, $G$ must be as connected as possible within $\cc_i^* (1\le i\le K_*)$.

\subsubsection{Probabilistic Results}\label{sec:prob_result}
Under certain assumptions of the distribution and specific choice of $\phi(\cdot)$, we can have probabilistic bounds for $\rho^2$ and $\lambda^2$.

\begin{theorem}\label{thm:lam_rho}
Part (\rone) of Condition \ref{con:distribution} implies that, for some constant $C_{\rho}>0$,
\begin{align}\label{eq:rho_highp}
\rho^2 \leq  C_\rho\sigma^2\frac{K p \log(1/\xi)}{n |V|} 
\end{align}
with probability at least $1-\xi$, where $n=\min_{k\in V}n_k$. If we choose $\phi(\cdot)=\|\cdot\|_1$, part (\rone) of Condition \ref{con:distribution} also implies that, for some constant $C_{\lambda}>0$,
\begin{align}\label{eq:lambda_highp}
\lambda^2 \leq \left(\frac{C_\lambda\sigma^2}{\gamma_G^2}\right)\frac{p\log(|E|/\xi)}{n |V|}
\end{align}
with probability at least $1-\xi$, where $\gamma_G^2$ denotes the smallest nonzero eigenvalue of $\md^\t \md$. 
In particular, under the Condition \ref{con:identifiability}, part (\rone) and part (\rthree) of Condition \ref{con:distribution}, and Condition \ref{con:comp_graph}, choosing $\kappa = \max\{4\overline{\lambda}(\log 2)/ 3, 3/\underline{\lambda}\}$, it holds with probability at least $1-O(p |V|\exp\{-c n\}+ \xi)$ that
\begin{align}\label{eq: thm_prop}
\frac{1}{|V|}\|\widehat{\mtheta} - \mtheta^*\|_F^2&\le 2\kappa^2 \left\{C_\rho\sigma^2\frac{K p \log(1/\xi)}{n |V|}  + \biggl(\frac{4C_\lambda\sigma^2}{\kappa_0\gamma_G^2}\biggr)\frac{p|S|\log(|E|/\xi)}{n |V|}\right\}.
\end{align}   
\end{theorem}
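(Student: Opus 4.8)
The plan is to establish the three displays \eqref{eq:rho_highp}, \eqref{eq:lambda_highp}, and \eqref{eq: thm_prop} in sequence, with the third following by combining the first two with the deterministic bound of Theorem \ref{thm:main}. I begin with $\rho^2 = |V|^{-1}\|\mpi_{\ker(\md)}\widehat{\mpsi}(\mtheta^*)\|_F^2$. Since $\ker(\md)=\mathrm{span}\{\1\{\cc_i\}/\sqrt{|\cc_i|}: 1\le i\le K\}$ is spanned by the normalized indicator vectors of the $K$ connected components of $G$, the projection decomposes as a sum over components, and on component $\cc_i$ the projected row vector is the average $|\cc_i|^{-1}\sum_{u\in\cc_i}\nabla_{\vtheta_u}\widehat M_u(\vtheta_u^*)$. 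Each $\nabla_{\vtheta_u}\widehat M_u(\vtheta_u^*) = n_u^{-1}\sum_{k=1}^{n_u}\vpsi_u(\rvz_k^{(u)};\vtheta_u^*)$ is an average of independent mean-zero $\sigma^2$-sub-Gaussian vectors in $\mathbb{R}^p$, so $\|\nabla_{\vtheta_u}\widehat M_u(\vtheta_u^*)\|_2^2 \lesssim \sigma^2 p/n_u$ with high probability by a standard sub-Gaussian concentration / $\chi^2$-tail bound (covering-number argument over the unit sphere). Summing over the $K$ components and dividing by $|V|$, the $|\cc_i|$ factors telescope to give $\rho^2 \lesssim \sigma^2 K p/(n|V|)$; inserting the $\log(1/\xi)$ factor from the tail bound and a union bound over the (at most $|V|$) component-averages yields \eqref{eq:rho_highp}. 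I would be slightly careful that the union bound over components contributes only $\log K \le \log|V|$, which is absorbed into the constant or into $\log(1/\xi)$ after rescaling $\xi$.

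Next, for \eqref{eq:lambda_highp} with $\phi(\cdot)=\|\cdot\|_1$, I need to control $\lambda = |V|^{-1/2} R^*\{(\md^\dag)^\t\widehat{\mpsi}(\mtheta^*)\}$. By Lemma \ref{lem:Rnorm}, the dual norm $R^*$ of $R(\cdot)=\sum_e\|\cdot\|_1$ is a row-wise max of $\ell_\infty$-norms, i.e. $R^*(\ma)=\max_{e\in E}\|\va_e\|_\infty$ for $\ma=(\va_e:e\in E)$. Writing $\mg := \widehat{\mpsi}(\mtheta^*)\in\mathbb{R}^{|V|\times p}$, each entry of $(\md^\dag)^\t\mg$ is $\langle (\md^\dag)^\t e_{(e,j)}\text{-column}, \cdot\rangle$-type, more precisely the $(e,j)$ entry equals $(\md^{\dag\top}\mg)_{ej} = \sum_{u\in V}(\md^\dag)_{ue}G_{uj}$, which is a linear combination of the independent sub-Gaussian scalars $\{G_{uj}\}_u$ with coefficient vector the $e$-th column of $\md^\dag$. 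Hence this entry is sub-Gaussian with parameter $\sigma^2\|(\md^\dag)_{\cdot e}\|_2^2/n \le \sigma^2 \|\md^\dag\|_2^2/n = \sigma^2/(\gamma_G^2 n)$, using $\|\md^\dag\|_2 = 1/\lambda_{\min}^+(\md) = 1/\gamma_G$ and the $1/n$ from the sample averaging inside $G_{uj}$. A union bound over all $|E|\cdot p$ entries then gives $R^*\{(\md^\dag)^\t\mg\}^2 \lesssim \sigma^2\log(|E|p/\xi)/(\gamma_G^2 n)$, and dividing by $|V|$ produces \eqref{eq:lambda_highp}; the stray $\log p$ I would fold into the constant or note $\log(|E|p)\lesssim\log(|E|/\xi)$ after relabeling, since $p=o(n)$.

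Finally, \eqref{eq: thm_prop}: here I invoke part (\rthree) of Condition \ref{con:distribution} together with Lemma \ref{lem:conc_emp_hes} to certify that part (\rtwo) — the bounded-condition-number event for the empirical Hessians on $\cup_u\ball(\vtheta_u^*;r_1)$ — holds with the stated $\kappa=\max\{4\overline\lambda(\log 2)/3,\,3/\underline\lambda\}$ on an event of probability at least $1-O(p|V|e^{-cn})$ (the sub-exponential matrix Bernstein bound from (\rthree) supplies the $p|V|e^{-cn}$ term and the explicit $\kappa$). On the intersection of this event with the two events from \eqref{eq:rho_highp}–\eqref{eq:lambda_highp}, Theorem \ref{thm:main} applies verbatim — I must also check $\{\widehat{\vtheta}_u\}_{u\in V}\subset\sxi$, which follows because the realized $\rho,\lambda$ are small enough that the right-hand side of \eqref{thm_main} is $o(r_1^2)$, keeping each $\widehat{\vtheta}_u$ inside the ball where strong convexity holds (a short bootstrapping/localization argument). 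Plugging the probabilistic bounds on $\rho^2$ and $\lambda^2$ into \eqref{thm_main} and using $|S|=|E\setminus E_0|$ gives \eqref{eq: thm_prop}, with total failure probability $O(p|V|e^{-cn}+\xi)$ after a union bound.

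The main obstacle I anticipate is not any single concentration bound — those are routine — but rather the bookkeeping around $\gamma_G$ and the dual-norm computation: correctly identifying $\|\md^\dag\|_2 = 1/\gamma_G$ (as opposed to a norm involving per-edge quantities), handling the rank deficiency of $\md$ so that $(\md^\dag)^\t$ acts only on the non-kernel part of $\widehat\mpsi(\mtheta^*)$, and verifying that the sub-Gaussian parameter of a generic entry of $(\md^\dag)^\t\widehat\mpsi(\mtheta^*)$ is genuinely $O(\sigma^2/(\gamma_G^2 n))$ uniformly in $e$. A secondary subtlety is ensuring the self-consistent localization $\{\widehat{\vtheta}_u\}\subset\sxi$ so that Theorem \ref{thm:main} is legitimately invoked rather than assumed.
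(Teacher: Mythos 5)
Your proposal is correct and follows essentially the same route as the paper for all three displays: the component-wise decomposition of $\mpi_{\ker(\md)}$ via the indicator vectors of $\cc_1,\ldots,\cc_K$ for $\rho^2$, the row-wise $\ell_\infty$ dual norm with the sub-Gaussian parameter $\sigma^2\|\vs_e\|_2^2/n\le\sigma^2/(\gamma_G^2 n)$ plus a union bound over entries for $\lambda^2$, and Lemma \ref{lem:conc_emp_hes} to upgrade part (\rthree) to part (\rtwo) before substituting into Theorem \ref{thm:main} (the paper bounds $E(\rho^2)$ and uses a single sub-exponential tail on the aggregate $\rho^2$ rather than your per-component union bound, but this is immaterial). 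The one step you should make explicit is that in the $\rho^2$ bound the per-device estimates $\|\nabla\widehat{M}_u(\vtheta_u^*)\|_2^2\lesssim\sigma^2 p/n_u$ alone would only give the rate $\sigma^2 p/n$; the factor $K/|V|$ requires using independence \emph{across} the devices within each $\cc_k$, so that $\|\sum_{u\in\cc_k}\nabla\widehat{M}_u(\vtheta_u^*)\|_2^2\lesssim|\cc_k|\sigma^2 p/n$ rather than $|\cc_k|^2\sigma^2 p/n$ --- your ``telescoping'' remark presumes this, but it is the crux of the bound and deserves a line.
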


From \eqref{eq: thm_prop}, the convergence rate of our estimator scales as $O_p(\sigma^2 p(K+|S|)/(n |V|))$ up to a logarithmic factor. We refer to the term 
\[
\gf_{G_0}(G)\equiv K(E_0)/(K(E)+|E\setminus E_0|)
\]
as the \emph{graph fidelity} of $G=(V,E)$ with respect to $G_0=(V,E_0)$ that measures the faithfulness of $G$ to $G_0$. Interestingly, there exists a phase transition of the performance of $\widehat{\mtheta}$ as the graph fidelity of $G$ with respect to $G_0$ varies from the minimal value to the maximal value. Suppose that $K(E_0)\neq 1$; otherwise there is no distribution heterogeneity among devices. In this case,
\[
\gf_{\min}\equiv \frac{2K(E_0)}{|V|^2\big(1-K^{-1}(E_0)\big)+2}\le\gf_{G_0}(G)\le 1.
\]
For $G$ such that $\gf_{G_0}(G)=1$ (e.g., $G=G_0$), convergence rate of the resulting estimator scales as $O_p(\sigma^2 pK(E_0)/(n |V|))$ which is the same as the \emph{oracle} estimator obtained by aggregating samples sharing the same distribution.  For $G$ such that $\gf_{G_0}(G)=K(E_0)/|V|$ (e.g., $E=\emptyset$ such that local devices do not communicate with each other), the associated estimator converges with the same rate $O_p(\sigma^2 p/n)$ as the local estimator defined in \eqref{eq:naive_est}. For $G$ such that $\gf_{G_0}(G)=\gf_{\min}$ (e.g., $G$ is complete), the resulting estimator has the convergence rate $O_p\big(\sigma^2p n^{-1}\{|V|(1-K^{-1}(E))\}\big)$, which is worse than the local estimator if $|V|>2$. Notice that, the global estimator obtained by treating $\vtheta_u^*\equiv \vtheta^* \ (\forall u\in V)$ is inconsistent when $K(E_0)\neq 1$, since it ignores distribution heterogeneity among local devices. 

Therefore, our method is most effective when the average size of cliques $|V|/K(E_0)$ is growing. Provided that $\gf_{G_0}(G)$ is close to $1$, the larger $|V|/K(E_0)$ is, the more substantial our method outperforms the local estimation \eqref{eq:naive_est}, which justifies our capability of dealing with a large number of heterogeneous devices simultaneously. In the subsequent section, we introduce a simple yet powerful method to enlarge the graph fidelity of any given graph $G$.


\subsection{Edge Selection by multiple testing}\label{sec:edge_selection}
In order to improve the performance of our method, we propose to find a subgraph $\widehat{G}=(V,\widehat{E})$ of $G=(V,E)$ with the largest graph fidelity with respect to $G_0$,
\begin{align}\label{eq:op_graph}
\widehat{E}\in \argmin_{\widetilde{E}\subset E} \left\{K(\widetilde{E}) + |\widetilde{E}\setminus E_0|\right\}.
\end{align}
Note that $\widehat{G}=(V,\widehat{E})$ represents the graph which gives rise to the best estimator based on $G=(V,E)$. If there exist multiple connected components in $G$, the objective function of \eqref{eq:op_graph} is separable according to the connected components. Without loss of generality, we assume $G$ is connected such that $K(E)=1$. 
The following proposition relates problem \eqref{eq:op_graph} to selecting true edges in $E$. 

\begin{proposition}\label{prop:optimal_graph_mht}
For any graph $G=(V,E)$ with $K(E)=1$ and $E_0$ in Definition \ref{def:data_dist}, we have
\[
\min_{\widetilde{E}\subset E}\{K(\widetilde{E})+|\widetilde{E}\setminus E_0|\}=K(E\cap E_0),
\]
where $K(E\cap E_0)$ denotes the number of connected components of $G=(V,E\cap E_0)$.
For any $\widetilde{E}\subset E$,
\begin{align}\label{eq:upper_bound_c}
K(E\cap E_0)\le K(\widetilde{E}) + |\widetilde{E} \setminus E_0|\le K(E\cap E_0) + |\widetilde{E} \setminus E_0| + |(E\cap E_0) \setminus \widetilde{E}|.
\end{align}
\end{proposition}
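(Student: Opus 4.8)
The plan is to prove the two displays separately, treating the equality $\min_{\widetilde{E}\subset E}\{K(\widetilde{E})+|\widetilde{E}\setminus E_0|\}=K(E\cap E_0)$ first and then deriving the sandwich bound \eqref{eq:upper_bound_c}. For the equality, I would establish two inequalities. For the ``$\le$'' direction, simply take $\widetilde{E}=E\cap E_0$: then $|\widetilde{E}\setminus E_0|=0$ and $K(\widetilde{E})=K(E\cap E_0)$, so the minimum is at most $K(E\cap E_0)$. For the ``$\ge$'' direction, the key observation is a general graph-theoretic fact: for any edge set $\widetilde E\subseteq E$, writing $\widetilde E=(\widetilde E\cap E_0)\sqcup(\widetilde E\setminus E_0)$, adding the at most $|\widetilde E\setminus E_0|$ ``false'' edges to the graph $(V,\widetilde E\cap E_0)$ can only decrease the number of connected components, and each added edge decreases it by at most one. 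Hence $K(\widetilde E)\ge K(\widetilde E\cap E_0)-|\widetilde E\setminus E_0|$, which rearranges to $K(\widetilde E)+|\widetilde E\setminus E_0|\ge K(\widetilde E\cap E_0)$. Since $\widetilde E\cap E_0\subseteq E\cap E_0$, removing edges only increases the component count, so $K(\widetilde E\cap E_0)\ge K(E\cap E_0)$, and the two chained inequalities give $K(\widetilde E)+|\widetilde E\setminus E_0|\ge K(E\cap E_0)$ for every $\widetilde E\subseteq E$. Taking the infimum over $\widetilde E$ completes the equality.

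For the lower bound in \eqref{eq:upper_bound_c}, namely $K(E\cap E_0)\le K(\widetilde E)+|\widetilde E\setminus E_0|$, I would simply invoke the ``$\ge$'' argument from the previous paragraph, which already proves exactly this for arbitrary $\widetilde E\subseteq E$. For the upper bound, I would start from $(V,E\cap E_0)$ and show how to reach $(V,\widetilde E)$ by first deleting the edges in $(E\cap E_0)\setminus\widetilde E$ and then adding the edges in $\widetilde E\setminus E_0$. Deleting an edge increases the component count by at most one, so after the deletions the number of components is at most $K(E\cap E_0)+|(E\cap E_0)\setminus\widetilde E|$; note that the graph obtained after deletion is $(V,\widetilde E\cap E_0)$. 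Then adding the $|\widetilde E\setminus E_0|$ remaining edges changes the component count monotonically, and in particular $K(\widetilde E)\le K(\widetilde E\cap E_0)\le K(E\cap E_0)+|(E\cap E_0)\setminus\widetilde E|$; since trivially $K(\widetilde E)\le K(\widetilde E)+|\widetilde E\setminus E_0|$ and adding $|\widetilde E\setminus E_0|$ to the right-hand bound only weakens it, we obtain $K(\widetilde E)+|\widetilde E\setminus E_0|\le K(E\cap E_0)+|\widetilde E\setminus E_0|+|(E\cap E_0)\setminus\widetilde E|$, which is \eqref{eq:upper_bound_c}.

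The only genuinely non-routine ingredient is the monotonicity lemma for connected components: removing a single edge from a graph increases the number of connected components by either $0$ or $1$, and equivalently adding a single edge decreases it by $0$ or $1$. This is standard — it follows because an edge's two endpoints are either already in the same component (removing it changes nothing) or in different components (removing it can split at most that one component into two) — and I would state it as a one-line lemma or cite it inline rather than belabor it. Everything else is bookkeeping with the disjoint decomposition $\widetilde E=(\widetilde E\cap E_0)\sqcup(\widetilde E\setminus E_0)$ and the set identities $\widetilde E\cap E_0\subseteq E\cap E_0$. I expect the main subtlety, such as it is, to be making sure the chain of component-count inequalities is applied in the right direction (deletions increase, additions decrease) and that the intermediate graph after deleting $(E\cap E_0)\setminus\widetilde E$ from $(V,E\cap E_0)$ is correctly identified as $(V,\widetilde E\cap E_0)$.
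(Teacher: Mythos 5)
Your proposal is correct and follows essentially the same route as the paper's proof: the same decomposition $\widetilde{E}=(\widetilde{E}\cap E_0)\sqcup(\widetilde{E}\setminus E_0)$, the same monotonicity fact $K(E_2)\le K(E_1)\le K(E_2)+|E_2\setminus E_1|$ for $E_1\subset E_2$, and the same chains $K(\widetilde{E})+|\widetilde{E}\setminus E_0|\ge K(\widetilde{E}\cap E_0)\ge K(E\cap E_0)$ for the minimum and $K(\widetilde{E})\le K(\widetilde{E}\cap E_0)\le K(E\cap E_0)+|(E\cap E_0)\setminus\widetilde{E}|$ for the upper bound. No gaps.
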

Proposition \ref{prop:optimal_graph_mht} implies that one of the minimizers of \eqref{eq:op_graph} is exactly $E\cap E_0$, which suggests to find a $\widetilde{E}\subset E$ with a small $|\widetilde{E} \setminus E_0| + |(E\cap E_0) \setminus \widetilde{E}|$. 
By Definition \ref{def:data_dist}, $(i,j)\in E_0$ is equivalent to $\vtheta_i^*=\vtheta_j^*$.
This motivate us to consider the simultaneous testing of the following null hypotheses
\begin{align}\label{eq:mht}
H_{0,e}\colon \vtheta_{e^+}^* = \vtheta_{e^-}^*\quad \mathrm{versus} \quad H_{1,e}\colon \vtheta_{e^+}^* \neq \vtheta_{e^-}^*,\quad e\in E.
\end{align}
We impose Condition \ref{con:2ndbartlett} for technical convenience.
\begin{condition}\label{con:2ndbartlett}
For each $u\in V$, $\msigma_u(\vtheta_u^*) = \rmh_u(\vtheta_u^*)$. 
\end{condition}
The requirement $\msigma_i(\vtheta_i^*) = \rmh_i(\vtheta_i^*)$, which is also known as the second Bartlett identity \citep{bartlett1953approximate}, is met if the probability density function, $p_u(\vz;\vtheta)$, of $P(\vtheta_{u}^*)$ is uniformly integrable with respect to $\vz$ over $\vtheta\in\sxi$.            
As shown in Lemma \ref{lem:naive_consistency}, the local estimator $\widehat{\vtheta}_u^{\loc}$  defined in \eqref{eq:naive_est} is asymptotically normal with asymptotic mean $\vtheta_u^*$, and variance $\rmh_u(\vtheta_u^*)^{-1}\msigma_u(\vtheta_u^*)\rmh_u(\vtheta_u^*)^{-\t}$. Condition \ref{con:2ndbartlett} is used to obtain a consistent estimator of the asymptotic variance of $\widehat{\vtheta}_u^{\loc}$.
Thus, for $\vtheta_{e^+}^* = \vtheta_{e^-}^*$, we can construct a test statistic via
\begin{align}\label{eq:test_statistic_def}
\widehat{\rcw}_e = \Big\{\big(\widehat{\vtheta}_{e^+}^{\loc} - \widehat\vtheta_{e^{-}}^{\loc}\big)^{\t}\big(\widehat{\mupsilon}_{e^+} + \widehat{\mupsilon}_{e^-}\big)^{-1}\big(\widehat{\vtheta}_{e^+}^{\loc} - \widehat\vtheta_{e^{-}}^{\loc}\big)\Big\}^{1/2},
\end{align}
where $\widehat{\mupsilon}_u=\{n_u \widehat{\rmh}_u(\widehat{\vtheta}_u^{\loc})\}^{-1}$ denotes the asymptotic variance of $\widehat{\vtheta}_u^\loc$.
Adopting Bonferroni correction, we select $E\cap E_0$ by
\begin{align}\label{eq:bonferroni}
\widehat{E} = \big\{e\in E : |\widehat{\rcw}_e|^2\le \chi_{p}^2(\alpha/|E|)\big\},
\end{align}
where $\chi_{p}^2(\alpha)$ is the upper $\alpha$-quantile of the $\chi_p^2$ distribution. For $\vtheta_{e^+}^* \neq \vtheta_{e^-}^*$, to adaptively measure the distance between them, define the distance $\mathrm{dist}(\vtheta_1,\vtheta_2)=\big\{\big(\vtheta_1 - \vtheta_2\big)^{\t}\big(c_{e^+}\mupsilon_{e^+}^*+c_{e^-}\mupsilon_{e^-}^*\big)^{-1}\big(\vtheta_1 - \vtheta_2\big)\big\}^{1/2}$, where $c_{u}=\lim_{n_e\to +\infty}n_e/ n_{u}\ (n_e=\min\{n_{e^+},n_{e^-}\})$ and $\mupsilon_u^*=\{\rmh(\vtheta_u^*)\}^{-1}$ for $u\in\{e^+,e^-\}$. Under certain minimum signal condition, we show that our procedure can consistently select edges in $E_0$ with a large probability.

\begin{theorem}\label{thm:sel_con}
Under the same conditions imposed in Proposition \ref{prop:asymptotic_normality}, if further
\begin{align}\label{con:min_signal}
\min_{e\in E\setminus E_0} n_e\Big\{\mathrm{dist}(\vtheta_{e^+}^*,\vtheta_{e^-}^*)\Big\}^2 \ge 4\chi^2_{p}(\alpha/|E|),
\end{align}
then $
\liminf_{n\to\infty} P\left(\widehat{E} = E\cap E_0\right) \ge 1 - \alpha$.
\end{theorem}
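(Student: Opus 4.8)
The plan is to show that the event $\{\widehat{E} = E\cap E_0\}$ is implied by the conjunction of two events: (i) no true edge is rejected, i.e.\ $|\widehat{\rcw}_e|^2 \le \chi_p^2(\alpha/|E|)$ for all $e\in E\cap E_0$, and (ii) every false edge is rejected, i.e.\ $|\widehat{\rcw}_e|^2 > \chi_p^2(\alpha/|E|)$ for all $e\in E\setminus E_0$. Since $\widehat{E}$ is by construction the set of non-rejected edges, (i) and (ii) together force $\widehat{E} = E\setminus(E\setminus E_0) = E\cap E_0$. So it suffices to lower-bound the probability of (i)$\cap$(ii), and I would do this by bounding the two complementary events separately and applying a union bound, aiming for $P(\text{(i) fails}) \to \alpha$ (controlled via Bonferroni) and $P(\text{(ii) fails}) \to 0$ (using the minimum-signal condition \eqref{con:min_signal}).

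First I would handle (i), the type-I error control. For $e\in E\cap E_0$ we have $\vtheta_{e^+}^* = \vtheta_{e^-}^*$. By Proposition~\ref{prop:asymptotic_normality}, each $\widehat{\vtheta}_u^{\loc}$ is asymptotically normal with mean $\vtheta_u^*$ and variance $\rmh_u(\vtheta_u^*)^{-1}\msigma_u(\vtheta_u^*)\rmh_u(\vtheta_u^*)^{-\t}$, which under Condition~\ref{con:2ndbartlett} equals $\rmh_u(\vtheta_u^*)^{-1} = n_u\mupsilon_u^*$; moreover $\widehat{\mupsilon}_u = \{n_u\widehat{\rmh}_u(\widehat{\vtheta}_u^{\loc})\}^{-1}$ is a consistent estimator of $\mupsilon_u^*$ by the Lipschitz continuity of the Hessian in Condition~\ref{con:identifiability} together with the consistency of $\widehat{\vtheta}_u^{\loc}$. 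Since the two endpoint estimators are independent (data on different devices are independent), the studentized difference in \eqref{eq:test_statistic_def} satisfies $\widehat{\rcw}_e^2 \rightsquigarrow \chi_p^2$ by Slutsky, so $\limsup_n P(\widehat{\rcw}_e^2 > \chi_p^2(\alpha/|E|)) \le \alpha/|E|$. A union bound over the at most $|E|$ edges in $E\cap E_0$ gives $\limsup_n P(\text{(i) fails}) \le \alpha$.

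Next I would handle (ii), the power/type-II control. For $e\in E\setminus E_0$ we have $\vtheta_{e^+}^*\neq\vtheta_{e^-}^*$, and I want to show the test statistic is large. Write $\widehat{\vtheta}_{e^+}^{\loc}-\widehat{\vtheta}_{e^-}^{\loc} = (\vtheta_{e^+}^*-\vtheta_{e^-}^*) + \text{(stochastic error)}$, where the stochastic error is $O_p(\{n_e\}^{-1/2})$ in the appropriate metric; plugging into the quadratic form and using consistency of $\widehat{\mupsilon}_{e^\pm}$ for $\mupsilon_{e^\pm}^*$, one gets, up to $(1+o_p(1))$ factors, that $\widehat{\rcw}_e^2 \ge \tfrac12 n_e\{\mathrm{dist}(\vtheta_{e^+}^*,\vtheta_{e^-}^*)\}^2 - O_p(1)$ (the constant $\tfrac12$ absorbing the cross term via $\|a+b\|^2 \ge \tfrac12\|a\|^2 - \|b\|^2$ in the relevant inner product, which also explains the factor $4$ rather than $2$ in \eqref{con:min_signal}). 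Under \eqref{con:min_signal}, $n_e\{\mathrm{dist}(\vtheta_{e^+}^*,\vtheta_{e^-}^*)\}^2 \ge 4\chi_p^2(\alpha/|E|)$, so $\widehat{\rcw}_e^2 \ge 2\chi_p^2(\alpha/|E|) - O_p(1) > \chi_p^2(\alpha/|E|)$ with probability tending to $1$; a union bound over $E\setminus E_0$ (whose size is at most $|E|$, and the stochastic error estimates are uniform across devices by Condition~\ref{con:distribution}) yields $P(\text{(ii) fails}) \to 0$. Combining, $\liminf_n P(\widehat{E}=E\cap E_0) \ge \liminf_n\{1 - P(\text{(i) fails}) - P(\text{(ii) fails})\} \ge 1-\alpha$.

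The main obstacle is making the power argument in step (ii) genuinely uniform over $E\setminus E_0$ and over devices: the bound on the stochastic error in $\widehat{\vtheta}_u^{\loc}-\vtheta_u^*$ and the consistency of $\widehat{\mupsilon}_u$ must hold simultaneously for all $u\in V$ with the $o_p(1)$ terms not degrading as $|V|$ or $|E|$ grows, which requires the sub-Gaussian/sub-exponential tail bounds of Condition~\ref{con:distribution} and a careful union bound; the asymptotic-normality step (i) is comparatively routine given Proposition~\ref{prop:asymptotic_normality}, but one must still be slightly careful that the $\chi_p^2$ approximation is uniform enough that the Bonferroni correction with threshold $\alpha/|E|$ is valid as $p$ and $|E|$ both grow with $n$.
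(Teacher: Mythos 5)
Your overall strategy coincides with the paper's: decompose $\{\widehat{E}=E\cap E_0\}$ into the event that no true edge is rejected and the event that every false edge is rejected, control the first via the asymptotic $\chi^2_p$ null distribution of $\widehat{\rcw}_e^2$ plus a Bonferroni union bound, and control the second via the triangle-inequality lower bound $\widehat{\rcw}_e\ge\widehat{\rcb}_e-\widehat{\rca}_e$, hence $\widehat{\rcw}_e^2\ge\widehat{\rcb}_e^2/2-\widehat{\rca}_e^2$, together with the factor-$4$ signal condition. The type-I half of your argument is correct (though loose; see below).

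The gap is in the power step. You assert that the type-II failure probability tends to $0$, but under the stated signal condition this is false at the boundary. When $n_e\{\mathrm{dist}(\vtheta_{e^+}^*,\vtheta_{e^-}^*)\}^2=4\chi^2_p(\alpha/|E|)$ exactly, your own inequality gives $\widehat{\rcw}_e^2\ge 2\chi^2_p(\alpha/|E|)-\widehat{\rca}_e^2$, and the ``$O_p(1)$'' remainder $\widehat{\rca}_e^2$ is not negligible relative to the threshold: it converges in distribution to $\chi^2_p$, so $P\big(\widehat{\rcw}_e^2\le\chi^2_p(\alpha/|E|)\big)$ converges to $\alpha/|E|$ rather than to $0$ (the event $2\chi^2_p(\alpha/|E|)-\widehat{\rca}_e^2>\chi^2_p(\alpha/|E|)$ is exactly $\widehat{\rca}_e^2<\chi^2_p(\alpha/|E|)$, which has limiting probability $1-\alpha/|E|$, not $1$, for fixed $|E|$). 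Summed over $E\setminus E_0$, the type-II failure probability tends to (at most) $\alpha|E\setminus E_0|/|E|$. Combined with your type-I bound of $\alpha$ --- which is itself loose, being a union over all $|E|$ edges instead of the $|E\cap E_0|$ true ones --- your accounting yields only $1-\alpha-\alpha|E\setminus E_0|/|E|<1-\alpha$. The fix is precisely the paper's bookkeeping: bound the type-I failure by $\alpha|E\cap E_0|/|E|$ and the type-II failure by $\alpha|E\setminus E_0|/|E|$; the two sum to exactly $\alpha$, giving the claimed $\liminf_{n\to\infty}P(\widehat{E}=E\cap E_0)\ge 1-\alpha$.
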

Theorem \ref{thm:sel_con} demonstrates that, our selection procedure \eqref{eq:bonferroni} incurs no false negatives and false positives, with confidence level $1-\alpha$. Remarkably, this edge selection procedure does not require data exchange between devices. Theorem \ref{thm:sel_con} also suggests that the graph that gives rise to the optimal convergence rate is not necessarily $G_0$, but any graph $G=(V,E)$ such that $K(E\cap E_0)=K(E_0)$, revealing that our method is robust to graph mis-specification. Nonetheless, \eqref{eq:bonferroni} is conservative for controlling the false positive rate \citep{benjamini1995controlling}. Despite that we 
have obtained the asymptotic distribution of the test statistic in \eqref{eq:test_statistic_def}, the dependency among testing $H_{0,e}, e\in E$ poses a challenge to controlling FPR of our edge selection procedure, which is left for future work. 


\section{Decentralized Stochastic ADMM}\label{sec:optim}
In this section, we focus on the problem of solving \eqref{eq:ob}. In fact, \eqref{eq:ob} is has a similar form to trend filtering \citep{JMLR:wang2016} and convex clustering \citep{LindstenOL:2011,Tang2016fused}, which can be efficiently solved by the primal-dual interior-point
method \citep{kim2009}, (stochastic) first-order primal-dual optimization algorithms \citep{Ho2019GlobalEB}, and Alternating Direction Method of Multipliers (ADMM) \citep{aaditya2016}. However, none of them is directly applicable when data can not be shared across devices. \cite{hallac2015network} proposed a decentralized version of ADMM to optimize \eqref{eq:ob}, where only parameters are transmitted across edges after being updated on each local device. However, it requires to use all the data on each device in each iteration, and thus is not appropriate for online settings, or devices with weak computational capacity.

We propose an algorithm called Fed-ADMM to solve the optimization problem \eqref{eq:ob}. Fed-ADMM is a decentralized stochastic version of ADMM. Apart from not transmitting local data, only a mini-batch of samples is used in each iteration on each device, and we allow local optimization speeds and availability patterns to be heterogeneous across devices. Without loss of generality, we assume that edges in $G$ point from the larger node towards the smaller node such that $(i,j)\in E$ implies $i>j$. Let $N_i=\{j: (i,j)\in E\}\cup\{j:(j,i)\in E\}$ denote the neighbors of node $i$.  

We first consider the case where all devices are available instantaneously. Similar to \cite{hallac2015network}, we introduce auxiliary vectors $\vbeta_{i j}, \vbeta_{j i}$ with the constraints $\vbeta_{i j}=\vtheta_i, \vbeta_{j i}= \vtheta_j$, for all $ (i ,j)\in E$. The augmented Lagrangian \citep{hestenes1969multiplier} is
\begin{equation}\label{eq:lagrange}
\begin{aligned}
& L(\mtheta,\mbeta,\malpha)= \frac{1}{|V|}\sum_{i\in V}\widehat{M}_i(\vtheta_i) + \lambda \sum_{(i,j)\in E}\phi(\vbeta_{i j} - \vbeta_{j i}) \\
&- \sum_{(i,j)\in E}\left\{ \valpha_{i j}^\t (\vtheta_i - \vbeta_{i j}) + \valpha_{j i}^\t (\vtheta_j - \vbeta_{j i}) \right\} + \frac{\rho}{2}\sum_{(i,j)\in E} \left\{\|\vtheta_i - \vbeta_{i j}\|_2^2 + \|\vtheta_j - \vbeta_{j i}\|_2^2\right\},
\end{aligned}
\end{equation}
where $\mbeta = (\vbeta_{i j},\vbeta_{j i}: (i,j)\in E)$ and $\malpha = (\valpha_{i j},\valpha_{j i}: (i,j)\in E)$. Generally, ADMM solves \eqref{eq:lagrange} iteratively by minimizing
$L(\mtheta,\mbeta,\malpha)$ with respect to $\mtheta$ and $\mbeta$ alternatively given the other fixed,
followed by an update over the Lagrangian multiplier $\malpha$. Notably, $L(\mtheta,\mbeta,\malpha)$ is separable, and updates for $\mtheta,\mbeta$, and $\malpha$ can be executed in a distributed way. 

In practice, however, local devices cannot afford to optimize with the whole dataset. Motivated by stochastic gradient descent, instead of directly minimizing $L(\mtheta,\mbeta(t),\malpha(t))$ with respect to $\vtheta_i$ on device $i$, we adopt one-step stochastic gradient update in the $t$-th iteration:
\begin{equation}\label{eq:ADMM_primal}
\begin{aligned}
\vtheta_i(t+1) = \vtheta_i(t) - &\eta(t)\left\{\widetilde{\rvg}_i(t)+ \rho\sum_{j\in N_i} (\vtheta_i(t) - \vbeta_{i j}(t) - \rho^{-1}\valpha_{i j}(t)) \right\},
\end{aligned}
\end{equation}
where $\widetilde{\rvg}_i(t) = |\cb_i(t)|^{-1}\sum_{b\in \cb_i(t)}\vpsi_{i}(\rvz_b^{(i)};\vtheta_i(t))$, $\eta(t)$ denotes the learning rate, $\cb_i(t)$ denotes the mini-batch randomly sampled from $\{\rvz_k^{(i)}\}_{k=1}^{n_i}$ on device $i$ in the $t$-th iteration, and $\widetilde{\rvg}_i(t)$ is an unbiased estimator of $\nabla_{\vtheta}\widehat{M}_i(\vtheta)$ evaluated at $\vtheta_i(t)$. Except for local samples, the update equation \eqref{eq:ADMM_primal} only requires $\vbeta_{i j}(t)$ and $\valpha_{i j}(t)$ which can be transmitted from device $j$. Thus, \eqref{eq:ADMM_primal} can be executed in parallel for all devices. 

With $\vtheta_i(t+1), i\in V$ at hand, we then update $\vbeta_{i j}(t)$ and $\vbeta_{j i}(t)$ by
\begin{equation}\label{eq:update_beta}
\begin{aligned}
&\biggl(\begin{matrix}\vbeta_{i j}(t+1)\\ \vbeta_{j i}(t+1)\end{matrix}\biggr)= \argmin_{\vbeta_{i j},\vbeta_{j i}}\biggl\{ \lambda\phi(\vbeta_{i j} - \vbeta_{j i}) \biggr. \\
&\quad \quad \quad\quad\biggl.+\frac{\rho}{2}\biggl(\|\vtheta_i(t+1)-\vbeta_{i j} - \rho^{-1}\valpha_{i j}(t)\|_2^2 +
\|\vtheta_j(t+1)-\vbeta_{j i} - \rho^{-1}\valpha_{j i}(t)\|_2^2\biggr) \biggr\}.
\end{aligned}
\end{equation}

The update equation \eqref{eq:update_beta} can be implemented on either device $i$ or device $j$, as long as $(\vtheta_j(t+1), \vbeta_{j i}(t), \valpha_{j i}(t))$ or $(\vtheta_i(t+1), \vbeta_{i j}(t), \valpha_{i j}(t))$ is transmitted to the corresponding device. We highlight that, for specific choice of $\phi(\cdot)$, for example, $\phi(\cdot) = \|\cdot\|_1$ and $\phi(\cdot) = \|\cdot\|_2$, we can obtain an explicit update equation from \eqref{eq:update_beta}. Finally, we update $\valpha_{i j}(t)$ and $\valpha_{j i}(t)$ by
\begin{equation}\label{eq:update_alpha}
\biggl(\begin{matrix}\valpha_{i j}(t+1)\\ \valpha_{j i}(t+1)\end{matrix}\biggr) = \biggl(\begin{matrix}\valpha_{i j}(t)\\ \valpha_{j i}(t)\end{matrix}\biggr) - \rho\biggl(\begin{matrix}\vtheta_i(t+1) - \vbeta_{i j}(t+1)\\ \vtheta_j(t+1) - \vbeta_{j i}(t+1)\end{matrix}\biggr).
\end{equation}
Notice that update equation \eqref{eq:update_alpha} also only requires parameter communication among connected devices. Both \eqref{eq:update_beta} and \eqref{eq:update_alpha} can be performed in parallel across edges. We refer to \eqref{eq:ADMM_primal} as node optimization step, and \eqref{eq:update_beta} and \eqref{eq:update_alpha} as edge communication step. Fed-ADMM is summarized in Algorithm \ref{alg:admm}.

Our algorithm bears a resemblance to \citet{ouyang2013stochastic} and \citet{suzuki2013dual}, who approximated $m_{i}(\cdot;\vtheta_i)$ with a linear function $m_{i}(\cdot;\vtheta_i(t)) + (\vtheta - \vtheta_i(t))^\t\vpsi_{i}(\cdot;\vtheta_i(t))$, and used the proximal method \citep{rockafellar1976augmented} to update $\vtheta_i(t)$,
\begin{align}
\vtheta_i(t+1)= \argmin_{\vtheta\in\sxi}&\biggl\{ \vtheta^\t\widetilde{\rvg}_i(t)+ \frac{\rho}{2}\sum_{j\in N_i}\|\vtheta-\vbeta_{i j}(t) - \rho^{-1}\valpha_{i j}(t)\|_2^2
+\frac{\|\vtheta - \vtheta_i(t)\|_2^2}{2\widetilde{\eta}(t)} \biggr\},\label{eq:ouyang_update_theta}
\end{align}
where $\widetilde{\eta}(t)$ is the step-size. Different from \eqref{eq:ADMM_primal}, their work was motivated by the linearized proximal point method \citep{xu2011class}. Interestingly, \eqref{eq:ADMM_primal} can be viewed as an extension of \eqref{eq:ouyang_update_theta} with an adaptive learning rate. In more specific, choosing ${\eta}(t) = \widetilde\eta(t)/(1 + \rho|N_i|\widetilde\eta(t))$, one can show that \eqref{eq:ouyang_update_theta} is equivalent to \eqref{eq:ADMM_primal}. 
\begin{algorithm}[htp] \SetKwInOut{Input}{Input}\SetKwInOut{Output}{Output} 
\Input{Initial value $\mtheta(0),\mbeta(0),\malpha(0)$, number of iterations $T$, and the learning rate $\eta(t), 1\le t\le T$.}
\Repeat{$t> T$}{ 
Sample mini-batches $\cb_i(t)$ on device $i$ in parallel\; 
Obtain $\vtheta_i(t + 1)$ on device $i$ by \eqref{eq:ADMM_primal} in parallel for each $i\in V$\;
Broadcast each $\vtheta_i(t + 1)$ to neighbor devices\;
Obtain $\vbeta_{i j}(t + 1)$ and $\vbeta_{j i}(t+1)$ on device $i$ \footnotemark{}  by \eqref{eq:update_beta} in parallel for $(i,j)\in E$\;
Obtain $\valpha_{i j}(t + 1)$ and $\valpha_{j i}(t+1)$ on device $i$ by \eqref{eq:update_alpha} in parallel for $(i,j)\in E$\;
Broadcast $(\vbeta_{i j}(t+1),\vbeta_{j i}(t+1))$ and $(\valpha_{i j}(t+1),\valpha_{j i}(t+1))$ from device $i$ to device $j$ in parallel for $(i,j)\in E$\;
$t \leftarrow t + 1$
}
\Output{$\overline{\mtheta}=T^{-1}\sum_{t=1}^{T}\mtheta(t-1)$}
\caption{Decentralized stochastic ADMM}
\label{alg:admm} 
\end{algorithm}
\footnotetext{In fact, we can obtain $\vbeta_{i j}(t + 1)$ and $\vbeta_{j i}(t+1)$ on either device $i$ or device $j$. Here, we choose device $i$ as the working device without loss of generality. }

In the sequel, we show that the output of Algorithm \eqref{alg:admm} converges to the global minimizer of \eqref{eq:lagrange}.  Let $\big(\vtheta_i(t), \vbeta_{i j}(t), \vbeta_{j i}(t), \valpha_{i j}(t),\valpha_{j i}(t)\big)$ be the output of Algorithm \ref{alg:admm} in the $t$-th iteration, $0\le t\le T$.
Denote by $\big\{\widehat\vtheta_i,\widehat\vbeta_{i j},\widehat\vbeta_{j i},\widehat\valpha_{i j},\widehat\valpha_{j i}; i,j\in V\big\}$ the global minimizer of \eqref{eq:lagrange}. Without loss of generality, we assume that $\big\{\widehat\vtheta_i,\widehat\vbeta_{i j},\widehat\vbeta_{j i}, \vtheta_i(t), \vbeta_{i j}(t), \vbeta_{j i}(t); i,j\in V, 0\le t\le T\big\} \subset \sxi$, since we can always project them into $\sxi$.
\begin{theorem}\label{thm:conv_alg_admm}
Let $\kappa_{\alpha} = \max_{i\in V, j\in N_i} (\|\valpha_{i j}(0)\|_2^2+\|\widehat{\valpha}_{i j}\|_2^2)$. Suppose that $\kappa_\alpha\ge \lambda \sup_{\va\neq \0}\phi(\va)\linebreak\|\va\|_2^{-1}$, $C_\psi=\max_{i\in V}\sup_{\vtheta\in\sxi} {n_i}^{-1}\sum_{k=1}^{n_i}\|\vpsi_i(\rvz_k^{(i)};\vtheta)\|_2^2<\infty$ and $r_0=\sup_{\va\in\sxi}\|\va\|_2<\infty$.
Under Condition \ref{con:identifiability} and part (\rtwo) of Condition \ref{con:distribution}, by choosing $\eta(t) = \kappa / t$, we have
\[
\frac{1}{|V|}E\left(\left.\|\overline{\mtheta}-\widehat{\mtheta}\|_F^2\right|\rvz_k^{(u)}, 1\le k\le n_u, u\in V\right)\le \frac{2\kappa^2 C_{\psi}\log T}{T},
\]
for sufficiently large $T$ such that $\kappa C_{\psi} |V|\log T \ge |E|(8\rho^{-1}\kappa_{\alpha} + 2r_0^2 \rho + 4r_0\kappa_\alpha)$, where the expectation is taken with respect to the choice of mini-batches $\{\cb_u(t)\colon u\in V, 1\le t\le T\}$.
\end{theorem}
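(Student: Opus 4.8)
The plan is to treat Algorithm~\ref{alg:admm} as an inexact, stochastic ADMM for the saddle-point problem attached to the augmented Lagrangian \eqref{eq:lagrange}, and to run the classical ADMM variational-inequality argument with the exact primal minimization over the $\mtheta$-block replaced by the single projected stochastic-gradient step \eqref{eq:ADMM_primal}. Let $(\widehat{\mtheta},\widehat{\mbeta},\widehat{\malpha})$ be the saddle point; it is primal feasible, so $\widehat{\vbeta}_{ij}=\widehat{\vtheta}_i$ and $\widehat{\vbeta}_{ji}=\widehat{\vtheta}_j$ for every $(i,j)\in E$, and $L(\widehat{\mtheta},\widehat{\mbeta},\malpha)$ does not depend on $\malpha$. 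I would track the potential
\[
\Phi(t)=\frac{1}{\eta(t-1)}\|\mtheta(t)-\widehat{\mtheta}\|_F^2+\frac{\rho}{2}\sum_{(i,j)\in E}\bigl(\|\vbeta_{ij}(t)-\widehat{\vbeta}_{ij}\|_2^2+\|\vbeta_{ji}(t)-\widehat{\vbeta}_{ji}\|_2^2\bigr)+\frac{1}{\rho}\sum_{(i,j)\in E}\bigl(\|\valpha_{ij}(t)-\widehat{\valpha}_{ij}\|_2^2+\|\valpha_{ji}(t)-\widehat{\valpha}_{ji}\|_2^2\bigr),
\]
whose first term carries the step-size weighting that couples the schedule $\eta(t)$ to the strong convexity and whose remaining terms are the standard ADMM Lyapunov terms.

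First I would derive a one-step inequality for each block. For the $\mtheta$-block, squaring \eqref{eq:ADMM_primal} and taking conditional expectation over the mini-batch $\cb_i(t)$, unbiasedness of $\widetilde{\rvg}_i(t)$ for $\nabla\widehat{M}_i(\vtheta_i(t))$ together with $E[\|\widetilde{\rvg}_i(t)\|_2^2\mid\vtheta_i(t)]\le C_\psi$ (Jensen plus the definition of $C_\psi$) produces the usual stochastic-gradient bound, in which the cross term reproduces the inner product of the $\mtheta$-gradient of $L$ at the current iterate with $\mtheta(t)-\widehat{\mtheta}$; convexity and twice differentiability from Condition~\ref{con:identifiability} and the lower curvature bound $\lambda_{\min}(\widehat{\rmh}_i)\ge\kappa^{-1}$ from part~(\rtwo) of Condition~\ref{con:distribution} turn this into a function-value gap $L(\mtheta(t),\mbeta(t),\malpha(t))-L(\widehat{\mtheta},\mbeta(t),\malpha(t))$ plus a strong-convexity term bounded below by $\tfrac{1}{2\kappa}\|\mtheta(t)-\widehat{\mtheta}\|_F^2$, while the quadratic-penalty part of the gradient and the $\eta(t)^2$ noise term are bounded using compactness of $\sxi$ (through $r_0$) and a uniform bound on the multipliers (through $\kappa_\alpha$). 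For the $\mbeta$-block I would use first-order optimality of \eqref{eq:update_beta} with convexity of $\phi$; for the $\malpha$-block, the elementary identity from \eqref{eq:update_alpha} relating $\|\valpha_{ij}(t+1)-\widehat{\valpha}_{ij}\|_2^2$ to $\|\valpha_{ij}(t)-\widehat{\valpha}_{ij}\|_2^2$ and the primal residuals.

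Adding the three block inequalities and the saddle-point inequality $L(\mtheta(t),\mbeta(t),\widehat{\malpha})\ge L(\widehat{\mtheta},\widehat{\mbeta},\malpha(t))$ gives, after taking conditional expectation and choosing $\eta(t)=\kappa/t$ so that the step-size weight in $\Phi$ telescopes cleanly against the curvature $\kappa^{-1}$, a recursion of the form
\[
E[\Phi(t+1)]\le E[\Phi(t)]-\tfrac{1}{2\kappa}E\|\mtheta(t)-\widehat{\mtheta}\|_F^2+\eta(t)\,|V|\,C_\psi+r_t,
\]
where $r_t$ collects the inexactness residuals created by replacing exact primal minimization with one gradient step and is bounded by a $T$-independent quantity controlled by $|E|(8\rho^{-1}\kappa_\alpha+2r_0^2\rho+4r_0\kappa_\alpha)$. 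Summing over $t=1,\dots,T$, using $\Phi\ge 0$, $\sum_{t=1}^T\eta(t)\le\kappa(1+\log T)$, and the hypothesis $\kappa C_\psi|V|\log T\ge|E|(8\rho^{-1}\kappa_\alpha+2r_0^2\rho+4r_0\kappa_\alpha)$ — which is exactly what lets every $T$-independent piece be absorbed into $\kappa C_\psi|V|\log T$ once $T$ is large — yields $\tfrac{1}{2\kappa}\sum_{t=1}^T E\|\mtheta(t-1)-\widehat{\mtheta}\|_F^2\lesssim\kappa C_\psi|V|\log T$. Jensen's inequality for the uniform average $\overline{\mtheta}=T^{-1}\sum_{t=1}^T\mtheta(t-1)$ then converts $T^{-1}\sum_t E\|\mtheta(t-1)-\widehat{\mtheta}\|_F^2$ into $E\|\overline{\mtheta}-\widehat{\mtheta}\|_F^2$, and dividing by $|V|$ gives the stated $O(\kappa^2 C_\psi\log T/T)$ bound, with the constant sharpened to $2$ by careful bookkeeping under the largeness condition on $T$.

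I expect the main obstacle to be the control of the residual $r_t$: in exact ADMM the cross terms between the primal-update error and the multipliers cancel through the $\malpha$-update, but with only one stochastic-gradient step this cancellation is lost, and one must bound those terms by constants depending only on $r_0$, $\rho$, and a uniform bound on $\|\valpha_{ij}(t)\|_2$. Establishing the latter is a subsidiary obstacle: from the optimality conditions of \eqref{eq:update_beta} and \eqref{eq:update_alpha} one shows that, for $t\ge 1$, $\valpha_{ij}(t)$ equals $\lambda$ times a subgradient of $\phi$, hence $\|\valpha_{ij}(t)\|_2\le\lambda\sup_{\va\neq\0}\phi(\va)/\|\va\|_2$, and the hypothesis $\kappa_\alpha\ge\lambda\sup_{\va\neq\0}\phi(\va)\|\va\|_2^{-1}$ then folds this into a single constant valid for all $t\ge 0$. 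A secondary delicate point is that the uniform — rather than iteration-weighted — averaging in the output is responsible for the extra logarithmic factor, so the summation $\sum_t\eta(t)\asymp\log T$ must be retained and the telescoping performed tightly enough that the rate stays $\log T/T$ and does not degrade.
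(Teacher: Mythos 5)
Your proposal is correct and follows essentially the same route as the paper's proof: the same three block inequalities (a Bregman/three-point bound for the one-step stochastic update of $\mtheta$, first-order optimality of the $\mbeta$-update, and the elementary identity for the $\malpha$-update), the same telescoping enabled by $1/\eta(t+1)-1/\kappa=1/\eta(t)$, the same use of $\kappa_\alpha$, $C_\psi$ and $r_0$ to absorb the residuals and dual terms, and Jensen plus strong convexity to finish. The only cosmetic difference is that the paper applies Jensen to the Ouyang-style gap function and invokes strong convexity of $F$ once at the end (on $F(\overline{\mtheta})-F(\widehat{\mtheta})$), whereas you extract a squared-distance term from the saddle-point gap at each iteration and apply Jensen to $\|\cdot\|_F^2$; both are valid and yield the same bound.
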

\subsection{Extension of Fed-ADMM to Heterogeneous Accessibility of Devices }\label{sec:fedadmm_off}
Now we consider the case where a proportion of devices can be inaccessible during the training process. Let $R_i(t)=\1\{\mathrm{device}\ i\ \mathrm{is\ available\ in\ the\ iteration\ }t\}$ for $i\in V$, $t\in\mathbb{N}$ and $S(t)=\{i: R_i(t)=1\}$ be the set of available devices in the $t$-th iteration. We model the inaccessibility by imposing randomness on $R_i(t)$. Suppose that $R_i(t)$ is a Bernoulli random variable with mean $p_i$ which is independent of $t$. We assume that $R_i(t), t>0$ enjoys the memoryless property, \ie, $\{R_i(t_1); i\in V\}$ is independent of $\{R_i(t_2); i\in V\}$ for any $t_1\neq t_2$. We allow a heterogeneous offline rate among devices, \ie, $p_i\neq p_j$ for $i\neq j$, and allow the dependency among $(R_i(t), i\in V)$ for a fixed $t$. Let $p_0=\min_{i\in V} p_i>0$.

To begin with, we presume the existence of a central machine $\mathcal{S}$ and known $p_i, i\in V$. In the $t$-th iteration, for the node computation step, device $i$ needs local samples to produce an unbiased estimator of $\nabla_{\vtheta_i} \widehat{M}_i(\vtheta_i(t))$. 
Motivated by the inverse probability weighting method \citep{wooldridge2007inverse,mansournia2016inverse} in the causal inference literature, an unbiased estimator of $\nabla_{\vtheta_i} \widehat{M}_i(\vtheta_i(t))$ is
$|\cb_i(t)|^{-1}\sum_{b\in \cb_i(t)}p_i^{-1}\vpsi_{i}(\rvz_b^{(i)};\vtheta_i(t))R_i(t)$ owing to the independence between $R_i(t)$ and $\cb_i(t)$.
Therefore, we modify \eqref{eq:ADMM_primal} 
as
\begin{subequations}
\begin{align}
&\vtheta_i(t+1) = \vtheta_i(t) - \eta(t)\left\{\frac{1}{|\cb_i(t)|}\sum_{b\in \cb_i(t)}p_i^{-1}\vpsi_{i}(\rvz_b^{(i)};\vtheta_i(t))\right.\notag\\
&\quad\quad\quad\quad\quad\quad\quad\left.+ \rho\sum_{j\in N_i} (\vtheta_i(t) - \vbeta_{i j}(t) - \rho^{-1}\valpha_{i j}(t)) \right\}\quad\mathrm{on\ device}\ i\ \mathrm{with}\ R_i(t) = 1;\label{eq:update_theta_online}\\
&\vtheta_i(t+1) = \vtheta_i(t) - \eta(t)\rho\sum_{j\in N_i} (\vtheta_i(t) - \vbeta_{i j}(t) - \rho^{-1}\valpha_{i j}(t)) \quad\mathrm{on}\ \mathcal{S}\ \mathrm{with}\ R_i(t) = 0,\label{eq:update_theta_offline}
\end{align}
\end{subequations}
where for \eqref{eq:update_theta_online}, we need to send $\{\vbeta_{i j}(t),\valpha_{i j}(t); j\in N_i\}$ from $\mathcal{S}$ to device $i$, and then send $\vtheta_i(t+1)$ back to $\mathcal{S}$ from device $i$. This procedure applies to all devices in $S(t)$. After $\vtheta_i(t+1) (i\in V)$ being updated and transmitted to $\mathcal{S}$, we can derive $\big(\vbeta_{i j}(t+1),\vbeta_{j i}(t+1)\big)$ by \eqref{eq:update_beta} and $\big(\valpha_{i j}(t+1),\valpha_{j i}(t+1)\big)$ by \eqref{eq:update_alpha} on $\mathcal{S}$.

Similar to Theorem \ref{thm:conv_alg_admm}, we obtain the convergence rate of Algorithm \ref{alg:admm_offline} in Supplementary materials.
\begin{corollary}\label{coro:conv_alg_admm}
Under the same conditions of Theorem \ref{thm:conv_alg_admm}, for the output of Algorithm \ref{alg:admm_offline} with known $\{p_i, i\in V\}$ with $p_0=\min_{i\in V}p_i>0$, we have
\[
\frac{1}{|V|}E\left(\left.\|\overline{\mtheta}-\widehat{\mtheta}\|_F^2\right|\rvz_k^{(u)}, 1\le k\le n_u, u\in V\right)\le \frac{2\kappa^2 C_{\psi}\log T}{p_0 T},
\]
for sufficiently large $T$, where the expectation is taken with respect to the choice of mini-batches $\{\cb_u(t)\colon u\in V, 1\le t\le T\}$.
\end{corollary}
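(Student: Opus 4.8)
The plan is to reduce Corollary \ref{coro:conv_alg_admm} to Theorem \ref{thm:conv_alg_admm} by showing that the inverse-probability weighting in \eqref{eq:update_theta_online}--\eqref{eq:update_theta_offline} preserves the conditional unbiasedness of the node-step stochastic gradient while inflating its conditional second moment by a factor at most $p_0^{-1}$; everything else in the ADMM analysis is then left untouched. The first step is to write the node update uniformly over $R_i(t)\in\{0,1\}$. Equations \eqref{eq:update_theta_online}--\eqref{eq:update_theta_offline} together read
\[
\vtheta_i(t+1)=\vtheta_i(t)-\eta(t)\Bigl\{\widetilde{\rvg}^{\,\mathrm{off}}_i(t)+\rho\sum_{j\in N_i}\bigl(\vtheta_i(t)-\vbeta_{ij}(t)-\rho^{-1}\valpha_{ij}(t)\bigr)\Bigr\},\qquad \widetilde{\rvg}^{\,\mathrm{off}}_i(t)\equiv \frac{R_i(t)}{p_i}\,\widetilde{\rvg}_i(t),
\]
where $\widetilde{\rvg}_i(t)=|\cb_i(t)|^{-1}\sum_{b\in\cb_i(t)}\vpsi_i(\rvz_b^{(i)};\vtheta_i(t))$ is the very minibatch gradient of Algorithm \ref{alg:admm}, and the $\vbeta$- and $\valpha$-updates \eqref{eq:update_beta}--\eqref{eq:update_alpha}, together with the quadratic penalty terms, are unchanged (merely relocated to $\mathcal{S}$). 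Hence Algorithm \ref{alg:admm_offline} is literally Algorithm \ref{alg:admm} with $\widetilde{\rvg}_i(t)$ replaced by $\widetilde{\rvg}^{\,\mathrm{off}}_i(t)$, so it suffices to re-verify the two properties of the stochastic gradient on which the proof of Theorem \ref{thm:conv_alg_admm} relies.

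Next I would let $\mathcal{F}_{t-1}$ be the $\sigma$-field generated by all minibatches and availability indicators drawn before iteration $t$, so that $\vtheta_i(t),\vbeta_{ij}(t),\valpha_{ij}(t)$ are $\mathcal{F}_{t-1}$-measurable while $\cb_i(t)$ and $R_i(t)$ are independent of $\mathcal{F}_{t-1}$ (by the memoryless property) and mutually independent. Then, using $E[R_i(t)]=p_i$ and the unbiasedness of $\widetilde{\rvg}_i(t)$, one gets $E[\widetilde{\rvg}^{\,\mathrm{off}}_i(t)\mid\mathcal{F}_{t-1}]=\nabla_{\vtheta}\widehat{M}_i(\vtheta_i(t))$; and, using $R_i(t)^2=R_i(t)$ together with the elementary minibatch bound $E[\|\widetilde{\rvg}_i(t)\|_2^2\mid\mathcal{F}_{t-1}]\le C_\psi$ (convexity of $\|\cdot\|_2^2$, uniform sampling of $\cb_i(t)$, and the definition of $C_\psi$),
\[
E\bigl[\|\widetilde{\rvg}^{\,\mathrm{off}}_i(t)\|_2^2\,\bigm|\,\mathcal{F}_{t-1}\bigr]=\frac{1}{p_i}\,E\bigl[\|\widetilde{\rvg}_i(t)\|_2^2\,\bigm|\,\mathcal{F}_{t-1}\bigr]\le\frac{C_\psi}{p_i}\le\frac{C_\psi}{p_0}.
\]

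Finally I would replay the proof of Theorem \ref{thm:conv_alg_admm} line by line. That argument is a Lyapunov/averaging estimate exploiting strong convexity of the $\widehat{M}_i$ (part (\rtwo) of Condition \ref{con:distribution}) with step size $\eta(t)=\kappa/t$, and it invokes the stochastic gradient only through its conditional unbiasedness and the uniform second-moment bound, the latter entering linearly. Substituting $C_\psi\mapsto C_\psi/p_0$ everywhere therefore produces $|V|^{-1}E(\|\overline{\mtheta}-\widehat{\mtheta}\|_F^2\mid\{\rvz^{(u)}_k\})\le 2\kappa^2 C_\psi(\log T)/(p_0 T)$ for $T$ large enough, the ``sufficiently large $T$'' threshold inheriting the same rescaling, with the expectation now ranging over the minibatches and the availability indicators $\{R_i(t)\}$ jointly.

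I expect the only genuinely delicate point to be the probabilistic bookkeeping of the new randomness $\{R_i(t)\}$: one must set up $\mathcal{F}_{t-1}$ so that the current iterates are measurable while $R_i(t)$ is ``fresh'' and independent — precisely where the memoryless property and $p_0=\min_{i\in V}p_i>0$ are needed — and then confirm that no step of the original ADMM analysis secretly exploited that the gradient noise arose only from minibatching. Granting that, the extension is a mechanical substitution.
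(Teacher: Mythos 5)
Your proposal is correct and follows essentially the same route as the paper: define the inverse-probability-weighted gradient $p_i^{-1}R_i(t)\widetilde{\rvg}_i(t)$, use the independence of $R_i(t)$ and $\cb_i(t)$ to verify conditional unbiasedness and the second-moment bound $C_\psi/p_0$, and then let the proof of Theorem \ref{thm:conv_alg_admm} carry over with $C_\psi$ replaced by $C_\psi/p_0$. Your explicit filtration bookkeeping is somewhat more careful than the paper's two-line argument, but it is the same idea.
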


\section{Simulations}\label{sec:simulation}
In this section, we evaluate the performance of five methods, Fed-ADMM, Fed-ADMM-ES, Oracle, Local and Global in various settings. Here, for a specific graph $G=(V,E)$, Fed-ADMM denotes the output of Algorithm \eqref{alg:admm} with $G$, Fed-ADMM-ES denotes the output of Algorithm \eqref{alg:admm} after applying the adaptive edge selection procedure to $G=(V,E)$, Oracle denotes the output of Algorithm \eqref{alg:admm} with the characteristic graph $G_0$, Local denotes the local estimator defined in \eqref{eq:naive_est}, and Global denotes the estimator obtained by ignoring the heterogeneity, \ie,
\[
\widehat{\mtheta}^{\mathrm{gl}} = \argmin_{\mtheta} \frac{1}{|V|}\sum_{u\in V}\sum_{k=1}^{n_u} m_u(\rvz_k^{(u)};\vtheta_u)\quad\mathrm{s.t.}\ \vtheta_i=\vtheta_j, \forall \{i,j\}\subset V.
\]
The metric of performance is chosen as the average squared estimation error, i.e.,  $\|\widehat{\mtheta}-\mtheta^*\|_F^2/|V|$.
We also compare the algorithmic convergence rates of Fed-ADMM and vanilla stochastic gradient descent (SGD) for solving the objective function \eqref{eq:penalized_m}, \ie, total number of iterations required to converge. 

We first introduce data generating processes of our simulation. We consider linear regression tasks on each device, where covariates $\rvx_k^{(u)}\sim N_p(0,\mi_p)$ and noises $\varepsilon_k^{(u)}\sim N(0,1)$ independently for $k=1,\ldots,n_u; u\in V$. The characteristic graph $G_0$ is generated by evenly partitioning $V$ into $K_0$ subsets, $V_1,\ldots,V_{K_0}$, and constructing a complete subgraph with node set being $V_j$. For each $V_j$ and $u\in V_j$, the responses $\ry_k^{(u)} = \big(\rvx_k^{(u)}\big)^\t \vvartheta^{(j)} + \varepsilon_k^{(u)}$, where $\vvartheta^{(j)}\ (1\le j\le K_0)$ are sampled independently from a Gaussian distribution with mean $0$ and covariance matrix $p^{-1/2}\mi_p$. We store the adjacency matrix $\mlambda_0$ of $G_0$.

\begin{figure}[htb]
\centering
\includegraphics[width=\textwidth]{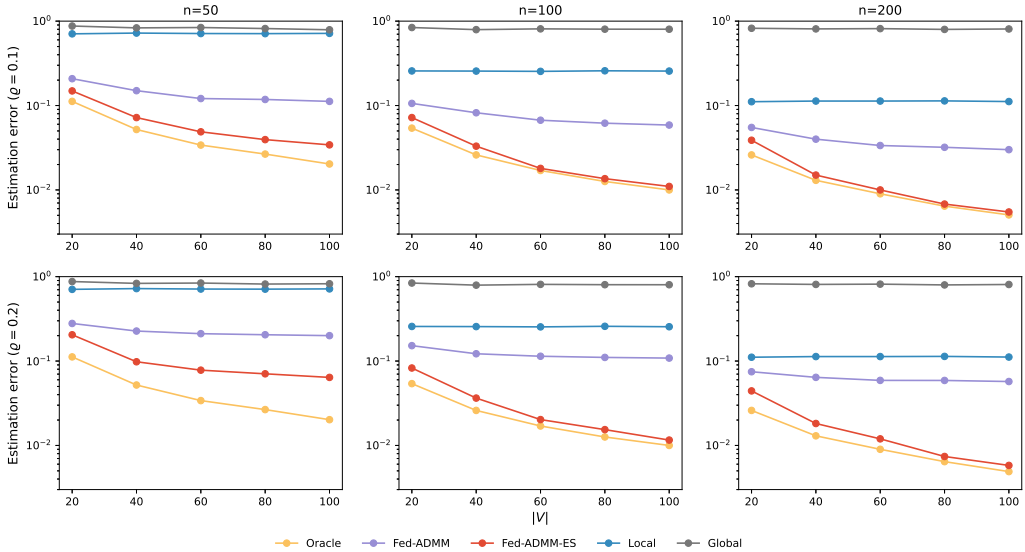}
\caption{Average estimation error of Fed-ADMM, Fed-ADMM-ES and oracle, local and global estimators in linear regression, with randomly corrupted graphs. The number of clusters $K$ is fixed to be $5$ for all settings. The two rows correspond to corruption level $\varrho=0.1$ and $0.2$.}
\label{fig:S1}
\end{figure}

We then generate $G=(V,E)$ by maliciously corrupting $G_0$ with \textit{corruption level} $\varrho>0$. By sampling Bernoulli random variables $\re_{i j}$ independently with mean $\varrho$, we flip the connection status between device $i$ and device $j$ in $G_0$ if $\re_{i j}=1$; otherwise keep it intact. In more specific, we directly generate the adjacency matrix $\mlambda(\varrho)$ by $
\mlambda_{i j}(\varrho)=\mlambda_{j i}(\varrho) = 
\re_{i j}\{1 - (\mlambda_0)_{i j}\} + (1-\re_{i j})(\mlambda_0)_{i j}$
for $i, j\in V$ with $i<j$, and choose $G$ as the graph that $\mlambda(\varrho)$ defines. The deviation between $G$ and $G_0$ can be adjusted by different choices of $\varrho$. Indeed, $|E\setminus E_0|+ |E_0\setminus E| = \sum_{i< j}\re_{i j}=\varrho|V|(|V| - 1) / 2 + O_p(\varrho|V|\log|V|)$ by Hoeffding's inequality. 

We choose $m_u(\rvz_k^{(u)};\vtheta)=\big(\ry_k^{(u)}-\big(\rvx_k^{(u)}\big)^\t \vtheta \big)^2/2$ and $\phi(\cdot)=\|\cdot\|_1$ in \eqref{eq:ob}. The regularization parameter $\lambda$ is tuned by cross-validation.

\subsection{Estimation Error}
We compare the averaged estimation error of aforementioned five estimators. For simplicity, we set $n_i = n$ for all $i\in V$. We choose $|V|\in \{20, 40, 60, 80, 100\}$, $n\in \{50, 100, 200\}$, and fixed $K=5$ and $p=20$. We then conduct simulations with corruption level $\varrho=0.1$ and $\varrho=0.2$ respectively.

In Figure \ref{fig:S1}, we report the average squared estimation error for each estimator. The first row and the second row correspond to the corruption level $\varrho=0.1$ and   $\varrho=0.2$, respectively.  Each point is the mean $100$ of independent replications. The performance of Local and Global do not depend on $|V|$ in all settings, while the estimation error of Oracle is decreasing as $|V|$ increases. In the case $\varrho=0.1$, the Fed-ADMM has smaller estimation errors than local estimators, which shows the superiority of data federation. However, there still exists a non-vanishing performance gap between Fed-ADMM and the oracle estimator. Notably, the average error of Fed-ADMM does not decrease as $|V|$ increases, since the corruption level $\varrho$ is fixed as $|V|$ grows and thus the expected number of wrong edges increases as well, which hinders the performance as shown in Theorem \ref{thm:lam_rho}. Fed-ADMM performs even worse than the local estimator as shown in the second row of Figure \ref{fig:S1} if we use an even worse graph.

We highlight that this issue can be solved efficiently by edge selection. In all settings, Fed-ADMM-ES (Fed-ADMM with edge selection) outperforms Fed-ADMM and local estimator, and is almost indistinguishable from Oracle when $n$ is large. This suggests that most of the misleading information in graph can be eliminated by the edge selection procedure proposed in Section \ref{sec:edge_selection}.

\subsection{Sensitivity Analysis of Graphs}
\begin{figure}[htbp]
\centering
\includegraphics[width=\textwidth]{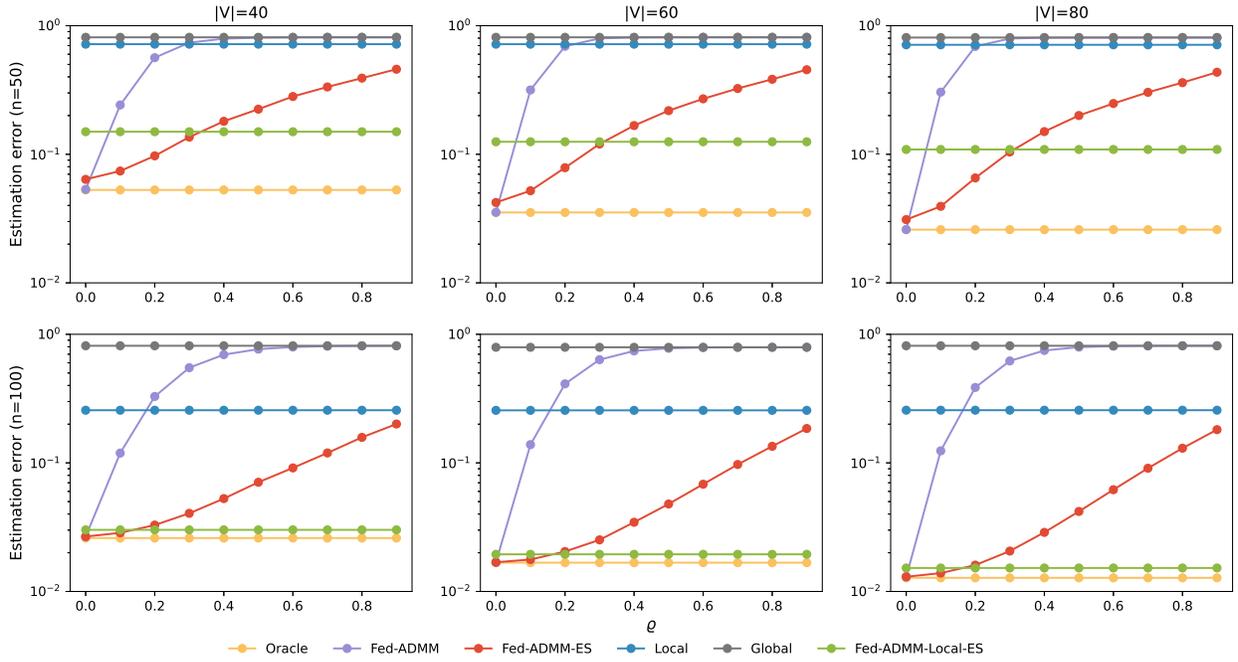}
\caption{Average estimation error of Fed-ADMM, Fed-ADMM-ES, Fed-ADMM-Local-ES, Oracle, Local and Global in linear regression, with randomly corrupted graphs. The $x$-axis corresponds to the corruption lever $\varrho$. We fix $K=5$. The two rows correspond to $n=50$ and $100.$}
\label{fig:S3}
\end{figure}

In order to further study how the performance is degraded of Fed-ADMM when $\varrho$ increases, we let $\varrho$ vary from $0$ to $0.9$ in steps of $0.1$, and compare the averaged estimation error for Fed-ADMM and Fed-ADMM-ES. Besides, we also consider edge selection based only on local estimators (Fed-ADMM-Local-ES). That is, we do not use the information of the given graph, but only use local estimators on each device, to do edge selection. Notice that the estimation error of Fed-ADMM-Local-ES does not depend on $\varrho$, no matter how the given graph $G$ can be misleading. Meanwhile, this method cannot benefit from $G$ when it is close to $G_0$.

We set $K=5$, and $|V|\in\{40, 60, 80\}$ and $n\in\{50, 100\}$. The results are reported in Figure \ref{fig:S3}. Each point is summarized from $100$ independent replications.
The average estimation error of Fed-ADMM increases rapidly as $\varrho$ increases and exceeds the local estimator when $\varrho\geq 0.2$. In contrast, the performance of Fed-ADMM-ES is much more robust. Even when $\varrho=0.9$, \ie. the adjacency matrix is almost completely misleading, it is still better than the local estimator. Besides, as expected, when $\varrho$ is small, which means that $\ma(\varrho)$ is close to the true adjacency matrix $\ma_0$, Fed-ADMM-ES is better than Fed-ADMM-Local-ES; otherwise, using local estimators to construct incidence matrix is a more reasonable choice.

\subsection{Algorithmic Convergence Rates}

The algorithmic convergence rates of Fed-ADMM and its variant are studied in this section. Vanilla gradient descent (GD) and stochastic gradient descent (SGD), which have convergence rates of order $1/\sqrt{T}$ in this convex but nonsmooth setting, are also considered for comparison. For $K=5$, $p=20$, $n\in\{50, 100\}$ and $|V|\in\{40,60,80\}$, we run the Fed-ADMM with full batch, Fed-ADMM with batch size $10$, GD and SGD with batch size $10$. The results are reported in Figure \ref{fig:S2}. The $x$-axis and $y$-axis correspond to the number of optimization iterations and average estimation error, respectively. In all settings, the convergence of Fed-ADMM (Fed-ADMM with full batch) is much faster than SGD (GD).

\section{A Real-Data Study}\label{sec:real}
In this section, we use our proposed method to study the 2020 US presidential election results.
The 2020 county-level election results is available from \url{https://github.com/tonmcg/US_County_Level_Election_Results_08-20} and the county-level information can be derived from \url{ https://www.kaggle.com/benhamner/2016-us-election}. The original data includes 51 states, 3111 counties and 52 county-level predictors. We treat each state as a device and its counties as its samples. We use the county-level information as the predictors and the election result of each county as responses. If Democrats win, the label of this county is encodes as $1$, otherwise $0$. We then use logistic regression to predict election results. Due to limited data availability, not all states have large enough sample size. We  then select the states with more than 50 counties into our study, and the total number of selected states is $29$. 

We consider the following two approaches to obtain the graph used in Fed-ADMM: (a) use the historical election results prior to 2016 to classify the states, denoted by $\widehat{\rmd}_{\his}$; (b) use the local estimators of states to do edge selection, denoted by $\widehat{\rmd}_{\loc}$. Here, $\widehat{\rmd}_{\his}$ is generated by the division of red, blue, and swing states. 
We use the local estimator and global estimator as baselines.

We randomly select $2/3$ counties as training data and the rest counties serve as test samples to measure their prediction accuracy, where accuracy is defined as the proportion of correctly classified samples over the whole test samples. We repeat this for $50$ times to report mean and standard deviation of each model's accuracies in Table \ref{tab:acc}.   
\begin{table}[h]
	\centering
	\fontsize{8}{11}\selectfont    
	\caption{Accuracy (mean(standard deviation)) of Local, Global, and FedADMM.  }
	\begin{tabular}{ccccc}
		\toprule
		\multicolumn{1}{c}{\multirow{2}{*}{Methods}} &  \multirow{2}{*}{Local} & \multirow{2}{*}{Global} &
		\multicolumn{2}{c}{FedADMM} \cr
		\cmidrule(lr){4-5}
		& & &  $\widehat{\rmd}_{\loc}$  & $\widehat{\rmd}_{\his}$ \cr
		\cmidrule(lr){1-5}
    	Accuracy & 0.741(0.034) & 0.752(0.012) & 0.793(0.019) & 0.742(0.011)    \cr
		\bottomrule
	\end{tabular}\vspace{0cm}
	\label{tab:acc}
\end{table}

Table \ref{tab:acc} shows that FedADMM with $\widehat{\rmd}_{\loc}$ performs best. Global estimator outperforms the local one, suggesting that the heterogeneity among the considered states is not strong. FedADMM with $\widehat{\rmd}_{\his}$ performs on par with local estimator, which indicates that the heterogeneity does not mainly come from the division of red, blue and swing states.

Despite the prediction performance, we are also interested in the graph obtained by edge selection, which reflects similarity between states in 2020 presidential election. We plot the clustering result about the $29$ considered states that is derived by using $\widehat{\rmd}_{\loc}$ in our proposed method in Fig \ref{fig:map} in the Supplementary Material. A brief discussion can also be found therein.



\section{Discussion}
In this work, we consider parameter estimation across multiple devices, under strong constraints such as the disallowance of data sharing, data distribution heterogeneity, limited computational capacity and unstable accessibility of local devices. Under a general $M$-estimation framework, we propose an efficient, scalable, and decentralized algorithm and provide the convergence rate of our estimator in terms of Frobenius norm.  
The clustering consistency, \ie, model selection consistency, of our estimator, as well as statistical properties of the post-clustering estimator can be analyzed in future.  

High-dimensional parameter estimation and inference receive much less attention in federated learning literature. The recent work \citep{battey2018,fan2021,cai2021shir} considered high-dimensional parameter estimation and inference under distributed settings without sharing local data. However, they need to pre-compute a local estimator from each device, requiring a strong computational capacity of local devices. Our method can extend to high-dimensional settings, under all aforementioned constraints in federated learning, by simply adding an extra $\ell_1$-regularization of each parameter, \ie,
\[
\widehat{\mtheta}=\argmin_{\mtheta} \frac{1}{|V|}\sum_{i\in V} \big(\widehat{M}_i(\vtheta_i) + \lambda_i \|\vtheta_i\|_1\big)+ \lambda R(\md\mtheta).
\]
As discussed before, statistical inference in this case still requires further investigation.

\clearpage



\bibliographystyle{jasa}
\bibliography{FedADMM}
\clearpage
\setcounter{section}{0}
\setcounter{equation}{0}
\setcounter{figure}{0}

\renewcommand{\thesection}{S.\arabic{section}}
\renewcommand{\theequation}{S.\arabic{equation}}
\renewcommand{\thelemma}{S.\arabic{lemma}}
\renewcommand{\theproposition}{S.\arabic{proposition}}
\setcounter{figure}{0}
\renewcommand{\thefigure}{S.\arabic{figure}}
\makeatletter
\section*{Supplementary Material for ``Heterogeneous Federated Learning on a Graph''}
This supplementary material contains additional algorithms and figures, and the proofs of theoretical results in Section \ref{sec:method} and Section \ref{sec:optim}.

\section{FedADMM with Heterogeneous Accessibility of Devices}
Following Section \ref{sec:fedadmm_off}, we extend FedADMM to the case where a proportion of devices can be inaccessible during the training process, as illustrated in Algorithm \ref{alg:admm_offline}.
\begin{algorithm}[htp] \SetKwInOut{Input}{Input}\SetKwInOut{Output}{Output} 
\Input{Initial value $\mtheta(0),\bbeta(0),\valpha(0)$, number of iterations $T$, and the learning rate $\eta(t), 1\le t\le T, i\in V$.}
\Repeat{$t> T$}{
\For{$i\in S(t)$}{
Broadcast $\{\vbeta_{i j}(t),\valpha_{i j}(t); j\in N_i\}$ from $\mathcal{S}$ to device $i$
\;
Obtain $\vtheta_i(t + 1)$ by \eqref{eq:update_theta_online} with $p_i=p_i(t)$ in parallel and
send $\vtheta_i(t + 1)$ back to $\mathcal{S}$\;
}
Obtain $\vtheta_i(t+1)$ by \eqref{eq:update_theta_offline} for $i\notin S(t)$ in $\mathcal{S}$\;
Obtain $\vbeta_{i j}(t + 1)$ and $\vbeta_{j i}(t+1)$ by \eqref{eq:update_beta} for $(i,j)\in E$ in $\mathcal{S}$\;
Obtain $\valpha_{i j}(t + 1)$ and $\valpha_{j i}(t+1)$ by \eqref{eq:update_alpha} for $(i,j)\in E$ in $\mathcal{S}$\;
For unknown $\{p_i, i\in V\}$, record $(R_i(t): i\in V)$ and update $p_i(t+1) = t^{-1}\sum_{t=1}^{t}R_i(t)$, $i\in V$\;
$t\leftarrow t+1$\;
}
\Output{$\overline{\mtheta}=T^{-1}\sum_{t=1}^{T}\mtheta(t-1)$}
\caption{Fed-ADMM with randomly inaccessible devices}
\label{alg:admm_offline} 
\end{algorithm}

We remark that the central machine $\mathcal{S}$ is optional, since each device can create a copy of parameters of other devices. Specifically, in the $t$-th iteration, if device $i$ cannot receive $\vtheta_j(t+1)$ from its neighbor (e.g., device $j$), then device $i$ can obtain $\vtheta_j(t+1)$ by \eqref{eq:update_theta_offline}, provided that device $i$ keeps the record of $\valpha_{j k}(t)$ and $\vbeta_{j k}(t)$ for every $k\in N_j$. Despite that this comes with the price of more communication cost, it is still feasible in practice since device $i$ and device $k$ are both connected with device $j$. Moreover, the assumption that $(p_i: i\in V)$ is known, can be relaxed. 
In the $t$-th iteration, we can estimate the vector $(p_i: i\in V)$ using the proportion of each device being offline over the first $t-1$ iterations since $(p_i: i\in V)$ is independent of $t$. The Fed-ADMM with randomly inaccessible devices is summarized in Algorithm \ref{alg:admm_offline}, where, for simplicity, we keep the central machine.
\begin{figure}[htp]
\centering
\includegraphics[width=\textwidth]{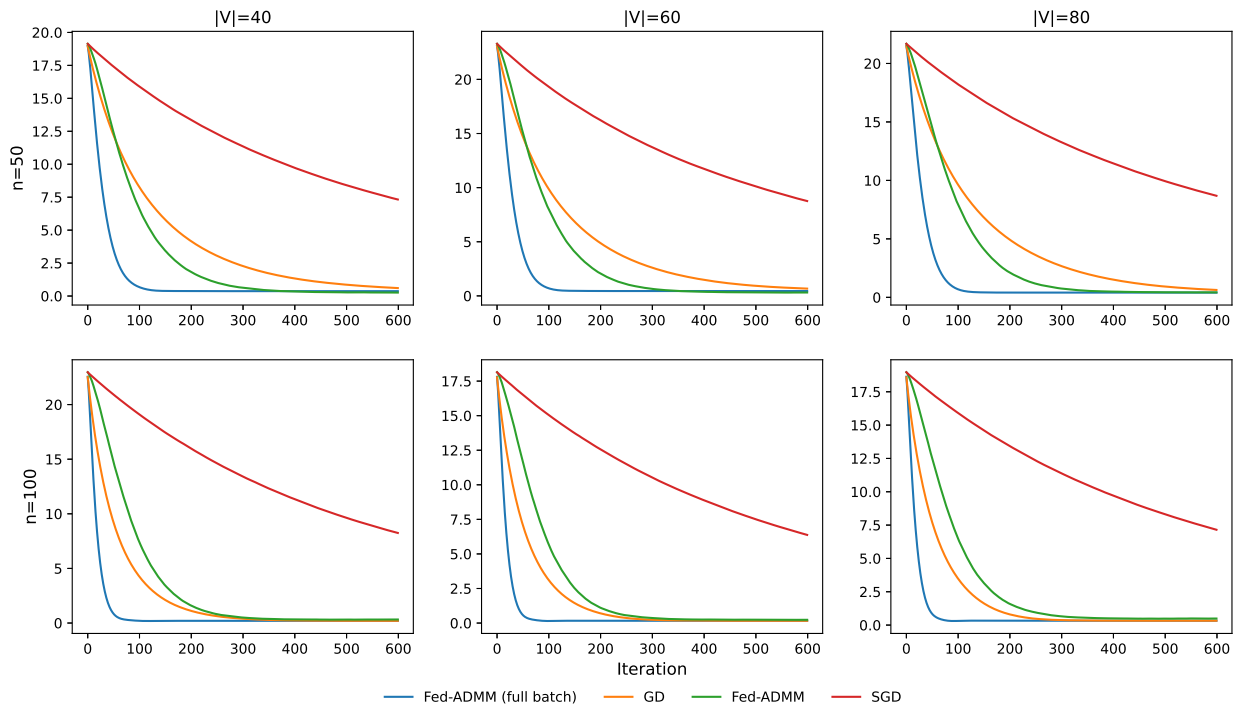}
\caption{Learning curves of the Fed-ADMM with full batch, Fed-ADMM with batch size $10$, GD and SGD with batch size $10$. We set $K=5$, $p=20$, $n=50, 100$ and $|V|=40,60,80$. The $x$-axis and $y$-axis correspond to optimization iteration and average estimation error respectively.
}
\label{fig:S2}
\end{figure}
\section{Additional Figures}
The plot of algorithmic convergence rates of Fed-ADMM as well as its variant,  vanilla gradient descent (GD), and stochastic gradient descent (SGD) is demonstrated in Figure \ref{fig:S2}.

Figure \ref{fig:map} shows the clustering result of using $\widehat{\rmd}_{\loc}$ in our proposed method. The graph contains two connected components, i.e., two clusters. Their members are colored by yellow and blue in Fig \ref{fig:map}, respectively. This suggests that states within each cluster have similar electoral patterns.
There are also some states that are not connected with other states in the graph, which are colored by gray. The statistical associations between predictors and the electoral result of gray states may be different from the majority, and thus data from other states could help little to them for predicting the result. 
\begin{figure}[htp]
\centering
\includegraphics[width=\textwidth]{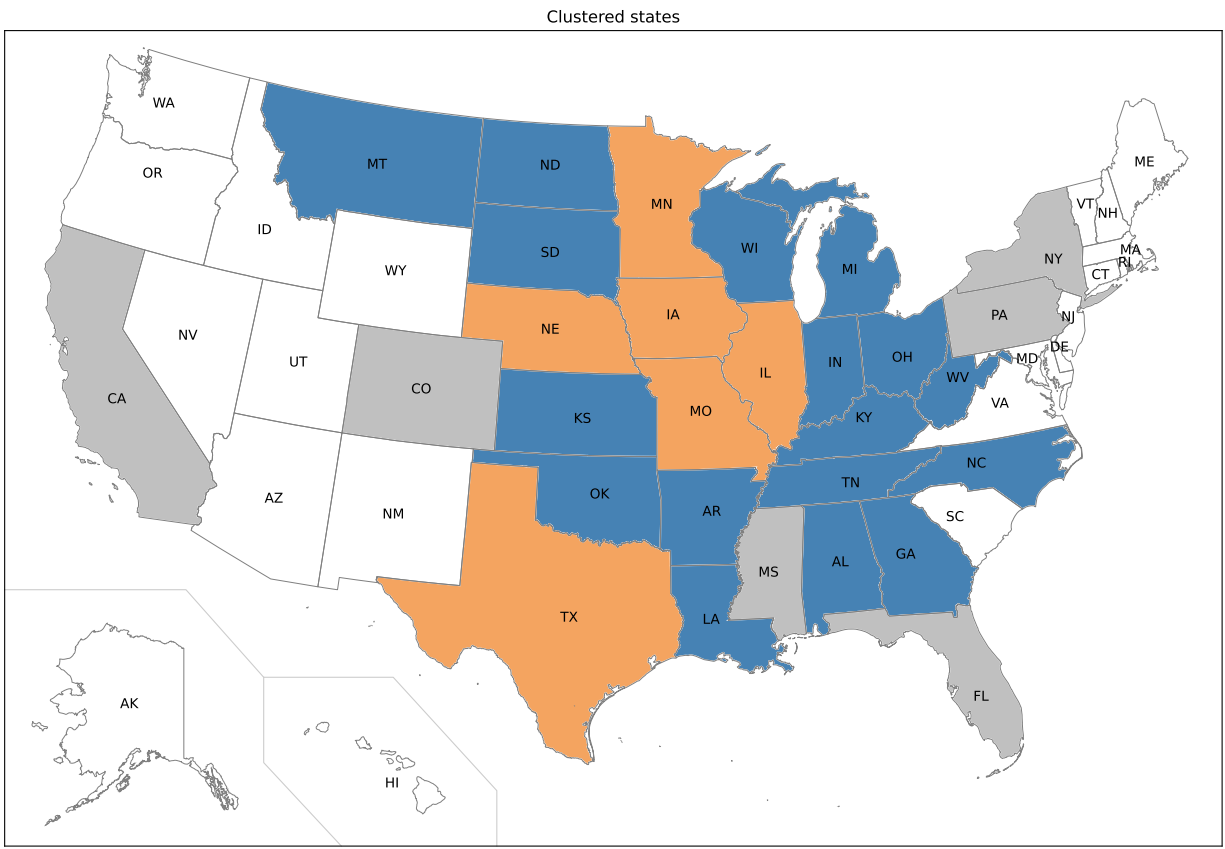}
\caption{Clustering of states under Fed-ADMM with $\widehat{\rmd}_{\loc}$. Yellow and blue states represent two main clusters. Gray indicates states that are not connected to other states. White states are not considered.}
\label{fig:map}
\end{figure}

\section{Proofs for Section \ref{sec:method}}\label{sec:proof}
We first introduce some useful lemmas.
\subsection{Technical Lemmas}\label{sec:el_sg}
We introduce some notation.
For any matrix $\mtheta=(\vtheta_{1},\ldots,\vtheta_{m})^\T \in \mathbb{R}^{m \times p}$, let $\nabla_{\mtheta} R(\mtheta) = (\nabla_{\vtheta} \phi(\vtheta_{1}), \ldots, \nabla_{\vtheta} \phi(\vtheta_{m}))^\T \in \mathbb{R}^{m \times p}$, where $\nabla_{\vtheta}  \phi(\vtheta_{i})$ is the sub-differential of $\phi(\cdot)$ at $\vtheta = \vtheta_{i}$ defined as any vector $\vz$ such that $\phi(\vtheta) - \phi(\vtheta_{i}) \ge \vz^{\T} (\vtheta - \vtheta_{i})$, for all $1\le i\le m$. We define the inner product for two matrices $\matrixu=(\vu_1,\ldots,\vu_r)^\T\in \mathbb{R}^{m\times p}, \mv=(\vv_1,\ldots,\vv_m)^\T \in \mathbb{R}^{m\times p}$ as $\tr( \matrixu\mv^\T)\equiv \sum_{k=1}^m  \vu_k^\T \vv_k$.
The following Lemma characterizes some properties of $R(\cdot)$.
\begin{lemma}[Properties of $R(\cdot)$]\label{lem:Rnorm}
The $\ell_1 / \phi$-norm $R(\cdot):\mathbb{R}^{m \times p}\to [0,\infty)$ admits the following properties:
\begin{itemize}
\item[(\rone)] $R(\cdot)$ is a norm defined on $\mathbb{R}^{m \times p}$.
\item[(\rtwo)] For any $\mtheta_1,\mtheta_2\in\mathbb{R}^{m \times p}$,
\begin{equation}\label{eq:subd_R}
R(\mtheta_1) - R(\mtheta_2) \ge \tr\{ \nabla_{\mtheta} R(\mtheta_2)( \mtheta_1 - \mtheta_2)^\T \}.  
\end{equation}
\item[(\rthree)] Define the dual norm of $\phi(\cdot)$ as $\phi^*(\vv)=\sup_{\phi(\vu) =1}\vu^\T \vv $. Then the dual norm of $R(\cdot)$ is
\begin{equation}\label{eq:dual_R}
R^*(\mv)\equiv\sup_{R(\matrixu) =1}\tr( \matrixu \mv^\T ) = \max_{1\le i\le m}\phi^*(\vv_i).
\end{equation}

\item[(\rfour)] For any $\vu_i\in\mathbb{R}^p$, we have $\vu_i^\T \nabla \phi(\vu_i)  = \phi(\vu_i)$, and  $\tr ( \nabla R(\matrixu) \matrixu^\T ) = R(\matrixu)$.
\item[(\rfive)] $R^*(\nabla R(\mtheta))\le 1$ for any $\mtheta\in \mathbb{R}^{m \times p}$.
\end{itemize}
\end{lemma}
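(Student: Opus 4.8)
\textbf{Proof proposal for Lemma \ref{lem:Rnorm}.}

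The plan is to verify each of the five properties in turn, all of which follow from the fact that $R(\ma)=\sum_{j=1}^m\phi(\va_j)$ is a separable sum of norms $\phi$ over the rows, together with standard convex-analytic facts about norms and their sub-differentials.

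For (\rone), I would check the three norm axioms directly: nonnegativity and definiteness hold because each summand $\phi(\va_j)\ge 0$ with equality iff $\va_j=\0$, so $R(\ma)=0$ iff every row vanishes iff $\ma=\0$; absolute homogeneity $R(c\ma)=|c|R(\ma)$ is immediate from homogeneity of $\phi$; and the triangle inequality follows by applying the triangle inequality for $\phi$ row-by-row and summing. For (\rtwo), recall that for a convex function the sub-gradient inequality holds at every point; since $\phi$ is convex, $\nabla_{\vtheta}\phi(\vb_j)$ is by definition a vector with $\phi(\va_j)-\phi(\vb_j)\ge \nabla_\vtheta\phi(\vb_j)^\T(\va_j-\vb_j)$ for each $j$; summing over $j$ and recognizing the right side as $\tr\{\nabla_\mtheta R(\mtheta_2)(\mtheta_1-\mtheta_2)^\T\}$ (by the definition of the matrix inner product given just before the lemma) gives \eqref{eq:subd_R}.

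For (\rthree), I would compute the dual norm from its definition. Given $\mv=(\vv_1,\dots,\vv_m)^\T$, we maximize $\tr(\matrixu\mv^\T)=\sum_j \vu_j^\T\vv_j$ subject to $\sum_j\phi(\vu_j)=1$. Writing $t_j=\phi(\vu_j)\ge 0$ with $\sum_j t_j=1$, the inner maximization over the direction of $\vu_j$ with $\phi(\vu_j)=t_j$ gives $t_j\,\phi^*(\vv_j)$ by the definition of the dual norm $\phi^*$; hence the objective equals $\sum_j t_j\phi^*(\vv_j)$, a convex combination of the numbers $\phi^*(\vv_j)$, whose maximum over the simplex is $\max_j\phi^*(\vv_j)$, attained by concentrating all mass on an index achieving the max. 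This yields \eqref{eq:dual_R}. For (\rfour), the identity $\vu^\T\nabla\phi(\vu)=\phi(\vu)$ is the Euler relation for the positively homogeneous (degree one) convex function $\phi$: from $\phi(c\vu)=c\phi(\vu)$ for $c>0$, differentiate in $c$ at $c=1$ to get $\vu^\T\nabla\phi(\vu)=\phi(\vu)$ (more carefully, apply the sub-gradient inequality at $\vu$ against the point $0$ and against $2\vu$ to sandwich); summing over rows gives $\tr(\nabla R(\matrixu)\matrixu^\T)=R(\matrixu)$. Finally (\rfive) combines (\rthree) and (\rfour): by (\rthree), $R^*(\nabla R(\mtheta))=\max_j\phi^*(\nabla\phi(\vtheta_j))$, and for each row the standard fact $\phi^*(\nabla\phi(\vtheta_j))\le 1$ holds because for any $\vu$ with $\phi(\vu)=1$ we have $\vu^\T\nabla\phi(\vtheta_j)\le\phi(\vtheta_j+\vu)-\phi(\vtheta_j)\le\phi(\vu)=1$ by the sub-gradient inequality and the triangle inequality, so the supremum defining $\phi^*$ is at most $1$.

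I do not anticipate a serious obstacle here; the only mild subtlety is handling the case where some row $\vtheta_j=\0$ in parts (\rfour)–(\rfive), where $\phi$ is not differentiable. This is resolved by interpreting $\nabla\phi(\0)$ as an arbitrary element of the sub-differential $\partial\phi(\0)=\{\vz:\phi^*(\vz)\le 1\}$; then $\vu^\T\nabla\phi(\0)\le\phi(\vu)$ still holds with both sides giving the correct value ($0=\phi(\0)$ in the Euler relation), and $\phi^*(\nabla\phi(\0))\le 1$ by definition of the sub-differential, so all statements remain valid. The rest is routine.
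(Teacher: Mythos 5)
Your proposal is correct and follows essentially the same route as the paper's proof: row-wise verification of the norm axioms and sub-gradient inequality, computation of the dual norm by reducing to the row-wise duals, the Euler relation via homogeneity for part (\rfour), and the sub-gradient inequality for part (\rfive) (the paper argues part (\rfive) directly on $R$ rather than reducing to $\phi^*(\nabla\phi(\vtheta_j))\le 1$ row-by-row, but the substance is identical). Your explicit treatment of the sub-differential at $\vtheta_j=\0$ is a point the paper leaves implicit, and is a welcome addition.
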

\begin{proof}[Proof of Lemma \ref{lem:Rnorm}] 
Part (\rone) can be directly verified. 

Part (\rtwo) can be proved by noting that
\[
\sum_{i=1}^m  (\vtheta_{1,i} - \vtheta_{2,i})^\T \nabla\phi(\vtheta_{2,i}) \le \sum_{i=1}^m\big(\phi(\vtheta_{1,i}) - \phi(\vtheta_{2,i})\big)= R(\mtheta_1) - R(\mtheta_2)
\]
for any $\mtheta_j=(\vtheta_{j,1},\ldots,\vtheta_{j,m})^\T, j\in\{1,2\}$.

Part (\rthree) can be proved by showing 
\[
\sup_{R(\matrixu) =1}\tr ( \matrixu \mv^\T ) \le \max_{1\le i\le m}\phi^*(\vv_i),
\quad \mathrm{and}\quad 
\sup_{R(\matrixu) =1}\tr ( \matrixu \mv^\T ) \ge \max_{1\le i\le m}\phi^*(\vv_i).
\]
For the first inequality, notice that
\begin{align*}
\sup_{R(\matrixu) =1}\tr ( \matrixu \mv^\T )& = \sup_{R(\matrixu) =1}\sum_{i=1}^m  \vu_i^\T \vv_i  \\
& \le \sup_{R(\matrixu) =1} \sum_{i=1}^m \sup_{\vu_i}\phi(\vu_i) ( \vu_i/\phi(\vu_i) )^\T  \vv_i  \\
& = \sup_{R(\matrixu) =1} \sum_{i=1}^m \phi(\vu_i)\phi^*(\vv_i)\le \max_{1\le i\le m}\phi^*(\vv_i).
\end{align*}
For the second inequality, by choosing $\matrixu_1=(\vu_1,0,0,\ldots,0)^\T$, $\matrixu_2=(0,\vu_2,0,0,\ldots,0)^\T$, $\cdots,\matrixu_m=(0,0,0,\ldots,0,\vu_m)^\T$, we obtain
\begin{align*}
\max_{j\in V}\phi^*(\vv_j) = \max_{j\in V}\sup_{\phi(\vu_j)=1}\tr ( \matrixu_j \mv^\T )   \le \sup_{R(\matrixu) =1} \tr ( \matrixu \mv^\T ).
\end{align*}

Part (\rfour) can be proved by the univariate function $g(\lambda) = \phi(\lambda \vtheta) = \lambda \phi(\vtheta)$ for $\lambda > 0$. Note that $\left. g^\prime(\lambda)\right|_{\lambda = 1}=  \vtheta^\T \nabla \phi(\vtheta)  = \phi(\vtheta) $ by the chain's rule. Thus, 
\[
\tr ( \nabla R(\matrixu) \matrixu^\T ) =\sum_{i=1}^m \vu_i^\T \nabla \phi(\vu_i)  = \sum_{i=1}^m \phi(\vu_i)= R(\matrixu)
\]
for $\matrixu = (\vu_1, \ldots, \vu_m)^\T$.

Part (\rfive) can be proved by noting that
\begin{align}
R^*(\nabla R(\mtheta))& = \sup_{\matrixu\neq\0}\frac{\tr( \nabla R(\mtheta) \matrixu^\T ) }{R(\matrixu)}\notag \\
& = \sup_{\matrixu\neq\0}\frac{\tr\{ \nabla R(\mtheta) (\matrixu - \mtheta )^\T \} + \tr( \nabla R(\mtheta)  \mtheta^\T )  }{R(\matrixu)}\notag\\
&\le \sup_{\matrixu\neq\0}\frac{R(\matrixu) - R(\mtheta)+ R(\mtheta)}{R(\matrixu)}=1, \label{eq:partrtwo}
\end{align}
where \eqref{eq:partrtwo} is due to part (\rtwo) of Lemma \ref{lem:Rnorm}.
\end{proof}

Next, we show that, for each $u\in V$, $M_u(\cdot)$ is both strongly convex and smooth around $\vtheta_u^*$. Let $r_0=\sup_{\va,\vb\in\sxi}\|\va-\vb\|_2$ be the diameter of the parameter space $\sxi$.

\begin{lemma}\label{lem:strong_c}
Suppose that $\sxi \supset \cup_{u\in V}\ball(\vtheta_u^*;\underline{\lambda}/(2L))$.
Under Condition \ref{con:identifiability}, for each $u\in V$,
$M_u(\vtheta)=E[m_u(\rvz;\vtheta)]$ with $\rvz\sim P(\vtheta_u^*)$ is strongly convex and smooth at the vicinity of $\vtheta_u^*$, \ie, $\forall \va\in\ball(\vtheta_u^*;\underline{\lambda}/(2L))$,
\begin{align*}
&M_u(\va)- M_u(\vtheta_u^*)\ge \underline{\lambda}\|\va - \vtheta_u^*\|_2^2/4,\\
&M_u(\va)- M_u(\vtheta_u^*)\le 3\overline{\lambda}\|\va - \vtheta_u^*\|_2^2/4.
\end{align*}

\end{lemma}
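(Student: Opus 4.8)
The plan is to expand $M_u$ around $\vtheta_u^*$ using a second-order Taylor expansion with integral remainder, and then control the Hessian along the segment via the Lipschitz bound and eigenvalue bounds from Condition \ref{con:identifiability}. First I would note that, since $m_u(\rvz;\cdot)$ is twice differentiable and convex on $\sxi$ and $P(\vtheta_u^*)$ is supported on a compact set, dominated convergence lets me interchange expectation and differentiation, so $\nabla M_u(\vtheta)=E[\vpsi_u(\rvz;\vtheta)]$ and $\nabla^2 M_u(\vtheta)=\rmh_u(\vtheta)$. In particular, since $\vtheta_u^*$ is the minimizer on $\sxi$ and lies in the interior (as $\sxi\supset\ball(\vtheta_u^*;\underline\lambda/(2L))$), the first-order condition gives $\nabla M_u(\vtheta_u^*)=\0$.

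Next, for $\va\in\ball(\vtheta_u^*;\underline\lambda/(2L))$, I would write, with $\vv=\va-\vtheta_u^*$,
\[
M_u(\va)-M_u(\vtheta_u^*)=\int_0^1 (1-s)\,\vv^\t\rmh_u(\vtheta_u^*+s\vv)\,\vv\,ds,
\]
using $\nabla M_u(\vtheta_u^*)=\0$. Then I would bound the integrand by inserting $\rmh_u(\vtheta_u^*)$: the Lipschitz condition gives $\|\rmh_u(\vtheta_u^*+s\vv)-\rmh_u(\vtheta_u^*)\|_2\le L\|s\vv\|_2\le L\|\vv\|_2\le \underline\lambda/2$ for all $s\in[0,1]$, so
\[
\vv^\t\rmh_u(\vtheta_u^*)\vv-\tfrac{\underline\lambda}{2}\|\vv\|_2^2\le \vv^\t\rmh_u(\vtheta_u^*+s\vv)\vv\le \vv^\t\rmh_u(\vtheta_u^*)\vv+\tfrac{\underline\lambda}{2}\|\vv\|_2^2.
\]
Combining with $\underline\lambda\|\vv\|_2^2\le\vv^\t\rmh_u(\vtheta_u^*)\vv\le\overline\lambda\|\vv\|_2^2$ yields $\tfrac{\underline\lambda}{2}\|\vv\|_2^2\le\vv^\t\rmh_u(\vtheta_u^*+s\vv)\vv\le(\overline\lambda+\tfrac{\underline\lambda}{2})\|\vv\|_2^2\le\tfrac{3\overline\lambda}{2}\|\vv\|_2^2$ (using $\underline\lambda\le\overline\lambda$). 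Finally, since $\int_0^1(1-s)\,ds=\tfrac12$, multiplying through gives $\tfrac{\underline\lambda}{4}\|\vv\|_2^2\le M_u(\va)-M_u(\vtheta_u^*)\le\tfrac{3\overline\lambda}{4}\|\vv\|_2^2$, which is exactly the claim.

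I do not anticipate a serious obstacle here; the only point requiring a little care is justifying the differentiation-under-the-integral-sign step that identifies $\nabla^2 M_u=\rmh_u$ and gives $\nabla M_u(\vtheta_u^*)=\0$, which follows from compact support of $P(\vtheta_u^*)$ together with twice differentiability and local boundedness of the second derivative on the compact ball $\ball(\vtheta_u^*;\underline\lambda/(2L))$. One should also check that the whole segment $\{\vtheta_u^*+s\vv:s\in[0,1]\}$ stays inside $\sxi$, which holds because it lies in $\ball(\vtheta_u^*;\underline\lambda/(2L))\subset\sxi$ by convexity of the ball. Everything else is the routine Taylor-with-remainder computation sketched above.
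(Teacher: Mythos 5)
Your proof is correct and follows essentially the same route as the paper: a second-order Taylor expansion of $M_u$ around $\vtheta_u^*$, using $\nabla M_u(\vtheta_u^*)=\0$ and the Lipschitz condition on $\rmh_u$ to control the Hessian within $\ball(\vtheta_u^*;\underline{\lambda}/(2L))$, yielding the same $\underline{\lambda}/2$ and $3\overline{\lambda}/2$ eigenvalue bounds and hence the factors $1/4$ and $3/4$. The only cosmetic difference is that you use the integral form of the remainder where the paper uses the Lagrange (mean-value) form together with Weyl's inequality.
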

\begin{proof}[Proof of Lemma \ref{lem:strong_c}]
Under Condition \ref{con:identifiability}, for any $\va\in \sxi$, applying Taylor's expansion gives, 
\begin{align}
M_u(\va)&=  M_u(\vtheta_u^*) + (\va-\vtheta_u^*)^\T \frac{\partial M_u(\vtheta_u^* )}{\partial \vtheta}\notag \\
& + \frac{1}{2} (\va - \vtheta_u^*)^\T \rmh_u(\vtheta_u^* +\eta_u(\va - \vtheta_u^*) )(\va - \vtheta_u^*),\label{eq:taylor_Mk}
\end{align}
for some $\eta_u\in [0,1]$.
By the Lipschitz continuity of $\rmh_u(\vtheta)$ and Wely's theorem,
\begin{align}
\max_{1\le k\le p}&\big|\lambda_k(\rmh_u(\vtheta_u^* +\eta_u(\va - \vtheta_u^*) )) - \lambda_k(\rmh_u(\vtheta_u^* )) \big|\notag \\
&\le \|\rmh_u(\vtheta_u^* +\eta_u(\va - \vtheta_u^*) ) - \rmh_u(\vtheta_u^* )  \|_2 \le L \|\va - \vtheta_u^*\|_2,\label{eq:perb_H_eta}
\end{align}
where $\lambda_k(\mA)$ denotes the $k$-th largest eigenvalue of the symmetric matrix $\mA$.
By assumption, $ \underline{\lambda}\le \lambda_{\min}(\rmh_u(\vtheta_u^*))\le \lambda_{\max}(\rmh_u(\vtheta_u^*))\le \overline{\lambda}$. Thus, \eqref{eq:perb_H_eta} gives
\[
\inf_{\va\in \ball(\vtheta_u^*;\underline{\lambda}/(2L))}\lambda_{\min}(\rmh_u(\va))\ge \underline{\lambda} - \sup_{\va\in \ball(\vtheta_u^*;\underline{\lambda}/(2L))}L \|\va - \vtheta_u^*\|_2 \ge \underline{\lambda}/2,
\]
and 
\[
\sup_{\va\in \ball(\vtheta_u^*;\underline{\lambda}/(2L))}\lambda_{\max}(\rmh_u(\va))\le \overline{\lambda} + \sup_{\va\in \ball(\vtheta_u^*;\underline{\lambda}/(2L))}L \|\va - \vtheta_u^*\|_2 \le \frac{3\overline{\lambda}}{2}.
\]
Noting that $\partial M_u(\vtheta_u^*) )/\partial \vtheta = 0$, from \eqref{eq:taylor_Mk}, it holds that
\begin{align*}
&M_u(\va)- M_u(\vtheta_u^*)\ge \underline{\lambda}\|\va - \vtheta_u^*\|_2^2/4,\\
&M_u(\va)- M_u(\vtheta_u^*)\le 3\overline{\lambda}\|\va - \vtheta_u^*\|_2^2/4
\end{align*}
for all $\va\in\ball(\vtheta_i^*;\underline{\lambda}/(2L))\subset \sxi$.
\end{proof}

The following lemma shows that empirical Hessian matrices concentrate uniformly to their population counterpart over a neighborhood of true parameters.

\begin{lemma}[Maximal inequality of empirical Hessian matrices]\label{lem:conc_emp_hes}
Under Condition \ref{con:identifiability} and part (\rthree) of Condition \ref{con:distribution}, it holds for all $t\le \sigma_{1,z}^2(\overline{h}-\underline{h})/2$ that
\[
P\left(\max_{u\in V}\sup_{\vtheta\in \sxi}\left\| \widehat{\rmh}_u(\vtheta)- \rmh_u(\vtheta) \right\|_2\ge t\right)\le 2p \sum_{i\in V}  \exp\left(-\frac{n_u t^2}{2\overline{h}^2\sigma_{2,z}^2}  \right).
\]
In particular, with probability at least $1-O\big(p\sum_{i\in V}\exp\{-c_0^2n_u/(18 \overline{h}^2\sigma_{2,z}^2)\}\big)$, we have
\begin{align*}
\max_{i\in V}\sup_{\vtheta\in \mathcal{B}(\vtheta^*_u;c_0/L)}\lambda_{\max}(\widehat{\rmh}_u(\vtheta))&\le \frac{5\overline{\lambda}}{3},\\
\min_{i\in V}\inf_{\vtheta\in \mathcal{B}(\vtheta^*_u;c_0/L)} \lambda_{\min}(\widehat{\rmh}_u(\vtheta_u))&\ge  \underline{\lambda}/3.
\end{align*}
for $c_0 = \min\{\sigma_{1,z}^2(\overline{h}-\underline{h})/2, \overline{\lambda},\underline{\lambda} \} / 3$.
\end{lemma}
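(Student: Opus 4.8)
## Proof Plan for Lemma \ref{lem:conc_emp_hes}

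\textbf{Setup and strategy.} The plan is to bound $\widehat{\rmh}_u(\vtheta) - \rmh_u(\vtheta)$ uniformly in $\vtheta$ by exploiting the factorization from part (\rthree) of Condition \ref{con:distribution}. Write $\widehat{\rmh}_u(\vtheta) - \rmh_u(\vtheta) = n_u^{-1}\sum_{k=1}^{n_u}\{h_u(\rvz_k^{(u)};\vtheta)\vf_u(\rvz_k^{(u)})\vf_u(\rvz_k^{(u)})^\t - E[h_u(\rvz;\vtheta)\vf_u(\rvz)\vf_u(\rvz)^\t]\}$. The key observation is that $h_u(\cdot;\vtheta)$ is uniformly bounded in $[\underline{h},\overline{h}]$ and, crucially, the \emph{random} part $\vf_u(\rvz_k^{(u)})\vf_u(\rvz_k^{(u)})^\t$ does not depend on $\vtheta$. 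This means we can hope to remove the supremum over $\vtheta$ by a direct argument rather than a covering/chaining argument: since $h_u(\vz;\vtheta) \in [\underline{h},\overline{h}]$ for all $\vz,\vtheta$, the deviation at any $\vtheta$ should be controlled by the deviation of the $\vtheta$-free quantity $n_u^{-1}\sum_k \vf_u(\rvz_k^{(u)})\vf_u(\rvz_k^{(u)})^\t - E[\vf_u(\rvz)\vf_u(\rvz)^\t]$ scaled by $\overline{h}$, plus a term reflecting how much $h_u$ can vary, which is where the factor $\sigma_{1,z}^2(\overline{h}-\underline{h})/2$ in the range of $t$ presumably enters.

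\textbf{Main concentration step.} First I would apply the matrix Bernstein inequality (or the sub-exponential matrix concentration bound, e.g.\ Tropp's master tail bound) to $n_u^{-1}\sum_{k=1}^{n_u} \bigl(\vf_u(\rvz_k^{(u)})\vf_u(\rvz_k^{(u)})^\t - E[\vf_u(\rvz)\vf_u(\rvz)^\t]\bigr)$. Part (\rthree) gives exactly the sub-exponential matrix moment-generating-function bound with matrix parameter $\mv$ satisfying $\|\mv\|_2\le\sigma_{2,z}^2$ and scalar parameter $\alpha$, so the standard argument yields, for $t$ in the sub-exponential (``Bernstein'') regime $t \lesssim \sigma_{2,z}^2/\alpha$ (which is implied by the stated restriction $t\le\sigma_{1,z}^2(\overline{h}-\underline{h})/2$ after absorbing constants), a bound of the form $2p\exp(-n_u t^2/(2\sigma_{2,z}^2))$ per device. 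Multiplying the deviation by $\overline{h}$ to account for $\|h_u(\cdot;\vtheta)\|_\infty \le \overline{h}$ converts this into the stated $2p\sum_{u\in V}\exp(-n_u t^2/(2\overline{h}^2\sigma_{2,z}^2))$ after a union bound over $u\in V$; handling the $\vtheta$-dependence of $h_u$ requires the additional slack built into the range of $t$ via $\sigma_{1,z}^2$ and the spectral-norm control on $E[\vf_u\vf_u^\t]$.

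\textbf{Deducing the eigenvalue bounds.} For the ``in particular'' statement, I would take $t = c_0/3$ with $c_0 = \min\{\sigma_{1,z}^2(\overline{h}-\underline{h})/2,\overline{\lambda},\underline{\lambda}\}/3$ (so $t$ is in the admissible range and $t \le \overline{\lambda}/9, \underline{\lambda}/9$), and combine the high-probability bound $\|\widehat{\rmh}_u(\vtheta)-\rmh_u(\vtheta)\|_2 \le t$ with the deterministic perturbation estimate from the proof of Lemma \ref{lem:strong_c}: for $\vtheta\in\ball(\vtheta_u^*;c_0/L)$ we have $\|\rmh_u(\vtheta)-\rmh_u(\vtheta_u^*)\|_2 \le L\|\vtheta-\vtheta_u^*\|_2 \le c_0$ by the Lipschitz condition. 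Then by Weyl's inequality, $\lambda_{\min}(\widehat{\rmh}_u(\vtheta)) \ge \lambda_{\min}(\rmh_u(\vtheta_u^*)) - \|\rmh_u(\vtheta)-\rmh_u(\vtheta_u^*)\|_2 - \|\widehat{\rmh}_u(\vtheta)-\rmh_u(\vtheta)\|_2 \ge \underline{\lambda} - c_0 - t \ge \underline{\lambda} - \underline{\lambda}/3 - \underline{\lambda}/9 \ge \underline{\lambda}/3$ (adjusting the arithmetic so the constants work out), and symmetrically $\lambda_{\max}(\widehat{\rmh}_u(\vtheta)) \le \overline{\lambda} + c_0 + t \le 5\overline{\lambda}/3$; a union bound over $u\in V$ with the per-device failure probability $O(p\exp\{-c_0^2 n_u/(18\overline{h}^2\sigma_{2,z}^2)\})$ then finishes the proof.

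\textbf{Main obstacle.} The delicate point is the uniformization over $\vtheta\in\sxi$: a naive $\varepsilon$-net over the compact $\sxi$ would introduce a $p\log(1/\varepsilon)$ factor that is absent from the stated bound, so the argument must genuinely exploit that the only $\vtheta$-dependence sits in the bounded scalar $h_u(\cdot;\vtheta)$ multiplying a \emph{fixed} random matrix $\vf_u(\rvz)\vf_u(\rvz)^\t$. I expect the cleanest route is to bound $\|\widehat{\rmh}_u(\vtheta)-\rmh_u(\vtheta)\|_2$ by $\overline{h}\,\|n_u^{-1}\sum_k\vf_u\vf_u^\t - E[\vf_u\vf_u^\t]\|_2$ plus a bias-type term $\sup_\vtheta|E[(h_u(\rvz;\vtheta)-\overline{h})(\cdots)]|$ that is deterministically controlled by $(\overline{h}-\underline{h})\|E[\vf_u\vf_u^\t]\|_2 \le (\overline{h}-\underline{h})\sigma_{1,z}^2$ — this is exactly why the threshold $t\le\sigma_{1,z}^2(\overline{h}-\underline{h})/2$ appears, and getting this decomposition tight is the crux of the argument.
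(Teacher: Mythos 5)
Your plan matches the paper's proof essentially step for step: the paper likewise avoids any covering argument by sandwiching $h_u(\cdot;\vtheta)$ between $\underline{h}$ and $\overline{h}$ in the positive semidefinite order, reducing the supremum over $\vtheta$ to the single deterministic bound $\rmq_k(\vtheta)\preceq \overline{h}\bigl(\vf_u(\rvz_k)\vf_u(\rvz_k)^\t - E[\vf_u(\rvz_k)\vf_u(\rvz_k)^\t]\bigr)+(\overline{h}-\underline{h})E[\vf_u(\rvz_k)\vf_u(\rvz_k)^\t]$, applying the trace--MGF matrix Chernoff bound to that, and then combining Weyl's inequality with the Lipschitz continuity of $\rmh_u$ exactly as you describe. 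Your conjecture about the origin of the threshold is also right: the drift term $(\overline{h}-\underline{h})\|E[\vf_u\vf_u^\t]\|_2\le(\overline{h}-\underline{h})\sigma_{1,z}^2$ is precisely what forces the restriction $t\le\sigma_{1,z}^2(\overline{h}-\underline{h})/2$.
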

\begin{proof}[Proof of Lemma \ref{lem:conc_emp_hes}]
Let $\rmq_{u,k}(\vtheta) = \partial^2 m(\rvz_k;\vtheta)/(\partial  \vtheta\partial \vtheta^\T) - E(\partial^2 m(\rvz_k;\vtheta)/(\partial  \vtheta\partial \vtheta^\T))$ with $\rvz_k\sim P(\vtheta_u^*)$ for some $u\in V$. Hereafter, without loss of generality, we omit the subscript $u$ which denotes the index of devices and write $\rmq_{u,k}(\vtheta)$ as $\rmq_{k}(\vtheta)$. By Condition \ref{con:identifiability}, $m(\cdot;\vtheta)$ is twice continuously differentiable, and $E(\partial^2 m(\rvz_k;\vtheta)/(\partial  \vtheta\partial \vtheta^\T)) = \rmh(\vtheta)$.
Under part (\rthree) of Condition \ref{con:distribution},
\begin{align*}
\rmq_k(\vtheta) & = \left\{h(\rvz_k;\vtheta)\vf(\rvz_k)\vf(\rvz_k)^\T - E[h(\rvz_k;\vtheta)\vf(\rvz_k)\vf(\rvz_k)^\T] \right\}\\
& \preceq \overline{h}\left(\vf(\rvz_k)\vf(\rvz_k)^\T-E[\vf(\rvz_k)\vf(\rvz_k)^\T]\right) + (\overline{h}-\underline{h})E[\vf(\rvz_k)\vf(\rvz_k)^\T]
\end{align*}
uniformly over $\vtheta\in\sxi$.
Let $\msigma_f=E[\vf(\rvz_k)\vf(\rvz_k)^\T]$, and $\rmc_k(\vf)=\vf(\rvz_k)\vf(\rvz_k)^\T-\msigma_f$. By the trace inequality and \citet[Lemma 6.13]{wainwright2019high},
\begin{align}
\sup_{\vtheta\in\sxi} \tr&\left\{E \exp\biggl(\lambda \sum_{k=1}^n \rmq_k(\vtheta)\biggr)\right\} \le  \tr\left\{E \exp\biggl(\lambda \sum_{k=1}^n \overline{h}\rmc_k(\vf) + \lambda n(\overline{h}-\underline{h})\msigma_f\biggr)\right\}\\
& \quad\quad \le \tr\left\{ \exp\left(\sum_{k=1}^n \log E\left\{ \exp\left[\lambda \left( \overline{h}\rmc_k(\vf) +(\overline{h}-\underline{h})\msigma_f\right)\right] \right\} \right)\right\}.\label{eq:tr_exp_log}
\end{align}
By assumption, $E\{\exp[\lambda \overline{h} \rmc_k(\vf)]\}\preceq \exp(\lambda^2\overline{h}^2 \mv / 2)$. Since logarithm is matrix monotone, 
\begin{align*}
\log E &\left\{ \exp\left[\lambda \left( \overline{h}\rmc_k(\vf) +(\overline{h}-\underline{h})\msigma_f\right)\right] \right\} \preceq \log \left[ \exp\left(\frac{\lambda^2\overline{h}^2 \mv}{2} +\lambda (\overline{h}-\underline{h})\msigma_f\right)\right]\\
& = \frac{\lambda^2\overline{h}^2 \mv}{2} +\lambda (\overline{h}-\underline{h})\msigma_f.
\end{align*}
Noting that $\tr(e^{\mv})\le p e^{\|\mv\|_2}$, \eqref{eq:tr_exp_log} can be further bounded as
\begin{align}
\sup_{\vtheta\in\sxi} \tr&\left\{E \exp\biggl(\lambda \sum_{k=1}^n \rmq_k(\vtheta)\biggr)\right\} \le \tr\left\{ \exp\left(\frac{n\lambda^2\overline{h}^2 \mv}{2} +n\lambda (\overline{h}-\underline{h})\msigma_f \right)\right\}\\
&\quad \quad\quad \le p \exp\left\{n\left(\lambda^2\overline{h}^2\left\|\mv\right\|_2/2+\lambda(\overline{h}-\underline{h})\left\|\msigma_f\right\|_2\right)\right\}\\
&\quad \quad\quad \le p \exp\left\{n\left(\lambda^2\overline{h}^2\sigma_{2,z}^2/2+\lambda(\overline{h}-\underline{h})\sigma_{1,z}^2\right)\right\}
\end{align}
By the matrix Chernoff technique, e.g., see \citet[Lemma 6.12]{wainwright2019high},
\begin{align*}
& P\biggl(\sup_{\vtheta\in\sxi}\Big\|\sum_{k=1}^n\rmq_k(\vtheta)\Big\|_2\ge n t
\biggr)\le P\biggl( \sup_{\vtheta\in\sxi}\tr\biggl\{\exp\biggl(\lambda\sum_{k=1}^n \rmq_k(\vtheta)\biggr)\biggr\} \ge e^{\lambda n t}\biggr)\\
&\quad\quad\quad \le 2\sup_{\vtheta\in\sxi} \tr\biggl\{E \exp\biggl(\lambda \sum_{k=1}^n \rmq_u(\vtheta)\biggr)\biggr\}e^{-\lambda n t}\\
&\quad\quad\quad \le 2p \exp\left\{n\left(\lambda^2\overline{h}^2\sigma_{2,z}^2/2+\lambda(\overline{h}-\underline{h})\sigma_{1,z}^2\right)-\lambda n t\right\}.
\end{align*}
Choosing $\lambda = (t-\sigma_{1,z}^2(\overline{h}-\underline{h}))/(\overline{h}^2\sigma_{2,z}^2 )$ gives
\begin{align}\label{eq:conc_Q}
P\left(\sup_{\vtheta\in\sxi}\left\|\sum_{k=1}^n\rmq_k(\vtheta)\right\|_2\ge n t
\right) \le 2p \exp\left(-\frac{n t^2}{2\overline{h}^2\sigma_{2,z}^2}  \right),
\end{align}
for $t\le \sigma_{1,z}^2(\overline{h}-\underline{h})/2$.

Recall that $n_u^{-1}\sum_{k=1}^{n_u}\rmq_k(\vtheta) = \widehat{\rmh}_u(\vtheta) - \rmh_u(\vtheta)$. Applying \eqref{eq:conc_Q} under the choice of $t=\min\{\sigma_{1,z}^2(\overline{h}-\underline{h})/2, \overline{\lambda},\underline{\lambda} \} / 3 \equiv c_0$, and by Weyl's inequality, it holds with probability at least $1-O(p\sum_{u\in V}\exp\{-c_0^2n_u/(18\overline{h}^2\sigma_{2,z}^2)\})$ that
\begin{align*}
\max_{1\le k\le p} \sup_{\vtheta\in\sxi}|\lambda_{k}(\widehat{\rmh}_u(\vtheta)) - \lambda_k(\rmh_u(\vtheta))|&\le \sup_{\vtheta\in\sxi}\|\widehat{\rmh}_u(\vtheta) - \rmh_u(\vtheta)\|_2< c_0,
\end{align*}
where $\lambda_k(\mA)$ denotes the $k$-th largest eigenvalue of the symmetric matrix $\mA$.
Moreover, by Condition \ref{con:identifiability}, $\rmh_u(\vtheta)$ is Lipschitz continuous at $\vtheta_u^*$. Again, by Wely's theorem, we have
\begin{equation*}
\max_{1\le k\le p}|\lambda_k(\rmh_u(\vtheta_u) )) - \lambda_k(\rmh_u(\vtheta_u^* )) |\le \|\rmh_u(\vtheta_u) - \rmh_u(\vtheta_u^* )  \|_2 \le L \|\vtheta_u - \vtheta_u^*\|_2,
\end{equation*}
for all $\vtheta\in\sxi$.
Since $c_0\le \underline{\lambda}/3$, we have $\underline{\lambda}/(2L)\ge c_0/L$ and $\sxi\supset \cup_{u\in V}\ball(\vtheta_u^*;c_0/L)$ hold. We conclude that
\begin{subequations}
\begin{align}
\max_{u\in V}\sup_{\vtheta\in \mathcal{B}(\vtheta^*_u;c_0/L)}\lambda_{\max}(\widehat{\rmh}_u(\vtheta))&\le \overline{\lambda} + c_0 + L \frac{c_0}{L} \le \frac{5\overline{\lambda}}{3},\label{eq:em_lam_max}\\
\min_{u\in V}\inf_{\vtheta\in \mathcal{B}(\vtheta^*_u;c_0/L)} \lambda_{\min}(\widehat{\rmh}_u(\vtheta))&\ge \underline{\lambda} - c_0 - L \frac{c_0}{L}\ge \underline{\lambda}/3.\label{eq:em_lam_min}
\end{align}
\end{subequations}

\end{proof}

We denote by  $\widehat{\rmh}_u(\widehat{\vtheta}_u^{\loc})= n_u^{-1}\sum_{i=1}^{n_u}\partial \vpsi_u(\rvz_i^{(u)};\widehat{\vtheta}_u^{\loc}) /\partial \vtheta$ the plug-in estimator of $\rmh_u(\vtheta_u^*)$. The following lemma shows that $\widehat{\vtheta}_u^{\loc}$ is asymptotic normal.
\begin{lemma}[Asymptotic normality of local estimators]\label{lem:naive_consistency} Under Condition \ref{con:identifiability} and part (\rone,\rthree) of Condition \ref{con:distribution}, we have 
\[
\sqrt{n_u}(\widehat{\vtheta}_u^{\loc}-\vtheta^*_u)\rightarrow_{d}N_p\left(0, \rmh_u(\vtheta_u^*)^{-1}\msigma_u(\vtheta_u^*)\rmh_u(\vtheta_u^*)^{-\T}\right)
\]
as $n_u\to\infty$ for each $u \in V$.
\end{lemma}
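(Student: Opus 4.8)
\textbf{Proof proposal for Lemma \ref{lem:naive_consistency} (asymptotic normality of local estimators).}

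The plan is to run the standard $Z$-estimator (estimating-equations) argument. Since $\widehat{\vtheta}_u^{\loc}$ minimizes the convex function $\widehat{M}_u(\vtheta)=n_u^{-1}\sum_{k=1}^{n_u} m_u(\rvz_k^{(u)};\vtheta)$ over $\sxi$, and by Lemma \ref{lem:strong_c} together with Lemma \ref{lem:conc_emp_hes} the population objective $M_u$ is strongly convex near $\vtheta_u^*$ while the empirical Hessian concentrates uniformly on a neighborhood of $\vtheta_u^*$, I would first establish consistency: $\widehat{\vtheta}_u^{\loc}\to_p \vtheta_u^*$, so that with probability tending to one $\widehat{\vtheta}_u^{\loc}$ lies in the interior of $\ball(\vtheta_u^*;r_1)$ and hence satisfies the first-order condition $n_u^{-1}\sum_{k=1}^{n_u}\vpsi_u(\rvz_k^{(u)};\widehat{\vtheta}_u^{\loc})=\0$. (Consistency follows because $\widehat M_u$ converges uniformly to $M_u$ on $\sxi$ by the sub-exponential control in part (\rthree) of Condition \ref{con:distribution}, and $M_u$ has the well-separated minimizer $\vtheta_u^*$ by Lemma \ref{lem:strong_c}.)

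Next I would Taylor-expand the estimating equation around $\vtheta_u^*$: there is a (random) point $\widetilde{\vtheta}_u$ on the segment between $\widehat{\vtheta}_u^{\loc}$ and $\vtheta_u^*$ with
\[
\0 = \frac{1}{n_u}\sum_{k=1}^{n_u}\vpsi_u(\rvz_k^{(u)};\vtheta_u^*) + \widehat{\rmh}_u(\widetilde{\vtheta}_u)\,(\widehat{\vtheta}_u^{\loc}-\vtheta_u^*),
\]
so that
\[
\sqrt{n_u}\,(\widehat{\vtheta}_u^{\loc}-\vtheta_u^*) = -\,\widehat{\rmh}_u(\widetilde{\vtheta}_u)^{-1}\,\frac{1}{\sqrt{n_u}}\sum_{k=1}^{n_u}\vpsi_u(\rvz_k^{(u)};\vtheta_u^*).
\]
For the score term, part (\rone) of Condition \ref{con:distribution} makes $\vpsi_u(\rvz_k^{(u)};\vtheta_u^*)$ i.i.d.\ mean-zero (mean zero because $\vtheta_u^*$ minimizes $M_u$) sub-Gaussian with covariance $\msigma_u(\vtheta_u^*)$, so the multivariate Lindeberg--Feller (or Cram\'er--Wold plus Lyapunov) CLT gives $n_u^{-1/2}\sum_{k=1}^{n_u}\vpsi_u(\rvz_k^{(u)};\vtheta_u^*)\to_d N_p(\0,\msigma_u(\vtheta_u^*))$; one must note that although $p=p_n$ may grow, the statement is a fixed-dimensional limit per device and the sub-Gaussian moment bound controls the Lyapunov ratio. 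For the Hessian factor, consistency of $\widehat{\vtheta}_u^{\loc}$ forces $\widetilde{\vtheta}_u\to_p\vtheta_u^*$, and then the uniform concentration of Lemma \ref{lem:conc_emp_hes} plus Lipschitz continuity of $\rmh_u$ (Condition \ref{con:identifiability}) yields $\widehat{\rmh}_u(\widetilde{\vtheta}_u)\to_p\rmh_u(\vtheta_u^*)$, which is invertible with eigenvalues in $[\underline{\lambda},\overline{\lambda}]$. Slutsky's theorem then delivers $\sqrt{n_u}(\widehat{\vtheta}_u^{\loc}-\vtheta_u^*)\to_d N_p\!\big(\0,\rmh_u(\vtheta_u^*)^{-1}\msigma_u(\vtheta_u^*)\rmh_u(\vtheta_u^*)^{-\T}\big)$.

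The main obstacle, and the step I would spend the most care on, is making the consistency argument fully rigorous while respecting that $p$ may grow with $n$: I need the uniform law of large numbers for $\widehat M_u-M_u$ (or for the gradient) over the compact set $\sxi$ to hold with the stated probability, which is exactly where part (\rone) and part (\rthree) of Condition \ref{con:distribution} and Lemma \ref{lem:conc_emp_hes} are used, and I must check that the "with probability tending to one the first-order condition holds" step is legitimate given the constraint $\vtheta\in\sxi$ (handled because $\vtheta_u^*$ is interior, with $\ball(\vtheta_u^*;r_1)\subset\sxi$). Everything after that — the Taylor expansion, the CLT for the score, and Slutsky — is routine.
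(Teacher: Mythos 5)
Your argument is correct in substance but takes a different route from the paper's. You run the explicit estimating-equation argument: consistency, then a Taylor expansion of the score $n_u^{-1}\sum_k\vpsi_u(\rvz_k^{(u)};\widehat{\vtheta}_u^{\loc})=\0$ around $\vtheta_u^*$, a CLT for the score, consistency of the empirical Hessian via Lemma \ref{lem:conc_emp_hes} and the Lipschitz continuity in Condition \ref{con:identifiability}, and Slutsky. The paper instead spends its effort constructing integrable envelope functions $\widetilde m(\cdot)$ and $\widetilde{\vpsi}(\cdot)$ (using the sub-Gaussian score from part (\rone) and the sub-exponential matrix tail from part (\rthree)), deduces from the Lipschitz-in-parameter bound $|m_u(\cdot;\vtheta_1)-m_u(\cdot;\vtheta_2)|\le\widetilde{\vpsi}(\cdot)\|\vtheta_1-\vtheta_2\|_2$ that the class $\{m_u(\cdot;\vtheta)\colon\vtheta\in\sxi\}$ is Glivenko--Cantelli and Donsker, and then cites Theorems 5.7 and 5.21 of van der Vaart to obtain consistency and asymptotic normality in one stroke. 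Your version is more elementary and self-contained but requires you to handle the remainder by hand; the paper's version outsources exactly that step to the Donsker machinery. Two small points to tighten in your write-up: (a) the mean-value form $\widehat{\rmh}_u(\widetilde{\vtheta}_u)$ with a single intermediate point is not valid for vector-valued maps — use the integral remainder $\int_0^1\widehat{\rmh}_u\big(\vtheta_u^*+s(\widehat{\vtheta}_u^{\loc}-\vtheta_u^*)\big)\,ds$ or argue coordinate-wise; and (b) the claim that the score has mean zero because $\vtheta_u^*$ minimizes $M_u$ requires interchanging differentiation and expectation, which is where the integrable-envelope bounds (the paper's display \eqref{eq:uniform_psi}) are actually needed, so you cannot entirely skip that part of the paper's work. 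Neither issue is a real gap; both are routine repairs.
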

\begin{proof}[Proof of Lemma \ref{lem:naive_consistency}]
We first show that, for each $u \in V$, $m_u(\rvz;\vtheta)$ and $\|\vpsi_u(\rvz;\vtheta)\|_2$ are uniformly integrable over $\vtheta\in \sxi$.
By Taylor's expansion, for some $\eta_1,\eta_2\in[0,1]$,
\begin{align}
m_u(\rvz;\vtheta)& \le m_u(\rvz;\vtheta_u^*) +\| \vpsi_u(\rvz;\vtheta_u^*)\|_2 \|\vtheta - \vtheta_u^*\|_2 \notag\\ 
& +  \lambda_{\max}(\nabla \vpsi_u(\rvz;\vtheta_u^* + \eta_1(\vtheta - \vtheta_u^*)))  \|\vtheta - \vtheta_u^*\|_2^2; \notag\\
\|\vpsi_u(\rvz;\vtheta)\|_2&\le \| \vpsi_u(\rvz;\vtheta_u^*)\|_2 + \lambda_{\max}\big(\nabla \vpsi_u(\rvz;\vtheta_u^* + \eta_2(\vtheta - \vtheta_u^*))\big)  \|\vtheta - \vtheta_u^*\|_2.\notag
\end{align}
By part (\rone) of Condition \ref{con:distribution}, $\vpsi_u(\rvz;\vtheta_u^*)$ is sub-Gaussian, and thus $\max_{u\in V}\| \vpsi_u(\rvz;\vtheta_u^*)\|_2$ is integrable. By part (\rtwo) of Condition \ref{con:distribution}, $\nabla \vpsi_u(\rvz;\vtheta)$ is decomposable such that
\[
\nabla \vpsi_u(\rvz;\vtheta) \preceq \overline{h} \vf_u(\rvz)\vf_u(\rvz)^\T
\]
uniformly over $\vtheta\in\sxi$,
where $\vf_u(\rvz)\vf_u(\rvz)^\T$ is a sub-exponential matrix with parameter $(\mv,\alpha)$. By \citet[Lemma 6.12]{wainwright2019high}, and noting $\mathrm{tr}(\exp\{\mA\})\le p\exp\{\|\mA\|_2\}$ for any $p$-dimensional symmetric matrix $\mA$,
\begin{align}
P\biggl(\lambda_{\max}(\vf_u(\rvz)\vf_u(\rvz)^\T)\ge t\biggr)&\le \mathrm{tr}\biggl[\exp\biggl(\frac{\lambda^2\mv}{2(1-\alpha |\lambda|)}\biggr)\biggr]e^{-\lambda t} 
\le p \exp\biggl\{\frac{\lambda^2\sigma_{2,z}^2}{2(1-\alpha |\lambda|) }- \lambda t\biggr\}.\notag
\end{align}
Choosing $\lambda = t/(\sigma_{2,z}^2+\alpha t)\in (0,\alpha^{-1})$, and applying a union bound gives 
\[
P\biggl(\max_{i\in V}\biggl\{\lambda_{\max}(\vf_u(\rvz)\vf_u(\rvz)^\T)\biggr\}\ge t\biggr)\le p|V|\exp\biggl\{-\frac{t^2}{2(\sigma_{2,z}^2+\alpha t)}\biggr\}.
\]
Thus $\max_{u \in V}\sup_{\vtheta\in \sxi}\lambda_{\max}(\nabla \vpsi_u(\rvz;\vtheta))$ has the second moment.

We conclude that there exist integrable functions $\widetilde{m}(\cdot)$ and $\widetilde{\vpsi}(\cdot)$ such that 
\begin{align}
&\max_{u\in V}\sup_{\vtheta\in \sxi}|m_u(\cdot;\vtheta)|\le \widetilde{m}(\cdot)\label{eq:uniform_m},\\
&\max_{u\in V}\sup_{\vtheta\in \sxi}\big\|\vpsi_u(\cdot;\vtheta)\big\|_2\le \widetilde{\vpsi}(\cdot)\label{eq:uniform_psi}.
\end{align}
In fact, \eqref{eq:uniform_m} and \eqref{eq:uniform_psi}, together with the assumption that $m_u(\cdot;\vtheta)$ is twice differentiable with respect to $\vtheta$ implies that the set $\{m_u(\cdot,\vtheta),\vtheta\in{\Theta}\}$ is both Glivenko-Cantelli and Donsker, since $\big|m_u(\cdot;\vtheta_1)-m_u(\cdot;\vtheta_2)\big|\le \widetilde{\vpsi}(\cdot) \|\vtheta_1 -\vtheta_2\|_2$ \citep[see][Chapt. 19.2]{van2000asymptotic}. Thus,
\[
\sup_{\vtheta\in\sxi}|\widehat{M}_u(\vtheta_u)-M_u(\vtheta_u)|\rightarrow_{p}0.
\]
Besides, the strong convexity of $M_u(\cdot)$ implies that ${\sup}_{\vtheta:\|\vtheta-\vtheta_u^*\|\geq \varepsilon} M_u(\vtheta)< M_u(\vtheta^*_u)$ for every $0<\varepsilon\le \underline{\lambda}/(2L)$. Then by \citet[Theorem 5.7, 5.21]{van2000asymptotic} we obtain 
\begin{align}
& \max_{u \in V}\|\widehat{\vtheta}^{\loc}_u - \vtheta_u^* \|_2 \to_p 0. \notag\\
& \sqrt{n_u}(\widehat{\vtheta}_u^{\loc}-\vtheta^*_u)\rightarrow_{d}N_p\left(0, \rmh_u(\vtheta_u^*)^{-1}\msigma_u(\vtheta_u^*)\rmh_u(\vtheta_u^*)^{-\T}\right), \notag
\end{align}
as $n_u\to \infty$.
\end{proof}

\subsection{Proof of Main Theorems}
\begin{proof}[Proof of Theorem \ref{thm:main}]
Let $F(\mtheta)=|V|^{-1}\sum_{u\in V} \widehat{M}_u(\vtheta_u)+ \lambda R(\mD\mtheta)$. Under the assumption $\{\widehat{\vtheta}_u\}_{u\in V}\subset \sxi$, the strong convexity of $\widehat{M}_u(\vtheta_u)$ reveals that $\nabla F(\widehat{\mtheta}) = \0$ for $\widehat{\mtheta}=(\widehat{\vtheta}_u\colon u\in V)=\argmin_{\vtheta_u\in \sxi, u\in V} F(\mtheta)$. Taking derivative with respect to $\vtheta_u$, we obtain
\begin{align*}
\nabla_{\vtheta_u} F(\mtheta) &= |V|^{-1} \nabla\widehat{M}_u(\vtheta_u) +\lambda \sum_{e\in E: u=e^+} \nabla \phi(\vtheta_u - \vtheta_{e^-}) - \lambda \sum_{e\in E: u=e^-} \nabla \phi(\vtheta_{e^+} - \vtheta_{u}) \\
& =|V|^{-1} \nabla\widehat{M}_u(\vtheta_u) +\lambda \sum_{e\in E} \nabla \phi(\vtheta_{e^+} - \vtheta_{e^-}) \big(\1\{u = e^+\} - \1\{u = e^-\}\big).
\end{align*}
Thus, for any fixed $\widetilde{\mtheta}=(\widetilde{\vtheta}_u\colon u\in V)\in \mathbb{R}^{|V|\times p}$,
\begin{align*}
\tr (\nabla F(\widehat{\mtheta}) \widetilde{\mtheta}^\T ) & = 0 =
\sum_{u\in V} \widetilde{\vtheta}_u^\T \nabla_{\vtheta_u} F(\widehat{\mtheta})\\
& = |V|^{-1}\sum_{u\in V} \widetilde{\vtheta}_u^\T \nabla\widehat{M}_u(\widehat{\vtheta}_u) + \lambda \sum_{u\in V} \sum_{e\in E} \widetilde{\vtheta}_u^\T \nabla \phi(\vtheta_{e^+} - \vtheta_{e^-}) \big(\1\{u = e^+\} - \1\{u = e^-\}\big) \\
& = |V|^{-1}\sum_{u \in V} \widetilde{\vtheta}_u^\T \nabla\widehat{M}_u(\widehat{\vtheta}_u) + \lambda\ \tr\Big\{ \big[\nabla R(\mD\widehat{\mtheta})\big](\mD\widetilde{\mtheta})^\T \Big\},
\end{align*}
where $\mD\mtheta\equiv \{\vtheta_{e^+}-\vtheta_{e^-}: e\in E\}$, $R(\mD\mtheta) = \sum_{e\in E}\phi(\vtheta_{e^+}-\vtheta_{e^-})$, and  $\nabla R(\mD \mtheta) = \left. \nabla R(\mdelta)\right|_{\mdelta = \mD\mtheta} =(\nabla\phi(\vtheta_{e^+} - \vtheta_{e^-}): e\in E )$.

By part (3) and (5) of Lemma \ref{lem:Rnorm}, $\big|\tr\{ \nabla R(\mD\widehat{\mtheta})(\mD\widetilde{\mtheta})^\T \}\big| \le R(\mD\widetilde{\mtheta}) R^*(\nabla R(\mD\widehat{\mtheta})) \le R(\mD\widetilde{\mtheta})$. 
Then,
\begin{subequations}
\begin{align}
|V|^{-1}\sum_{u\in V}  -\widetilde{\vtheta}_u^\T \nabla\widehat{M}_u(\widehat{\vtheta}_u) &\le \lambda R(\mD\widetilde{\mtheta}),\label{eq:optimality1}\\
|V|^{-1}\sum_{u\in V} -\widehat{\vtheta}^\T_u \nabla\widehat{M}_u(\widehat{\vtheta}_u)  &= \lambda R(\mD\widehat{\mtheta}).\label{eq:optimality2}
\end{align}
\end{subequations}
Subtracting \eqref{eq:optimality2} from \eqref{eq:optimality1} gives
\begin{align}\label{eq:basic_ineq}
|V|^{-1}\sum_{u\in V}  - (\widetilde{\vtheta}_u - \widehat{\vtheta}_u)^\T \nabla\widehat{M}_u(\widehat{\vtheta}_u) &\le \lambda R(\mD\widetilde{\mtheta}) - \lambda R(\mD\widehat{\mtheta}).
\end{align}
Under Condition \ref{con:identifiability}, for each $u\in V$, $m_u(\rvz;\vtheta)$ is twice continuously differentiable within $\sxi$. The Taylor expansion of $\nabla\widehat{M}_u(\cdot)$ around $\vtheta_u^*$  gives 
\[
\nabla\widehat{M}_u(\widehat{\vtheta}_u) = \nabla\widehat{M}_u(\vtheta_u^*) + \widehat{\rmh}_u(\vtheta_u^* + \eta_u(\widehat{\vtheta}_u - \vtheta_u^*))(\widehat{\vtheta}_u - \vtheta_u^*)
\]
for some $\eta_u\in[0,1]$. For simplicity, let $\widehat{\rmh}_u(\eta_u)= \widehat{\rmh}_u(\vtheta_u^* + \eta_u(\widehat{\vtheta}_u - \vtheta_u^*))$. Plugging this expansion into \eqref{eq:basic_ineq}, we obtain 
\begin{align}\label{eq:basic_ineq2}
\frac{1}{|V|}&\sum_{u\in V}  (-\widetilde{\vtheta}_u + \widehat{\vtheta}_u)^\T \widehat{\rmh}_u(\eta_u)(\widehat{\vtheta}_u - \vtheta_u^*)\notag \\
&\le 
\frac{1}{|V|}\sum_{u\in V} (\widetilde{\vtheta}_u-\widehat{\vtheta}_u)^\T \nabla\widehat{M}_u(\vtheta_u^*)     + \lambda R(\mD\widetilde{\mtheta}) - \lambda R(\mD\widehat{\mtheta}).
\end{align}
Denote by $\mpi_{\ker(\mD)}$ the projection matrix of the kernel space of $\mD$. Observe that $\mpi_{\ker(\mD)} = \mD^{\dag} \mD$, where $\mD^{\dag}$ denotes the Moore-Penrose generalized inverse of $\mD$. By decomposing the identity matrix $\mI_{|V|}=\mI_{|V|} - \mpi_{\ker(\mD)} + \mpi_{\ker(\mD)}$, the Cauchy--Schwarz inequality, and part (\rthree) of Lemma \ref{lem:Rnorm}, we obtain
\begin{align*}
\sum_{u\in V}&(\widetilde{\vtheta}_u-\widehat{\vtheta}_u)^\T \nabla\widehat{M}_u(\vtheta_u^*) = \tr\left(\mpsi_n(\mtheta^*)(\widetilde{\mtheta}-\widehat{\mtheta})^\T\right)\\
&= \tr\left\{\mpsi_n(\mtheta^*)\Big[\big(\mpi_{\ker(\mD)}+ \mI_{|V|}-\mpi_{\ker(\mD)}\big)(\widetilde{\mtheta}-\widehat{\mtheta})\Big]^\T \right\}\\
&\le \|\widetilde{\mtheta}-\widehat{\mtheta}\|_F\|\mpi_{\ker(\mD)}\mpsi_n(\mtheta^*)\|_F + R(\mD(\widetilde{\mtheta}-\widehat{\mtheta}))R^*\{(\mD^{\dag})^\T\mpsi_n(\mtheta^*)\}.
\end{align*}
Choosing $\rho = \|\mpi_{\ker(\mD)}\mpsi_n(\mtheta^*)\|_F/\sqrt{|V|}$ and $\lambda = R^*\{(\mD^{\dag})^\T\mpsi_n(\mtheta^*)\}/\sqrt{|V|}$, \eqref{eq:basic_ineq2} turns out to be
\begin{equation}\label{eq:basic_ineq3}  
\begin{aligned}
\frac{1}{|V|}&\sum_{u\in V}  (-\widetilde{\vtheta}_u + \widehat{\vtheta}_u)^\T \widehat{\rmh}_u(\eta_u)(\widehat{\vtheta}_u - \vtheta_u^*) \le\\
&
\frac{\rho}{\sqrt{|V|}}\|\widetilde{\mtheta}-\widehat{\mtheta}\|_F+ \frac{\lambda}{\sqrt{|V|}}\left( R(\mD(\widetilde{\mtheta}-\widehat{\mtheta})) +  R(\mD\widetilde{\mtheta}) -  R(\mD\widehat{\mtheta})\right)\\
&\le \frac{\rho}{\sqrt{|V|}}\|\widetilde{\mtheta}-\widehat{\mtheta}\|_F
+ \frac{2\lambda}{\sqrt{|V|}}\left\{R[(\mD(\widetilde{\mtheta}-\widehat{\mtheta}))_T]+R[(\mD(\widetilde{\mtheta}))_{T^c}]\right\},
\end{aligned}
\end{equation}
where the last line comes from the triangular inequality of norms. Owing to Definition \ref{def:cf}, we have
\[
R[(\mD(\widetilde{\mtheta}-\widehat{\mtheta}))_T] \le \frac{\sqrt{|T|}}{\kappa_T(\mD)}\|\widetilde{\mtheta}-\widehat{\mtheta}\|_F.
\]
Substituting the above inequality into \eqref{eq:basic_ineq3} gives
\begin{align}
\frac{1}{|V|}\sum_{u\in V}  (-\widetilde{\vtheta}_u + \widehat{\vtheta}_u)^\T& \widehat{\rmh}_u(\eta_u)(\widehat{\vtheta}_u - \vtheta_u^*)  \le
\frac{2\lambda}{\sqrt{|V|}} R\big[(\mD(\widetilde{\mtheta}))_{T^c}\big]\notag \\
& + \frac{1}{\sqrt{|V|}}\|\widetilde{\mtheta}-\widehat{\mtheta}\|_F\left(
\rho + \frac{2\sqrt{|T|  }}{\kappa_T(\mD)}\lambda\right). \label{eq:basic_ineq4}
\end{align}
For any positive semi-definite matrix $\mA$ and two vectors $\va$ and $\vb$, we have
\[
2 \va^\T \mA \vb =  \va^\T \mA \va +  \vb^\T \mA \vb  -  ( \va-\vb)^\T \mA(\va-\vb).
\]
Viewing $\va= \widetilde{\vtheta}_u-\widehat{\vtheta}_u, \vb =  \vtheta_u^*-\widehat{\vtheta}_u $ and $\mA = \widehat{\rmh}_u(\eta_u)$, we obtain
\begin{align}
\frac{\lambda_{\min}(\widehat{\rmh}_u(\eta_u))}{|V|}&\|\widetilde{\mtheta}-\widehat{\mtheta}\|_F^2 + \frac{\lambda_{\min}(\widehat{\rmh}_u(\eta_u))}{|V|}\|\mtheta^*-\widehat{\mtheta}\|_F^2\le \frac{\lambda_{\max}(\widehat{\rmh}_u(\eta_u))}{|V|}\|\widetilde{\mtheta}-\mtheta^*\|_F^2\notag \\
&+  \frac{4\lambda}{\sqrt{|V|}} R\big[(\mD(\widetilde{\mtheta}))_{T^c}\big] 
+\frac{2}{\sqrt{|V|}}\|\widetilde{\mtheta}-\widehat{\mtheta}\|_F\left(
\rho + \frac{2\sqrt{|T|}}{\kappa_T(\mD)}\lambda\right),\label{eq:basic_ineq_5}
\end{align}
where $\lambda_{\max}(\widehat{\rmh}_u(\eta_u))$ denotes the largest eigenvalue of $\widehat{\rmh}_u(\eta_u)$, and $\lambda_{\min}(\widehat{\rmh}_u(\eta_u))$ denotes the smallest eigenvalue of $\widehat{\rmh}_u(\eta_u)$. 
By the Cauchy--Schwarz inequality, 
\[
\frac{2}{\sqrt{|V|}}\|\widetilde{\mtheta}-\widehat{\mtheta}\|_F\left(\rho + \frac{2\sqrt{|T|}}{\kappa_T(\mD)}\lambda\right)\le \frac{\lambda_{\min}(\widehat{\rmh}_u(\eta_u))}{|V|}\|\widetilde{\mtheta}-\widehat{\mtheta}\|_F^2 +  \frac{2}{\lambda_{\min}(\widehat{\rmh}_u(\eta_u))}\left(\rho^2 + \frac{4|T|}{\kappa_T^2(\mD)}\lambda^2\right).
\]
Plugging above inequality into \eqref{eq:basic_ineq_5}, and noting that by part (\rtwo) of Condition \ref{con:distribution}, $\kappa^{-1}\le \lambda_{\min}(\widehat{\rmh}_i(\eta_i))\le \lambda_{\max}(\widehat{\rmh}_i(\eta_i))\le \kappa$,
we have
\begin{equation*}
\begin{aligned}
\frac{1}{|V|}\|\widehat{\mtheta} - \mtheta^*\|_F^2&\le \inf_{\mtheta\in\mathbb{R}^{|V|\times p}}\left\{\frac{\kappa^2}{|V|}\|\mtheta-\mtheta^*\|_F^2 +  4\kappa\lambda R\big[(\mD\mtheta)_{T^c}\big] \right\}\\
&\quad \quad +2\kappa^2\left(\rho^2 + \frac{4|T|}{\kappa_T^2(\mD)}\lambda^2\right).
\end{aligned}   
\end{equation*}
Choosing $T=S$ gives the result.
\end{proof}

In the following, we provide a probabilistic result for $\rho^2$ and $\lambda^2$. 
\begin{proof}[Proof of Theorem \ref{thm:lam_rho}]
We first bound $\rho^2$ in probability. Easy to verify that $\ker(\mD)=\mathrm{span}\big\{\1\{\cc_1\},\ldots,\1\{\cc_{K(E)}\}\big\}$, where $\1\{\cc_i\}\in\mathbb{R}^{|V|}$ denotes the indicator vector of $\cc_i$ whose component in $\cc_i$ is $1$ and otherwise is $0$. By the property of $\mpi_{\ker(\mD)}$, we can write $\rho^2$ explicitly as follows:
\begin{equation}
\rho^2 = \frac{1}{|V|}\sum_{k=1}^{K(E)} \frac{1}{|\cc_k|} \Big\|\sum_{u\in\cc_k} \nabla\widehat{M}_u(\vtheta_u^*)\Big\|_2^2.
\end{equation}
By part (2) of Condition \ref{con:identifiability},  $\nabla\widehat{M}_u(\vtheta_u^*)$ is sub-Gaussian distributed with parameter $\sigma^2/n_u$. Moreover, $\nabla\widehat{M}_u(\vtheta_u^*)\ (u\in V)$ are independent from each other, and thus $\sum_{u\in\cc_k} \nabla\widehat{M}_u(\vtheta_u^*)$ is also sub-Gaussian distributed with parameter $C_1\sum_{u\in \cc_k}\sigma^2/n_u$, where $C_1$ is a universal constant \citep[see, e.g., Section 2.7 of][]{vershynin2018high}. As a consequence, centered $\big\|\sum_{u\in\cc_k} \nabla\widehat{M}_u(\vtheta_u^*)\big\|_2^2$ is sub-exponential distributed with parameter $C_2 p\sum_{u\in \cc_k}\sigma^2/n_u$, and centered $\rho^2$ is sub-exponential distributed with parameter $C_3\sigma^2  p K(E)/(|V|n)$,  where $C_2$ and $C_3$ are also universal constants and $n=\min_{u\in V}n_u$. 

It suffices to bound $E(\rho^2)$.
Under part (\rone) of Condition \ref{con:distribution} and by Jensen's inequality,
\[
\exp\{\sigma^{-2}\vv^\T \msigma_u(\vtheta_u^*) \vv\} =  \exp\{E[\sigma^{-2}(\vv^\T \vpsi_u(\rvz;\vtheta_u^*) )^2]\}\le E \exp\{(\vv^\T \vpsi_u(Z;\vtheta_u^*) )^2\sigma^{-2}\}\le 2.
\]
Thus, $\|\msigma_u(\vtheta_u^*)\|_2\le \sigma^2 \log 2$. Note that
\[
E\big(\|\vpsi_u(\rvz^{(u)};\vtheta_u^*)\|_2^2\big)\le E [\tr(\{\vpsi_u(\rvz^{(u)};\vtheta_u^*)\}^\T \vpsi_u(\rvz^{(u)};\vtheta_u^*))] \le p \|\msigma_u(\vtheta_u^*)\|_2.
\]
In light of the independence among $\nabla\widehat{M}_u(\vtheta_u^*), u\in V$ and the fact $E(\nabla\widehat{M}_u(\vtheta_u^*)(\vtheta_u^*)) = 0$, we further obtain
\begin{align*}
E(\rho^2)&= \frac{1}{|V|}\sum_{k=1}^{K(E)} \frac{1}{|\cc_k|} E\Big(\big\|\sum_{u\in\cc_k} \nabla\widehat{M}_u(\vtheta_u^*)\big\|_2^2\Big)\\
& = \frac{1}{|V|}\sum_{k=1}^{K(E)} \frac{1}{|\cc_k|} \sum_{u\in\cc_k} E \Big(\big\| \nabla\widehat{M}_u(\vtheta_u^*)\big\|_2^2\Big)
\\
&= \frac{1}{|V|}\sum_{k=1}^{K(E)} \frac{1}{|\cc_k|} \sum_{u\in\cc_k} \frac{1}{n_u}E(\|\vpsi_u(\rvz;\vtheta_u^*)\|_2^2)\le  C_4 \sigma^2 \frac{p K(E)}{|V| n},
\end{align*}
where $C_4=(\max_{u\in V}n_u \log 2)/(\min_{u\in V}n_u)$. By the sub-exponential tail,  we have
\begin{equation}\label{size:rho}
P\left(\rho^2-E(\rho^2) > t \right)
\leq \exp\left\{-\frac{|V|n t}{C_5 \sigma^2 p K(E)}\right\},
\end{equation}
for $t > C_3\sigma^2  p K(E)/(|V|n)$ and some constant $C_5$ only depending on $C_3$. Choosing $t=\max\{C_3, C_5\} \log(1/\xi)\sigma^2 p K(E)/(|V|n)$,
it holds with probability at least $1-\xi$ that
\[
\rho^2\leq C_4 \sigma^2 \frac{p K(E)}{|V| n} + \max\{C_3,C_5\}\sigma^2\frac{p K(E)\log(1/\xi)}{|V|n}\le C_{\rho}\sigma^2\frac{p K(E)\log(1/\xi)}{n|V|}
\]
for some constant $C_{\rho}$ depending only on $C_3$--$C_5$.

We next bound $\lambda = R^*\{(\mD^{\dag})^\T\mpsi_n(\mtheta^*)\}/\sqrt{K}$. Denote the $j$-th row of $(\mD^{\dag})^\T$ by $\vs_j^\T, 1\le j\le |E|$. In light of Lemma \ref{lem:Rnorm}, we write
\[
R^*\{(\mD^{\dag})^\T\mpsi_n(\mtheta^*)\} = \max_{j=1,\dots,|E|} \phi^*( \{\mpsi_n(\mtheta^*)\}^\T \vs_j).
\]
For each $j$, $ \{\mpsi_n(\mtheta^*)\}^\T \vs_j$ is sub-Gaussian distributed with parameter $\sum_{u \in V} s_{j u}^2 \sigma^2/n_u\leq C_6\sigma^2(\gamma_G^2 n)^{-1}$ for some universal constant $C_6$. If $\phi^*$ is the supreme norm, by the maximal inequality of sub-Gaussian distributions, it holds with probability at least $1-\xi$ that
\begin{equation}\label{size:lam}
\lambda^2 \leq \left(\frac{C_\lambda\sigma^2}{\gamma_G^2}\right)\frac{p\log(|E|/\xi)}{n |V|},
\end{equation}
for some constant $C_{\lambda}$ only depending on $C_6$.


Substituting \eqref{size:rho} and \eqref{size:lam} into \eqref{thm_main} , we obtain that with probability greater than $1-2\xi$,
\begin{equation*}
\frac{1}{|V|}\|\widehat{\mtheta}-\mtheta^*\|_F^2\le 2\kappa^2 \left\{C_\rho\sigma^2\frac{K(E) p \log(1/\xi)}{n |V|}  + \biggl(\frac{4C_\lambda\sigma^2}{\kappa_0\gamma_G^2}\biggr)\frac{p|S|\log(|E|/\xi)}{n |V|}\right\}.
\end{equation*}
\end{proof}

In the sequel, we provide proofs of theorems and propositions in Section \ref{sec:edge_selection}.
\begin{proof}[Proof of Proposition \ref{prop:optimal_graph_mht}]

By definition, $K(E)$ denotes the number of connected components of $G=(V,E)$. For any $E_1\subset E_2$, we have 
$$K(E_2)\le K(E_1)\le K(E_2) + |E_2\setminus E_1|.$$
For any $\widetilde{E}\subset E$, we can decompose $\widetilde{E}$ into two disjoint sets $\widetilde{E}\cap E_0$ and $\widetilde{E}\setminus E_0$. 
Thus, $K(\widetilde{E}\cap E_0)\ge K(E\cap E_0)$, and $K(\widetilde{E})\ge K(\widetilde{E}\cap E_0) - |\widetilde{E}\setminus E_0|$, since $\widetilde{E}\setminus(\widetilde{E}\cap E_0)=\widetilde{E}\setminus E_0$. This implies that
\begin{align*}
K(\widetilde{E}) + |\widetilde{E}\setminus E_0|\ge K(\widetilde{E}\cap E_0)\ge K(E\cap E_0), 
\end{align*}
for all $\widetilde{E}\subset E$. We conclude that $E\cap E_0$ is one of the minimizers of \eqref{eq:op_graph}.

The first inequality of \eqref{eq:upper_bound_c} is due to the optimality of $E\cap E_0$. The second inequality can be proved by noting
\begin{align*}
K(\widetilde{E})&\le K(\widetilde{E}\cap E_0)\\
&\le K(E\cap E_0) + |(E\cap E_0)\setminus (\widetilde{E}\cap E_0)|\\
&\le K(E\cap E_0)  + |(E\cap E_0)\setminus \widetilde{E}|.
\end{align*}

\end{proof}

\begin{proposition}\label{prop:asymptotic_normality}
Under Condition \ref{con:identifiability}, part (\rone,\rthree) of Condition \ref{con:distribution}, and Condition \ref{con:2ndbartlett}, it holds for each $\vtheta_{e^+}^*=\vtheta_{e^-}^*$ that 
\[
\big(\widehat{\vtheta}_{e^+}^{\loc} - \widehat\vtheta_{e^{-}}^{\loc}\big)^{\T}\big(\widehat{\mupsilon}_{e^+} + \widehat{\mupsilon}_{e^-}\big)^{-1}\big(\widehat{\vtheta}_{e^+}^{\loc} - \widehat\vtheta_{e^{-}}^{\loc}\big)\to_d \chi_p^2,
\]
where for all $u\in V$, $\widehat{\mupsilon}_u=\{n_u \widehat{\rmh}_u(\widehat{\vtheta}_u^{\loc})\}^{-1}$ denotes the asymptotic variance of $\widehat{\vtheta}_u^\loc$. 
\end{proposition}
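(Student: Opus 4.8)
The plan is to reduce the claim to a standard sandwich argument for $M$-estimators, with a data-splitting twist coming from the two independent devices. First I would record the one-device building block: by Lemma~\ref{lem:naive_consistency} together with Condition~\ref{con:2ndbartlett} (so that $\msigma_u(\vtheta_u^*)=\rmh_u(\vtheta_u^*)$), one has $\sqrt{n_u}(\widehat{\vtheta}_u^{\loc}-\vtheta_u^*)\to_d N_p(0,\rmh_u(\vtheta_u^*)^{-1})$ for every $u\in V$; more precisely, combining the first-order condition $\nabla\widehat{M}_u(\widehat{\vtheta}_u^{\loc})=\0$ with a Taylor expansion, the consistency $\widehat{\vtheta}_u^{\loc}\to_p\vtheta_u^*$ from Lemma~\ref{lem:naive_consistency}, the Lipschitz property in Condition~\ref{con:identifiability}, and the uniform convergence $\sup_{\vtheta}\|\widehat{\rmh}_u(\vtheta)-\rmh_u(\vtheta)\|_2\to_p 0$ from Lemma~\ref{lem:conc_emp_hes}, I obtain the linear representation
\[
\widehat{\vtheta}_u^{\loc}-\vtheta_u^* = -\rmh_u(\vtheta_u^*)^{-1}\,\nabla\widehat{M}_u(\vtheta_u^*) + o_p(n_u^{-1/2}),\qquad \nabla\widehat{M}_u(\vtheta_u^*)=\frac1{n_u}\sum_{k=1}^{n_u}\vpsi_u(\rvz_k^{(u)};\vtheta_u^*),
\]
where $\nabla\widehat{M}_u(\vtheta_u^*)$ is mean zero with covariance $n_u^{-1}\msigma_u(\vtheta_u^*)=n_u^{-1}\rmh_u(\vtheta_u^*)$.

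Next I would form the difference. Fix $e$ with $\vtheta_{e^+}^*=\vtheta_{e^-}^*$, put $n_e=\min\{n_{e^+},n_{e^-}\}$ and $\bm{W}_e^*=\{n_{e^+}\rmh_{e^+}(\vtheta_{e^+}^*)\}^{-1}+\{n_{e^-}\rmh_{e^-}(\vtheta_{e^-}^*)\}^{-1}$. Subtracting the two representations and using $\vtheta_{e^+}^*=\vtheta_{e^-}^*$,
\[
\widehat{\vtheta}_{e^+}^{\loc}-\widehat{\vtheta}_{e^-}^{\loc} = -\rmh_{e^+}(\vtheta_{e^+}^*)^{-1}\nabla\widehat{M}_{e^+}(\vtheta_{e^+}^*) + \rmh_{e^-}(\vtheta_{e^-}^*)^{-1}\nabla\widehat{M}_{e^-}(\vtheta_{e^-}^*) + o_p(n_e^{-1/2}).
\]
Because the two devices use disjoint samples the two leading terms are independent, so the leading term has exact covariance $\bm{W}_e^*$ (Condition~\ref{con:2ndbartlett} enters once more here). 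Hence $(\bm{W}_e^*)^{-1/2}(\widehat{\vtheta}_{e^+}^{\loc}-\widehat{\vtheta}_{e^-}^{\loc})$ is a standardized sum of independent mean-zero vectors with identity covariance plus a remainder of operator-norm order $\|(\bm{W}_e^*)^{-1/2}\|\,o_p(n_e^{-1/2})$; since $\lambda_{\min}(\bm{W}_e^*)\ge (n_e\overline{\lambda})^{-1}$ by the eigenvalue bound in Condition~\ref{con:identifiability}, this remainder is $O(n_e^{1/2})\,o_p(n_e^{-1/2})=o_p(1)$. A multivariate Lindeberg--Feller CLT — whose Lindeberg (indeed Lyapunov) condition is supplied by the sub-Gaussian tail in part~(\rone) of Condition~\ref{con:distribution} — then gives $(\bm{W}_e^*)^{-1/2}(\widehat{\vtheta}_{e^+}^{\loc}-\widehat{\vtheta}_{e^-}^{\loc})\to_d N_p(0,\mi_p)$.

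The third ingredient is consistency of the plug-in precision: I would show $(\bm{W}_e^*)^{1/2}(\widehat{\mupsilon}_{e^+}+\widehat{\mupsilon}_{e^-})^{-1}(\bm{W}_e^*)^{1/2}\to_p\mi_p$. By Lemma~\ref{lem:conc_emp_hes}, the Lipschitz continuity of $\rmh_u$, and $\widehat{\vtheta}_u^{\loc}\to_p\vtheta_u^*$ one gets $\widehat{\rmh}_u(\widehat{\vtheta}_u^{\loc})=\rmh_u(\vtheta_u^*)+o_p(1)$, and since the eigenvalues of $\rmh_u(\vtheta_u^*)$ are bounded away from zero the continuous mapping theorem yields $\widehat{\rmh}_u(\widehat{\vtheta}_u^{\loc})^{-1}=\rmh_u(\vtheta_u^*)^{-1}+o_p(1)$; as $\widehat{\mupsilon}_u=n_u^{-1}\widehat{\rmh}_u(\widehat{\vtheta}_u^{\loc})^{-1}$ carries the same weights $n_u^{-1}$ as $\bm{W}_e^*$, the perturbation $\widehat{\mupsilon}_{e^+}+\widehat{\mupsilon}_{e^-}-\bm{W}_e^*$ is $o_p(n_e^{-1})$ while $\lambda_{\min}(\bm{W}_e^*)\asymp n_e^{-1}$, giving the claim. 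Writing the quadratic form as
\[
\bigl[(\bm{W}_e^*)^{-1/2}(\widehat{\vtheta}_{e^+}^{\loc}-\widehat{\vtheta}_{e^-}^{\loc})\bigr]^{\T}\bigl[(\bm{W}_e^*)^{1/2}(\widehat{\mupsilon}_{e^+}+\widehat{\mupsilon}_{e^-})^{-1}(\bm{W}_e^*)^{1/2}\bigr]\bigl[(\bm{W}_e^*)^{-1/2}(\widehat{\vtheta}_{e^+}^{\loc}-\widehat{\vtheta}_{e^-}^{\loc})\bigr],
\]
Slutsky's theorem shows it converges in distribution to $Z^{\T}Z$ with $Z\sim N_p(0,\mi_p)$, i.e.\ to $\chi_p^2$. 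The step I expect to be the main obstacle is the bookkeeping forced by the possibly divergent ratio $n_{e^+}/n_{e^-}$: because the standardizing matrix $\bm{W}_e^*$ depends on $n$, this is genuinely a triangular-array problem and one cannot quote a fixed-covariance CLT; the remainders in both the linear representation and the variance estimator must be controlled in operator norm against $\lambda_{\min}(\bm{W}_e^*)^{-1}\asymp n_e$, and the symmetric-square-root manipulations plus the joint convergence needed for Slutsky are where care is required. The remaining pieces — the CLT itself and the consistency of $\widehat{\rmh}_u(\widehat{\vtheta}_u^{\loc})$ — are routine given Lemmas~\ref{lem:naive_consistency} and~\ref{lem:conc_emp_hes}.
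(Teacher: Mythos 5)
Your proposal is correct and follows essentially the same route as the paper: asymptotic normality of each local estimator via Lemma \ref{lem:naive_consistency} with Condition \ref{con:2ndbartlett} collapsing the sandwich variance, consistency of the plug-in Hessian via Lemma \ref{lem:conc_emp_hes} and the Lipschitz property, independence of the two devices, and Slutsky's theorem to conclude the $\chi_p^2$ limit. Your extra care with the $n$-dependent standardizing matrix $\bm{W}_e^*$ and the possibly divergent ratio $n_{e^+}/n_{e^-}$ is a point the paper's proof passes over silently, but it does not change the argument's structure.
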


\begin{proof}[Proof of Proposition \ref{prop:asymptotic_normality}]
By Lemma \ref{lem:naive_consistency} and $\rmh_u(\vtheta_u^*)^{-1}\msigma_u(\vtheta_u^*)\rmh_u(\vtheta_u^*)^{-\T}=\rmh_u(\vtheta_u^*)^{-1}$ by assumption
\[
\sqrt{n_u}(\widehat{\vtheta}_u^{\loc}-\vtheta^*_u)\rightarrow_{d}N_p\left(0, \rmh_u(\vtheta_u^*)^{-1}\right).
\]
By Lemma \ref{lem:conc_emp_hes}, we obtain
\[
\|\widehat{\rmh}_u(\widehat{\vtheta}_u^{\loc}) - \rmh_u(\widehat{\vtheta}_u^{\loc})\|_2\to_p 0.
\]
Moreover, by Condition \ref{con:identifiability}, $\rmh_u(\cdot)$ is Lipschitz continuous at $\vtheta_u^*$ in operator norm. Noting $\widehat{\vtheta}_u^{\loc}\to_p \vtheta_u^*$, by the continuous mapping theorem \citep[Theorem 2.3]{van2000asymptotic}, we have
\begin{align}\label{eq:consit_H}
\|{\rmh}_u(\widehat{\vtheta}_u^{\loc}) - \rmh_u(\vtheta_u^*)\|_2\to_p 0.
\end{align}
By Slutsky's theorem, we obtain
\begin{align}\label{eq:anthetau}
\sqrt{n_u}\{{\rmh}_u(\widehat{\vtheta}_u^{\loc})\}^{1/2}(\widehat{\vtheta}_u^{\loc} - \vtheta_u^*) \to_d N_p(0, \mI_p),
\end{align}
as $n_u\to \infty$. Recall that $\widehat{\mupsilon}_u=\{n_u \widehat{\rmh}_u(\widehat{\vtheta}_u^{\loc})\}^{-1}$ for each $u\in V$. For each $e\in E$ such that $\vtheta_{e^+}^*=\vtheta_{e^-}^*$, noting that $\widehat\vtheta_{e^+}$ and $\widehat\vtheta_{e^-}$ are independent, it further holds that
\[
\big(\widehat{\mupsilon}_{e^+} + \widehat{\mupsilon}_{e^-}\big)^{-1/2} \big(\widehat{\vtheta}_{e^+}^{\loc} - \widehat\vtheta_{e^{-}}^{\loc}\big) \to_d N_p(\0,\mI_p),
\]
owing to continuous mapping theorem, and our result follows.
\end{proof}

\begin{proof}[Proof of Theorem \ref{thm:sel_con}]
Consider the event $\{\widehat{E} = E\cap E_0\}$. Let $A = \{|\widehat{\rcw}_e|^2\le \chi_{p}^2(\alpha/|E|)\mathrm{\ for\ all \ }e\in E\cap E_0\}$ and $B = \{|\widehat{\rcw}_e|^2 > \chi_{p}^2(\alpha/|E|)\mathrm{\ for\ all \ }e\in E\setminus E_0\}$. Noting that $\{\widehat{E} = E\cap E_0\} = A\cap B$, by the union bound, we obtain
\begin{align}
P(\widehat{E} = E\cap E_0)& = P(A\cap B) \ge 1 - P(A^c) - P(B^c)\notag\\
&\ge 1 - |E\cap E_0|\max_{e\in E\cap E_0} P\left(|\widehat{\rcw}_e|^2 > \chi_{p}^2(\alpha/|E|)\right)\notag\\
&\quad \quad - |E\setminus E_0|\max_{e\in E\setminus E_0}P\left(|\widehat{\rcw}_e|^2\le \chi_{p}^2(\alpha/|E|)\right). \label{eq:union_bound_sel}
\end{align}
By Proposition \ref{prop:asymptotic_normality}, $\widehat{\rcw}_e^2\to_d \chi^2_p$ for any $e\in E\cap E_0$; then
\begin{align}\label{eq:false_discovery}
\lim_{n\to\infty}|E\cap E_0|P\left(|\widehat{\rcw}_e|^2> \chi_{p}^2(\alpha/|E|)\right) = \frac{\alpha |E\cap E_0|}{|E|}.
\end{align}
For $e\in E\setminus E_0$, by \eqref{eq:anthetau}, it holds that
\begin{align}\label{eq:test_stat}
\big(\widehat{\mupsilon}_{e^+} + \widehat{\mupsilon}_{e^-}\big)^{-1/2}\big(\widehat{\vdelta}_{e}^{\loc} - \vdelta_e^*\big) \to_d N_p(\0,\mI_p), \quad\widehat{\rca}_e^2\to_d \chi^2_p,
\end{align}
where $\widehat\rca_e^2= \big(\widehat{\vdelta}_{e}^{\loc} - \vdelta_e^*\big)^{\T}\big(\widehat{\mupsilon}_{e^+} + \widehat{\mupsilon}_{e^-}\big)^{-1}\big(\widehat{\vdelta}_{e}^{\loc} - \vdelta_e^*\big)^{\T}$, $\widehat{\vdelta}_{e}^{\loc}=\widehat{\vtheta}_{e^+}^{\loc} - \widehat\vtheta_{e^{-}}^{\loc}$ and $\vdelta_e^*=\vtheta_{e^+}^* - \vtheta_{e^-}^*$. Let $\widehat\rcb_e=\big\{(\vdelta_e^*)^{\T}\big(\widehat{\mupsilon}_{e^+} + \widehat{\mupsilon}_{e^-}\big)^{-1}\vdelta_e^*\big\}^{1/2}$.
By \eqref{eq:consit_H}, $n_{e^+}\widehat{\mupsilon}_{e^+}\to_{p} \mupsilon_{e^+}^*, n_{e^-}\widehat{\mupsilon}_{e^-}\to_{p} \mupsilon_{e^+}^*$, which gives
\[
n_e^{-1/2}\widehat\rcb_e=\big\{(\vdelta_e^*)^{\T}\big(n_e\widehat{\mupsilon}_{e^+} + n_e\widehat{\mupsilon}_{e^-}\big)^{-1}\vdelta_e^*\big\}^{1/2}\to_p \mathrm{dist}(\vtheta_{e^+}^*,\vtheta_{e^-}^*),
\]
where $n_e=\min\{n_{e^+},n_{e^-}\}$.
Under $H_{1,e}$, by triangular inequality, $\widehat{\rcw}_e^2\ge \big(\widehat\rcb_e-\widehat\rca_e\big)^2\ge \widehat\rcb_e^2/2 - \widehat\rca_e^2$; then
\begin{align}
&P\left(\widehat{\rcw}_e^2\le \chi_{p}^2(\alpha/|E|)\right)\le  P\left(\widehat{\rca}_e^2\ge \widehat\rcb_e^2/2 - \chi_{p}^2(\alpha/|E|)\right)\notag \\
& \le P\left(\widehat{\rca}_e^2\ge \biggl(\sqrt{n_e}\mathrm{dist}(\vtheta_{e^+}^*,\vtheta_{e^-}^*)\frac{\widehat\rcb_e}{\sqrt{n_e}\mathrm{dist}(\vtheta_{e^+}^*,\vtheta_{e^-}^*)}\biggr)^{2}\frac{1}{2} - \chi^2_p(\alpha/|E|)\right)\notag\\
& \le P\left(\widehat{\rca}_e^2\ge \biggl(2\sqrt{\chi^2_p(\alpha/|E|)}\frac{\widehat\rcb_e}{\sqrt{n_e}\mathrm{dist}(\vtheta_{e^+}^*,\vtheta_{e^-}^*)}\biggr)^{2}\frac{1}{2} - \chi^2_p(\alpha/|E|)\right)\to \frac{\alpha}{|E|}.\label{eq:false_null}
\end{align}
Combining \eqref{eq:union_bound_sel}, \eqref{eq:false_discovery}, and \eqref{eq:false_null}, we then conclude that
\[
\liminf_{n\to\infty} P\left(\widehat{E} = E\cap E_0\right) \ge 1 - \alpha.
\]
\end{proof}

\section{Proofs for Section \ref{sec:optim}}
In the sequel, we may use $\langle \va,\vb\rangle$ to represent $\va^\T \vb$ for two vectors $\va,\vb\in \mathbb{R}^p$. 
For notational simplicity, we write $\widetilde{\rvg}_i(t) = |\cB_i(t)|^{-1}\sum_{b\in \cB_i(t)}\vpsi_{i}(\rvz_b^{(i)};\vtheta_i(t))$ as the unbiased gradient estimator on device $i$ in the $t$-th iteration. Define $\rvd_i(t+1)\equiv \nabla\widehat{M}_i(\vtheta_i(t)) - \widetilde{\rvg}_i(t)$ as the difference between the gradient estimator and the true gradient. Following the proof technique of \citet{ouyang2013stochastic},
we define an auxiliary function $G(\mtheta,\widetilde{\mtheta},\mbeta,\malpha;\widehat\mtheta,\widehat\mbeta,\widehat\malpha)$ as
\begin{align*}
G(\mtheta,\widetilde{\mtheta},\mbeta,\malpha;\widehat\mtheta,\widehat\mbeta,\widehat\malpha) = \sum_{u\in V}&\left(\widehat{M}_u(\vtheta_u) - \widehat{M}_u(\widehat\vtheta_u)\right) + \lambda \sum_{(i,j)\in E}\left[\phi(  \vbeta_{i j} - \vbeta_{j i} )  - \phi(  \widehat\vbeta_{i j}  - \widehat\vbeta_{j i})\right] \\
& + \sum_{(i,j)\in E} \biggl(\langle  - \widehat\valpha_{i j},\widetilde{\vtheta}_i - \vbeta_{i j}\rangle + \langle  - \widehat\valpha_{j i},\widetilde{\vtheta}_j - \vbeta_{j i}\rangle\biggr),
\end{align*}
where $\mtheta=(\vtheta_i; i\in V)$, $\widetilde{\mtheta}=(\widetilde{\vtheta}_i; i\in V)$, $\mbeta=(\vbeta_{i j},\vbeta_{j i}; (i,j)\in E)$, $\malpha=(\valpha_{i j},\valpha_{j i}; (i,j)\in E)$, and $\widehat{\mtheta},\widehat{\mbeta}$ as well as $\widehat{\malpha}$ denotes the global minimizer of \eqref{eq:ob}.
We first introduce several useful lemmas.
\subsection{Technical Lemmas}\label{sec:tl_opt}
Denote by $D_f(\vx,\vy)=f(\vx) - f(\vy) -(\vx - \vy)^\T \nabla f(\vy) $ the Bregman divergence of $f(\cdot)$, where $f(\cdot)$ is some convex differentiable function with gradient being $\nabla f(\cdot)$. 
\begin{lemma}\label{lem:inner_to_norm}
For any $\va,\vb,\vc,\vd\in\mathbb{R}^p$, we have
\begin{align}
& 2(\va-\vb)^\T (\va-\vc) =\|\va-\vb\|_2^2 + \|\va-\vc\|_2^2 - \|\vb-\vc\|_2^2,\label{eq:three_var}\\
& 2( \va-\vb)^\T(\vc-\vd) = \|\va-\vd\|_2^2 - \|\va-\vc\|_2^2 + \|\vb-\vc\|_2^2 - \|\vb-\vd\|_2^2.\label{eq:four_var}
\end{align}
\end{lemma}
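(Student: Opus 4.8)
The plan is to establish both \eqref{eq:three_var} and \eqref{eq:four_var} by direct expansion; these are the familiar \emph{three-point} and \emph{four-point polarization} identities for the Euclidean inner product, and the only tools needed are the bilinearity of $\langle\cdot,\cdot\rangle$ together with the elementary expansion $\|\vx-\vy\|_2^2=\|\vx\|_2^2-2\langle\vx,\vy\rangle+\|\vy\|_2^2$.

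For \eqref{eq:three_var} I would write $\vb-\vc=(\va-\vc)-(\va-\vb)$ and expand the squared norm of the right-hand side, obtaining
\[
\|\vb-\vc\|_2^2=\|\va-\vc\|_2^2-2(\va-\vb)^\T(\va-\vc)+\|\va-\vb\|_2^2,
\]
which rearranges immediately to the claimed identity. Equivalently, one may expand all three squared norms coordinatewise and cancel the pure-square terms $\|\va\|_2^2$, $\|\vb\|_2^2$, $\|\vc\|_2^2$.

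For \eqref{eq:four_var} I would apply the difference-of-squares factorization $\|\vx\|_2^2-\|\vy\|_2^2=\langle\vx+\vy,\vx-\vy\rangle$ twice. Taking $\vx=\va-\vd$ and $\vy=\va-\vc$ gives $\|\va-\vd\|_2^2-\|\va-\vc\|_2^2=(2\va-\vc-\vd)^\T(\vc-\vd)$, and taking $\vx=\vb-\vc$ and $\vy=\vb-\vd$ gives $\|\vb-\vc\|_2^2-\|\vb-\vd\|_2^2=(2\vb-\vc-\vd)^\T(\vd-\vc)=-(2\vb-\vc-\vd)^\T(\vc-\vd)$. Adding the two expressions, the terms involving $\vc+\vd$ cancel and what remains is $(2\va-2\vb)^\T(\vc-\vd)=2(\va-\vb)^\T(\vc-\vd)$, as desired. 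As a fully routine alternative, both sides of each identity are quadratic in $(\va,\vb,\vc,\vd)$, so expanding every squared norm into the monomials $\|\va\|_2^2,\dots,\langle\va,\vb\rangle,\dots$ and matching coefficients also closes the argument.

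There is essentially no obstacle here: the statement is elementary linear algebra, and the only thing to watch is the sign bookkeeping in \eqref{eq:four_var}, i.e., keeping track of whether a factor reads $\vc-\vd$ or $\vd-\vc$. I note these two identities are the algebraic workhorses used later to rewrite the cross terms produced by the updates \eqref{eq:ADMM_primal}, \eqref{eq:update_beta}, and \eqref{eq:update_alpha} in the proof of Theorem \ref{thm:conv_alg_admm}.
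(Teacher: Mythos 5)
Your proof is correct: both verifications are sound, and the sign bookkeeping in \eqref{eq:four_var} checks out (the two applications of $\|\vx\|_2^2-\|\vy\|_2^2=\langle\vx+\vy,\vx-\vy\rangle$ combine to give exactly $2(\va-\vb)^\T(\vc-\vd)$). The paper itself states this lemma without any proof, treating it as an elementary polarization identity, so your direct expansion supplies precisely what is omitted and nothing more needs to be said.
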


\begin{lemma}[\cite{ouyang2013stochastic}]\label{lem:bregman_div_0}
Let $f(\cdot):\mathbb{R}^p\mapsto \mathbb{R}$ be some convex differentiable function with gradient $\nabla f(\cdot)$. For any $\vy\in \mathbb{R}^p$, and
\[ \vx_* =\argmin_{\vx\in\mathbb{R}^p} f(\vx) + \kappa D_f(\vx, \vy),  \]
we have
\begin{align}
\{\nabla f(\vx_*)\}^\T (\vx_* - \vx) \le \kappa\{D_f(\vx,\vy) - D_f(\vx,\vx_*) - D_f(\vx_*, \vy) \}
\end{align}
\end{lemma}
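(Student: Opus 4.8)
\textbf{Proof proposal for Lemma \ref{lem:bregman_div_0}.}

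The plan is to exploit the first-order optimality condition at the unconstrained minimizer $\vx_*$ together with the algebraic identity that relates inner products of gradients to Bregman divergences. First I would write down the optimality condition: since $\vx_*$ minimizes the smooth convex function $\vx\mapsto f(\vx) + \kappa D_f(\vx,\vy)$ over all of $\mathbb{R}^p$, its gradient vanishes there. Using $\nabla_{\vx} D_f(\vx,\vy) = \nabla f(\vx) - \nabla f(\vy)$, this gives
\[
\nabla f(\vx_*) + \kappa\bigl(\nabla f(\vx_*) - \nabla f(\vy)\bigr) = \0,
\]
equivalently $(1+\kappa)\nabla f(\vx_*) = \kappa \nabla f(\vy)$. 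The key step is then to take the inner product of $\nabla f(\vx_*)$ with $\vx_* - \vx$ for an arbitrary $\vx$ and substitute this relation, rewriting everything in terms of $\nabla f(\vy)$ and $\nabla f(\vx_*)$.

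The main work is the bookkeeping identity. I would express
\[
\{\nabla f(\vx_*)\}^\T(\vx_* - \vx)
\]
by splitting it, using the optimality relation, into a combination of terms of the form $\{\nabla f(\vy)\}^\T(\vx_* - \vx)$ and $\{\nabla f(\vx_*)\}^\T(\vx_* - \vx)$. The crucial observation — really just the three-point identity for Bregman divergences — is that for any convex differentiable $f$ and any points $\va,\vb,\vc$,
\[
\{\nabla f(\vb) - \nabla f(\vc)\}^\T(\va - \vb) = D_f(\va,\vc) - D_f(\va,\vb) - D_f(\vb,\vc),
\]
which follows directly by expanding the three Bregman divergences according to their definition $D_f(\vx,\vy)=f(\vx)-f(\vy)-(\vx-\vy)^\T\nabla f(\vy)$ and cancelling. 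Applying this with $\va=\vx$, $\vb=\vx_*$, $\vc=\vy$ (after the appropriate sign flip, since we want $\vx_* - \vx$) converts the gradient difference $\nabla f(\vx_*) - \nabla f(\vy)$, which by optimality equals $-\kappa^{-1}\nabla f(\vx_*)$, into the stated combination of divergences. Collecting the resulting coefficients and dividing through by the scalar factor yields exactly
\[
\{\nabla f(\vx_*)\}^\T(\vx_* - \vx) \le \kappa\bigl\{D_f(\vx,\vy) - D_f(\vx,\vx_*) - D_f(\vx_*,\vy)\bigr\},
\]
where convexity of $f$ (hence nonnegativity of each Bregman divergence) is used only to discard a leftover nonnegative term and turn an equality into the claimed inequality.

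I expect no genuine obstacle here: the lemma is a standard "prox-lemma" and the argument is purely mechanical once the optimality condition and the three-point Bregman identity are in hand. The one place requiring care is tracking signs and the factor $(1+\kappa)$ versus $\kappa$ when substituting the optimality relation, and making sure the discarded term — which will be a positive multiple of $D_f(\vx_*,\vy)$ or $D_f(\vx,\vx_*)$ — is indeed the one that makes the inequality go in the stated direction. Since the paper attributes this to \citet{ouyang2013stochastic}, it would also be legitimate simply to cite that source, but the self-contained two-line derivation above is just as short.
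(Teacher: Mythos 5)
Your argument is correct, and it is worth noting that the paper itself supplies no proof of this lemma at all --- it is stated with a bare citation to \citet{ouyang2013stochastic} and used as a black box in the proof of Lemma \ref{lem:bregman_div_theta}. Your self-contained derivation is the right one: the unconstrained first-order condition gives $(1+\kappa)\nabla f(\vx_*)=\kappa\nabla f(\vy)$, i.e.\ $\nabla f(\vx_*)=\kappa\{\nabla f(\vy)-\nabla f(\vx_*)\}$, and the three-point identity $\{\nabla f(\vb)-\nabla f(\vc)\}^\T(\va-\vb)=D_f(\va,\vc)-D_f(\va,\vb)-D_f(\vb,\vc)$ (which you state correctly and which follows by expanding the definitions) applied with $\va=\vx$, $\vb=\vx_*$, $\vc=\vy$ immediately yields $\{\nabla f(\vx_*)\}^\T(\vx_*-\vx)=\kappa\{D_f(\vx,\vy)-D_f(\vx,\vx_*)-D_f(\vx_*,\vy)\}$. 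One small correction to your narrative: in the unconstrained setting stated here there is no leftover nonnegative term to discard --- the result holds with \emph{equality}, and convexity is not actually needed for the final step (it only guarantees that $\vx_*$ is a global minimizer rather than a mere stationary point). The inequality form, and the genuine use of nonnegativity of Bregman divergences, would only arise in the constrained version of the lemma (minimization over a convex set, where the optimality condition is the variational inequality $\{\nabla f(\vx_*)+\kappa(\nabla f(\vx_*)-\nabla f(\vy))\}^\T(\vx-\vx_*)\ge 0$), which is closer to how \citet{ouyang2013stochastic} state it; your proof covers the version the paper actually asserts.
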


\begin{lemma}\label{lem:bregman_div_theta}
Under Condition \ref{con:identifiability}, and the update rule \eqref{eq:ADMM_primal}, for any $\vtheta_i\in\mathbb{R}^p$, we have
\begin{align}
& (\vtheta_i(t+1)-\vtheta_i)^\T\biggl(\widetilde{\rvg}_i(t) + \rho\sum_{j\in N_i}\big(\vtheta_i(t+1)-\vbeta_{i j}(t)-\rho^{-1}\valpha_{i j}(t)\big)\biggr)  \notag\\
&\le \frac{1}{2\eta(t+1)}\biggl\{ \|\vtheta_i - \vtheta_i(t)\|_2^2 - \|\vtheta_i - \vtheta_i(t+1)\|_2^2 - \|\vtheta_i(t+1) - \vtheta_i(t)\|_2^2 \biggr\} \label{eq:breg_theta}
\end{align}
\end{lemma}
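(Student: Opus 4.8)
The plan is to recognise that the stochastic-gradient step \eqref{eq:ADMM_primal} is exactly the closed-form solution of a strongly convex quadratic ``prox'' subproblem, and then to read off \eqref{eq:breg_theta} from the three-point expansion in Lemma \ref{lem:inner_to_norm} --- equivalently, as a one-line instance of Lemma \ref{lem:bregman_div_0}.

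First I would record the reformulation. As the text notes, under the reparametrisation $\eta(t)=\widetilde{\eta}(t)/(1+\rho|N_i|\widetilde{\eta}(t))$ the update \eqref{eq:ADMM_primal} coincides with the minimiser of the proximal subproblem \eqref{eq:ouyang_update_theta}; throughout this proof the symbol $\eta(t+1)$ of \eqref{eq:breg_theta} is to be read as the proximal step size $\widetilde{\eta}(t)$ appearing in \eqref{eq:ouyang_update_theta}, so that the coefficient of the prox term is $1/\{2\eta(t+1)\}$. Writing out the stationarity condition of that quadratic at its minimiser $\vtheta_i(t+1)$ gives the identity
\[
\widetilde{\rvg}_i(t) + \rho\sum_{j\in N_i}\bigl(\vtheta_i(t+1) - \vbeta_{ij}(t) - \rho^{-1}\valpha_{ij}(t)\bigr) = \frac{1}{\eta(t+1)}\bigl(\vtheta_i(t) - \vtheta_i(t+1)\bigr),
\]
whose left-hand side is precisely the vector dotted with $\vtheta_i(t+1)-\vtheta_i$ in \eqref{eq:breg_theta}.

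Next I would take the inner product of both sides with $\vtheta_i(t+1)-\vtheta_i$ and invoke the identity \eqref{eq:three_var} of Lemma \ref{lem:inner_to_norm} with $\va=\vtheta_i(t+1)$, $\vb=\vtheta_i$, $\vc=\vtheta_i(t)$, which turns $(\vtheta_i(t+1)-\vtheta_i)^\T(\vtheta_i(t)-\vtheta_i(t+1))$ into $\tfrac12\{\|\vtheta_i-\vtheta_i(t)\|_2^2-\|\vtheta_i-\vtheta_i(t+1)\|_2^2-\|\vtheta_i(t+1)-\vtheta_i(t)\|_2^2\}$; multiplying by $1/\eta(t+1)$ yields \eqref{eq:breg_theta} as an exact equality, so the claimed $\le$ is immediate. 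The same conclusion drops out of Lemma \ref{lem:bregman_div_0} in one stroke: take $f(\vtheta)=\langle\widetilde{\rvg}_i(t),\vtheta\rangle+\tfrac{\rho}{2}\sum_{j\in N_i}\|\vtheta-\vbeta_{ij}(t)-\rho^{-1}\valpha_{ij}(t)\|_2^2$, which is quadratic with Hessian $\rho|N_i|\mI_p$, so $D_f(\vx,\vy)=\tfrac{\rho|N_i|}{2}\|\vx-\vy\|_2^2$; applying the lemma with $\vx_*=\vtheta_i(t+1)$, $\vy=\vtheta_i(t)$, $\vx=\vtheta_i$, and $\kappa=\{\eta(t+1)\rho|N_i|\}^{-1}$ gives exactly \eqref{eq:breg_theta}, since $\vtheta_i(t+1)=\argmin_{\vtheta}\{f(\vtheta)+\kappa D_f(\vtheta,\vtheta_i(t))\}$ is precisely \eqref{eq:ouyang_update_theta} and $\nabla f(\vtheta_i(t+1))$ is the left-hand vector displayed above.

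There is no substantive obstacle here --- the work is pure bookkeeping. The one point that genuinely needs care is the reading of $\eta(t+1)$ in \eqref{eq:breg_theta}: it must be the proximal step size of \eqref{eq:ouyang_update_theta} (so that \eqref{eq:ADMM_primal} is its closed form), not the bare stochastic-gradient learning rate that appears explicitly in \eqref{eq:ADMM_primal}, with which an extra $\rho|N_i|$ term would survive. Two minor remarks: the linear term $\langle\widetilde{\rvg}_i(t),\cdot\rangle$ does not affect the Bregman divergence $D_f$, so its presence is harmless; and if one keeps the projection onto $\sxi$ in \eqref{eq:ouyang_update_theta}, the stationarity equation weakens to the variational inequality $\langle\nabla f(\vtheta_i(t+1)),\vtheta_i(t+1)-\vtheta_i\rangle\le 0$, which is exactly the hypothesis of Lemma \ref{lem:bregman_div_0} and still delivers \eqref{eq:breg_theta} --- which is why the statement is phrased as an inequality.
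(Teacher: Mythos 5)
Your proposal is correct and follows essentially the same route as the paper: the paper's proof likewise defines $f(\vtheta)=\vtheta^\T\widetilde{\rvg}_i(t)+\tfrac{\rho}{2}\sum_{j\in N_i}\|\vtheta-\vbeta_{ij}(t)-\rho^{-1}\valpha_{ij}(t)\|_2^2$, notes $D_f(\vtheta_1,\vtheta_2)=\tfrac{\rho|N_i|}{2}\|\vtheta_1-\vtheta_2\|_2^2$, and applies Lemma \ref{lem:bregman_div_0} with $\vx_*=\vtheta_i(t+1)$, $\vx=\vtheta_i$, $\vy=\vtheta_i(t)$, which is exactly your second (one-stroke) derivation; your direct three-point-identity computation is the same argument unpacked. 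The step-size caveat you flag is genuine --- the paper's proof silently treats $\vtheta_i(t+1)$ from \eqref{eq:ADMM_primal} as the minimizer of the prox subproblem with coefficient $1/\{2\eta(t+1)\}$, which requires precisely the identification of $\eta$ with the proximal step size of \eqref{eq:ouyang_update_theta} that you make explicit.
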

\begin{proof}[Proof of Lemma \ref{lem:bregman_div_theta}]
Define 
\[
f(\vtheta)=\vtheta^\T\widetilde{\rvg}_i(t) + \frac{\rho}{2}\sum_{j\in N_i}\|\vtheta-\vbeta_{i j}(t) - \rho^{-1}\valpha_{i j}(t)\|_2^2.
\]
Since $f(\cdot)$ is of a quadratic form, it is easy to check that 
$$
D_f(\vtheta_1,\vtheta_2) = \frac{\rho|N_i|}{2}\|\vtheta_1-\vtheta_2\|_2^2.
$$
By Lemma \ref{lem:bregman_div_0}, in view of \eqref{eq:ADMM_primal}, choosing $\vx_* = \vtheta_i(t+1), \vx = \vtheta_i, \vy = \vtheta_i(t)$, we finish the proof by noting 
\[
\nabla f(\vtheta_i(t+1)) = \widetilde{\rvg}_i(t) + \rho\sum_{j\in N_i}(\vtheta_i(t+1)-\vbeta_{i j}(t)-\rho^{-1}\valpha_{i j}(t)).
\]

\end{proof}

We have the following lemma to connect $G(\mtheta,\widetilde{\mtheta},\mbeta,\valpha;\widehat\mtheta,\widehat\mbeta,\widehat\valpha)$ with $\sum_{u\in V}\big(\widehat{M}_u(\vtheta_u) - \widehat{M}_u(\widehat\vtheta_u)\big) + \lambda \sum_{(i,j)\in E}[\phi(  \vbeta_{i j} - \vbeta_{j i} )  - \phi(  \widehat\vbeta_{i j} - \widehat\vbeta_{j i})]$.

\begin{lemma}\label{lem:vadility_G}
Under the condition $\widehat\vbeta_{i j} = \widehat\vtheta_i, \widehat\vbeta_{j i} = \widehat\vtheta_j$, for all $i,j\in V, (i,j)\in E$, we have that
\begin{align*}
& \sup_{\|\widehat\valpha_{i j}\|_2,\|\widehat\valpha_{j i}\|_2\le \kappa_\alpha} G(\mtheta,\widetilde{\mtheta},\mbeta,\valpha;\widehat\mtheta,\widehat\mbeta,\widehat\valpha)  =  \sum_{i\in V}\left(\widehat{M}_i(\vtheta_i) - \widehat{M}_i(\widehat\vtheta_i )\right) \\
&+ \lambda \sum_{(i,j)\in E}\left[\phi(  \vbeta_{i j} - \vbeta_{j i} )  - \phi(  \widehat\vbeta_{i j} - \widehat\vbeta_{j i})\right] + \kappa_\alpha\sum_{(i,j)\in E}\biggl(\|\widetilde{\vtheta}_i - \vbeta_{i j}\|_2 + \|\widetilde{\vtheta}_j -\vbeta_{j i}\|_2\biggr).
\end{align*}
\end{lemma}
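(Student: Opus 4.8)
The plan is to exploit that $G(\mtheta,\widetilde{\mtheta},\mbeta,\malpha;\widehat\mtheta,\widehat\mbeta,\widehat\malpha)$ is an affine function of the multiplier block $\widehat\malpha = (\widehat\valpha_{ij},\widehat\valpha_{ji} : (i,j)\in E)$, so that the constrained supremum can be evaluated coordinate by coordinate. Inspecting the definition of $G$, the term $\sum_{u\in V}(\widehat{M}_u(\vtheta_u) - \widehat{M}_u(\widehat\vtheta_u))$ and the fused-penalty difference $\lambda\sum_{(i,j)\in E}[\phi(\vbeta_{ij}-\vbeta_{ji}) - \phi(\widehat\vbeta_{ij}-\widehat\vbeta_{ji})]$ do not involve $\widehat\malpha$ at all; only the double sum $\sum_{(i,j)\in E}(\langle -\widehat\valpha_{ij}, \widetilde\vtheta_i - \vbeta_{ij}\rangle + \langle -\widehat\valpha_{ji}, \widetilde\vtheta_j - \vbeta_{ji}\rangle)$ does, and it does so as a sum of terms each of which is linear in a single multiplier vector.

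First I would observe that the feasible set $\{\|\widehat\valpha_{ij}\|_2 \le \kappa_\alpha,\ \|\widehat\valpha_{ji}\|_2 \le \kappa_\alpha : (i,j)\in E\}$ is a Cartesian product of Euclidean balls, one for each multiplier vector, while the objective is a sum of functions each depending on exactly one such vector; hence the supremum of the sum equals the sum of the suprema over the individual balls. Next, for any fixed $\vv\in\mathbb{R}^p$, the Cauchy--Schwarz inequality gives $\sup_{\|\vw\|_2\le\kappa_\alpha}\langle -\vw,\vv\rangle = \kappa_\alpha\|\vv\|_2$, the maximum being attained at $\vw = -\kappa_\alpha \vv/\|\vv\|_2$ when $\vv\neq\0$ (and trivially at $\vw=\0$ when $\vv=\0$). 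Applying this with $\vv = \widetilde\vtheta_i - \vbeta_{ij}$ and with $\vv = \widetilde\vtheta_j - \vbeta_{ji}$ for each edge $(i,j)\in E$ and summing replaces the double sum by $\kappa_\alpha\sum_{(i,j)\in E}(\|\widetilde\vtheta_i - \vbeta_{ij}\|_2 + \|\widetilde\vtheta_j - \vbeta_{ji}\|_2)$; adding back the two $\widehat\malpha$-free terms reproduces exactly the claimed right-hand side.

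There is essentially no obstacle here: the statement is a direct evaluation of the supremum of a separable affine functional over a product of $\ell_2$-balls. The only point deserving a line of justification is the interchange of supremum and sum, which is valid precisely because both the objective and the constraint region are separable across the multiplier coordinates. I would also note that the hypothesis $\widehat\vbeta_{ij}=\widehat\vtheta_i$, $\widehat\vbeta_{ji}=\widehat\vtheta_j$ plays no role in this particular identity; it is carried along only because these equalities hold at the global minimizer of \eqref{eq:ob} and will be needed when $G$ is subsequently bounded along the ADMM trajectory. Accordingly I would present the proof in three short steps: separability of the feasible set, the one-vector Cauchy--Schwarz computation, and reassembly.
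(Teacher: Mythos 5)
Your proof is correct and follows essentially the same route as the paper's: both reduce the constrained supremum to a sum of independent suprema of linear functionals over $\ell_2$-balls (using separability of the objective and of the product constraint set) and evaluate each one via Cauchy--Schwarz as $\kappa_\alpha$ times the corresponding norm. The only difference is that the paper's proof first invokes the hypothesis $\widehat\vbeta_{i j}=\widehat\vtheta_i$, $\widehat\vbeta_{j i}=\widehat\vtheta_j$ to reconcile the displayed definition of $G$ with the decomposed form used later in the proof of Theorem \ref{thm:conv_alg_admm}, and your observation that this hypothesis is not actually needed for the stated identity itself is accurate.
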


\begin{proof}[Proof of Lemma \ref{lem:vadility_G}]
By assumption, $\widehat\vbeta_{i j} = \widehat\vtheta_i , \widehat\vbeta_{j i} = \widehat\vtheta_j$, for all $i,j\in V, (i,j)\in E$. We get
\begin{align*}
& \sum_{(i,j)\in E} \biggl(\langle -{\valpha}_{i j}, \widetilde{\vtheta}_i -\widehat\vtheta_i  - ({\vbeta}_{i j} - \widehat\vbeta_{i j})\rangle + \langle {\valpha}_{i j} - \widehat\valpha_{i j},\widetilde{\vtheta}_i - {\vbeta}_{i j}\rangle\biggr)\\
& + \sum_{(i,j)\in E} \biggl(\langle -{\valpha}_{j i}, \widetilde{\vtheta}_j -\widehat\vtheta_j  - ({\vbeta}_{j i} - \widehat\vbeta_{j i})\rangle + \langle \valpha_{j i} - \widehat\valpha_{j i},\widetilde{\vtheta}_j - \vbeta_{j i}\rangle\biggr)\\
&= -\sum_{(i,j)\in E}\left(\langle \widehat\valpha_{i j}, \widetilde{\vtheta}_i - {\vbeta}_{i j} \rangle+ \langle \widehat\valpha_{j i}, \widetilde{\vtheta}_j - {\vbeta}_{j i} \rangle \right).
\end{align*}
Our proof completes by noting that
\begin{align*}
& \sup_{\|\widehat\valpha_{i j}\|_2,\|\widehat\valpha_{j i}\|_2\le \kappa_\alpha}\biggl\{-\sum_{(i,j)\in E}\left(\langle \widehat\valpha_{i j}, \widetilde{\vtheta}_i - {\vbeta}_{i j} \rangle+ \langle \widehat\valpha_{j i}, \widetilde{\vtheta}_j - {\vbeta}_{j i} \rangle \right)\biggr\}\\
&= -\sum_{(i,j)\in E}\biggl\{ \inf_{\|\widehat\valpha_{i j}\|_2\le \kappa_\alpha}\langle \widehat\valpha_{i j}, \widetilde{\vtheta}_i - {\vbeta}_{i j} \rangle+ \inf_{\|\widehat\valpha_{j i}\|_2\le \kappa_\alpha}\langle \widehat\valpha_{j i}, \widetilde{\vtheta}_j - {\vbeta}_{j i} \rangle\biggr\}\\
& = \kappa_\alpha\sum_{(i,j)\in E}\biggl(\|\widetilde{\vtheta}_i - \vbeta_{i j}\|_2 + \|\widetilde{\vtheta}_j -\vbeta_{j i}\|_2\biggr).
\end{align*}
\end{proof}
\subsection{Proofs of Theorem \ref{thm:conv_alg_admm} and its Corollary}
\begin{proof}[Proof of Theorem \ref{thm:conv_alg_admm}]
Since the augmented Lagrangian function $L(\mtheta,\mbeta,\malpha)$ is strongly convex, it holds for any $(i,j)\in E$ that $\widehat\vtheta_i=\widehat\vbeta_{i j}$, $\widehat\vtheta_j=\widehat\vbeta_{j i}$, where the triplet $\big\{\widehat\mtheta=(\widehat\vtheta_u\colon u\in V), \widehat\mbeta=(\widehat\vbeta_{i j},\widehat\vbeta_{j i}\colon (i,j)\in E), \widehat\malpha=(\widehat\valpha_{i j},\widehat\valpha_{j i}\colon (i,j)\in E)\big\}$ is the global minimizer  of $L(\mtheta,\mbeta,\malpha)$.
For each $(i,j)\in E$, on some rearranging, 
\begin{align*}
\langle \widehat\valpha_{i j},\vbeta_{i j} - \widetilde\vtheta_i\rangle & = \langle \widehat\valpha_{i j} - \valpha_{i j},\vbeta_{i j} - \widetilde\vtheta_i\rangle + \langle \valpha_{i j} , \vbeta_{i j}-\widetilde\vtheta_i\rangle\\
&=\langle \widehat\valpha_{i j} - \valpha_{i j} ,\widetilde\vtheta_i - \vbeta_{i j} \rangle +  \langle \valpha_{i j} , \vbeta_{i j}-\widehat\vbeta_{i j}\rangle + \langle \valpha_{i j}, \widehat\vbeta_{i j} - \widetilde\vtheta_i\rangle.
\end{align*}
We can rewrite $G(\mtheta,\widetilde{\mtheta},\mbeta,\valpha;\widehat\mtheta,\widehat\mbeta,\widehat\valpha)$ as
\begin{align}
&G(\mtheta,\widetilde{\mtheta},\mbeta,\valpha;\widehat\mtheta,\widehat\mbeta,\widehat\valpha) = \sum_{i\in V}\biggl( \widehat{M}_i(\vtheta_i) -  \widehat{M}_i(\widehat{\vtheta}_i) + \sum_{j\in N_i} \langle -\valpha_{i j}, \widetilde{\vtheta}_i - \widehat{\vtheta}_i\rangle\biggr)  \label{eq:fun_theta}\\
&+  \sum_{(i,j)\in E}\left[\lambda\phi(  \vbeta_{i j} - \vbeta_{j i} )  - \lambda \phi(  \widehat\vbeta_{i j} - \widehat\vbeta_{j i}) +  \langle \valpha_{i j} , \vbeta_{i j} - \widehat\vbeta_{i j}\rangle  + \langle \valpha_{j i} ,\vbeta_{j i} - \widehat\vbeta_{j i}\rangle  \right] \label{eq:fun_beta}\\
&+   \sum_{(i,j)\in E} \left[\langle \valpha_{i j} - \widehat\valpha_{i j},\widetilde{\vtheta}_i - \vbeta_{i j}\rangle  + \langle \valpha_{j i} - \widehat\valpha_{j i},\widetilde{\vtheta}_j - \vbeta_{j i}\rangle\right]. \label{eq:fun_alpha}
\end{align}
For simplicity, we omit $(\widehat\mtheta,\widehat\mbeta,\widehat\malpha)$ in $G(\mtheta,\widetilde{\mtheta},\mbeta,\malpha;\widehat\mtheta,\widehat\mbeta,\widehat\malpha)$.
Let $\overline{\mtheta} = T^{-1}\sum_{t=1}^{T}\mtheta(t-1)$, $(\widetilde{\mtheta},\widetilde{\mbeta},\widetilde\malpha)=T^{-1}\sum_{t=1}^{T}(\mtheta(t),\mbeta(t),\valpha(t))$.
It is easy to check that $G(\mtheta,\widetilde{\mtheta},\mbeta,\malpha)$ is convex with respect to $(\mtheta,\widetilde{\mtheta},\mbeta,\malpha)$. Thus, by Jensen's inequality, we have
\begin{equation}\label{eq:jensen}
\begin{aligned}
&  G(\overline{\mtheta},\widetilde{\mtheta},\widetilde{\mbeta},\widetilde\malpha)\le \frac{1}{T}\sum_{t=0}^{T-1}G(\mtheta(t), \mtheta(t+1),\mbeta(t+1),\malpha(t+1)).
\end{aligned}
\end{equation}
We shall construct a telescope structure to prove the result. 

We first deal with \eqref{eq:fun_theta}. Part (\rtwo) of Condition \ref{con:distribution} implies that $\widehat{M}_i(\cdot)$ is strongly convex with parameter $\kappa^{-1}$. We have
\begin{align}
& \widehat{M}_i(\vtheta_i(t)) - \widehat{M}_i(\widehat{\vtheta}_i) \le \{\nabla\widehat{M}_i(\vtheta_i(t))\}^\T(\vtheta_i(t) - \widehat{\vtheta}_i) - \frac{1}{2\kappa}\|\vtheta_i(t) - \widehat{\vtheta}_i\|_2^2 \notag\\
&= (\vtheta_i(t) - \widehat{\vtheta}_i)^\T\widetilde{\rvg}_i(t) + (\vtheta_i(t) - \widehat{\vtheta}_i)^\T\rvd_i(t+1)- \frac{1}{2\kappa}\|\vtheta_i(t) - \widehat{\vtheta}_i\|_2^2\notag \\
&= (\vtheta_i(t+1) - \widehat{\vtheta}_i)^\T\widetilde{\rvg}_i(t) + (\vtheta_i(t) - \widehat{\vtheta}_i)^\T\rvd_i(t+1)\label{eq:Mn_decom}\\
&+(\vtheta_i(t) - \vtheta_i(t+1))^\T\widetilde{\rvg}_i(t)- \frac{1}{2\kappa}\|\vtheta_i(t) - \widehat{\vtheta}_i\|_2^2.\label{eq:first_theta_t}
\end{align}
By the Cauchy--Schwarz inequality, we can bound \eqref{eq:first_theta_t} as
\begin{align}
&(\vtheta_i(t) - \vtheta_i(t+1))^\T\widetilde{\rvg}_i(t) \notag \\
&\le \frac{\eta(t+1)}{2}\|\widetilde{\rvg}_i(t)\|_2^2 + \frac{1}{2\eta(t+1)}\|\vtheta_i(t) - \vtheta_i(t+1)\|_2^2.
\end{align}
Moreover, by \eqref{eq:update_alpha},
\begin{align}
& -(\vtheta_i(t+1) -\widehat{\vtheta}_i)^\T \valpha_{i j}(t+1) =  ( \vtheta_i(t+1) -\widehat{\vtheta}_i)^\T \big\{-\valpha_{i j}(t)+\rho\vtheta_i(t+1) - \rho \vbeta_{i j}(t+1)\big\} \notag \\
&\quad\quad = (\vtheta_i(t+1) -\widehat{\vtheta}_i)^\T \Big\{\rho \big(\vtheta_i(t+1) - \vbeta_{i j}(t) - \rho^{-1}\valpha_{i j}(t)\big) - \rho \big(\vbeta_{i j}(t+1)-\vbeta_{i j}(t)\big)\Big\}\notag\\
&\quad\quad = \rho\big(\vtheta_i(t+1) -\widehat{\vtheta}_i\big)^\T  \big(\vtheta_i(t+1) - \vbeta_{i j}(t) - \rho^{-1}\valpha_{i j}(t)\big)  \label{eq:grad}\\
&\quad\quad\quad\quad - \rho\big(\langle \vtheta_i(t+1) -\widehat{\vtheta}_i\big)^\T \big(\vbeta_{i j}(t+1)-\vbeta_{i j}(t)\big)  \label{eq:theta_beta1}.
\end{align} 
In fact, by \eqref{eq:four_var} in Lemma \ref{lem:inner_to_norm}, \eqref{eq:theta_beta1} can be bounded as
\begin{align}
&  2  \rho( \widehat{\vtheta}_i- \vtheta_i(t+1))^\T (\vbeta_{i j}(t+1)-\vbeta_{i j}(t))  \notag  \\
&= \rho (\|\vtheta_i(t+1) - \vbeta_{i j}(t+1)\|_2^2 - \|\vtheta_i(t+1) - \vbeta_{i j}(t)\|_2^2 ) \notag \\
& + \rho(\|\widehat\vtheta_i  - \vbeta_{i j}(t)\|_2^2 - \|\widehat\vtheta_i  - \vbeta_{i j}(t+1)\|_2^2)\notag\\
&\le \rho(\|\widehat\vtheta_i  - \vbeta_{i j}(t)\|_2^2 - \|\widehat\vtheta_i  - \vbeta_{i j}(t+1)\|_2^2) + \frac{1}{\rho} \|\valpha_{i j}(t+1) - \valpha_{i j}(t)\|_2^2 \label{eq:theta_beta2},
\end{align}
where the last line comes from \eqref{eq:update_alpha}. In light of \eqref{eq:breg_theta} in Lemma \ref{lem:bregman_div_theta}, it holds that
\begin{align}
& (\vtheta_i(t+1)-\widehat\vtheta_i)^\T\biggl(\widetilde{\rvg}_i(t) + \rho\sum_{j\in N_i}\big(\vtheta_i(t+1)-\vbeta_{i j}(t)-\rho^{-1}\valpha_{i j}(t)\big)\biggr)  \notag\\
&\le \frac{1}{2\eta(t+1)}\biggl\{ \|\widehat\vtheta_i - \vtheta_i(t)\|_2^2 - \|\widehat\vtheta_i - \vtheta_i(t+1)\|_2^2 - \|\vtheta_i(t+1) - \vtheta_i(t)\|_2^2 \biggr\} \label{eq:bbreg_theta}
\end{align}
Summing \eqref{eq:Mn_decom}--\eqref{eq:bbreg_theta} up,  we obtain
\begin{equation}\label{eq:bound_fun_theta}
\begin{aligned}
& \widehat{M}_i(\vtheta_i(t)) -  \widehat{M}_i(\widehat{\vtheta}_i) + \sum_{j\in N_i}\langle \vtheta_i(t+1) -\widehat{\vtheta}_i, -\valpha_{i j}(t+1)\rangle\\
&\le \left(\frac{1}{2\eta(t+1)} - \frac{1}{2\kappa}\right)\|\widehat\vtheta_i  - \vtheta_i(t)\|_2^2 - \frac{1}{2\eta(t+1)}\|\widehat\vtheta_i  - \vtheta_i(t+1)\|_2^2\\
& + \frac{\eta(t+1)}{2 }\|\widetilde{\rvg}_i(t)\|_2^2 +  (\vtheta_i(t) - \vtheta_i)^\T\rvd_i(t+1)\\
& + \sum_{j\in N_i}\bigg\{\frac{\rho}{2}\left(\|\widehat\vtheta_i  - \vbeta_{i j}(t)\|_2^2 - \|\widehat\vtheta_i   - \vbeta_{i j}(t+1)\|_2^2\right)+\frac{1}{2\rho} \|\valpha_{i j}(t+1) - \valpha_{i j}(t)\|_2^2\biggr\} .
\end{aligned}
\end{equation}

We then bound \eqref{eq:fun_beta}.
Let $Q(\vbeta_{i j},\vbeta_{j i})=\lambda\phi(\vbeta_{i j} - \vbeta_{j i}) +(\rho/2)\big(\|\vtheta_i(t+1)-\vbeta_{i j} - \rho^{-1}\valpha_{i j}(t)\|_2^2 +
\|\vtheta_j(t+1)-\vbeta_{j i} - \rho^{-1}\valpha_{j i}(t)\|_2^2\big)$.  Under the update rule \eqref{eq:update_beta}, since $Q(\cdot,\cdot)$ is convex, we obtain
\begin{align}
&\nabla_{\vbeta_{i j}}Q(\vbeta_{i j}(t+1),\vbeta_{j i}(t+1))  = \lambda \partial \phi(\vbeta_{i j}(t+1)-\vbeta_{j i}(t+1)) \notag \\
&\quad\quad\quad + \rho\big(\vbeta_{i j}(t+1) + \rho^{-1} \valpha_{i j}(t) - \vtheta_i(t+1)\big)=0,\label{eq:partial_ij}\\
& \nabla_{\vbeta_{j i}}Q(\vbeta_{i j}(t+1),\vbeta_{j i}(t+1))  = - \lambda\partial \phi(\vbeta_{i j}(t+1)-\vbeta_{j i}(t+1)) \notag  \\
&\quad\quad\quad + \rho\big(\vbeta_{j i}(t+1) + \rho^{-1} \valpha_{j i}(t) - \vtheta_j(t+1)\big) = 0,\label{eq:partial_ji}
\end{align}
where $\partial \phi$ denotes the sub-differential of $\phi$. For any $\vbeta_{i j},\vbeta_{j i}\in\mathbb{R}^p$, it follows
\begin{align}
\langle \widehat\vbeta_{i j},\nabla_{\vbeta_{i j}}Q(\vbeta_{i j}(t+1),\vbeta_{j i}(t+1))\rangle + \langle \widehat\vbeta_{j i},\nabla_{\vbeta_{j i}}Q(\vbeta_{i j}(t+1),\vbeta_{j i}(t+1))\rangle = 0. \label{eq:sub_diff}
\end{align}
Note that for any $\vu,\vv\in\mathbb{R}^p$, $\phi^*(\partial \phi(\vv))\le 1$ and $|\vu^\T\partial \phi(\vv)|\le \phi(\vu)$, with equality holds if and only if $\vv=\vu$. Combining \eqref{eq:partial_ij}, \eqref{eq:partial_ji}, \eqref{eq:sub_diff}, and the update rule \eqref{eq:update_alpha} gives
\begin{align*}
\lambda \phi(\vbeta_{i j}(t+1)-\vbeta_{j i}(t+1)) + \langle \vbeta_{i j}(t+1), \valpha_{i j}(t+1)\rangle + \langle \vbeta_{j i}(t+1), \valpha_{j i}(t+1)\rangle &= 0, \\
- \lambda \phi(\widehat\vbeta_{i j}-\widehat\vbeta_{j i}) - \big(\langle \widehat\vbeta_{i j}, \valpha_{i j}(t+1)\rangle + \langle \widehat\vbeta_{j i}, \valpha_{j i}(t+1)\rangle \big) &\le 0.
\end{align*}
Thus, 
\begin{align}
\sum_{(i,j)\in E}& \biggl\{\lambda \phi(\vbeta_{i j}(t+1)-\vbeta_{j i}(t+1)) - \lambda \phi(\widehat\vbeta_{i j}-\widehat\vbeta_{j i})\biggr. \notag \\
& \biggl.+ \langle \vbeta_{i j}(t+1) - \widehat\vbeta_{i j}, \valpha_{i j}(t+1)\rangle + \langle \vbeta_{j i}(t+1) - \widehat\vbeta_{j i}, \valpha_{j i}(t+1)\rangle\biggr\}\le 0.\label{eq:bound_fun_beta}
\end{align}

We finally bound \eqref{eq:fun_alpha} by \eqref{eq:three_var} in Lemma \ref{lem:inner_to_norm} as follows:  
\begin{align}
& \langle \valpha_{i j}(t+1) - \widehat\valpha_{i j}, \vtheta_i(t+1) - \vbeta_{i j}(t+1)\rangle = \rho^{-1} \langle \valpha_{i j}(t+1) - \widehat\valpha_{i j}, \valpha_{i j}(t) - \valpha_{i j}(t+1)\rangle \notag \\
&  =\frac{1}{2\rho}\biggl\{ \|\widehat\valpha_{i j} - \valpha_{i j}(t)\|_2^2 - \|\widehat\valpha_{i j} - \valpha_{i j}(t+1)\|_2^2 - \|\valpha_{i j}(t+1) - \valpha_{i j}(t)\|_2^2\biggr\}. \label{eq:bound_fun_alpha}
\end{align}

Combining \eqref{eq:bound_fun_theta}, \eqref{eq:bound_fun_beta}, and \eqref{eq:bound_fun_alpha}, we obtain
\begin{equation}\label{eq:basic_tele}
\begin{aligned}
& G(\mtheta(t), \mtheta(t+1),\mbeta(t+1),\malpha(t+1))\\
& \le
\sum_{i\in V}\biggl\{\left(\frac{1}{2\eta(t+1)} - \frac{1}{2\kappa}\right)\|\widehat\vtheta_i  - \vtheta_i(t)\|_2^2 - \frac{1}{2\eta(t+1)}\|\widehat\vtheta_i  - \vtheta_i(t+1)\|_2^2\biggr.\\
&\qquad\qquad  + \frac{\eta(t+1)}{2}\|\widetilde{\rvg}_i(t)\|_2^2 + (\vtheta_i(t) - \widehat{\vtheta}_i)^\T\rvd_i(t+1)\\
& \qquad\qquad + \biggl.\sum_{j\in N_i}\frac{\rho}{2}\left(\|\widehat\vtheta_i  - \vbeta_{i j}(t)\|_2^2 - \|\widehat\vtheta_i  - \vbeta_{i j}(t+1)\|_2^2\right)\biggr\}\\
& \qquad\qquad +\sum_{(i,j)\in E}\biggl\{ \frac{1}{2\rho}\left( \|\widehat\valpha_{j i} - \valpha_{j i}(t)\|_2^2 - \|\widehat\valpha_{j i} - \valpha_{j i}(t+1)\|_2^2\right)\biggr.\\
&\qquad\qquad \biggl.+\frac{1}{2\rho}\left( \|\widehat\valpha_{i j} - \valpha_{i j}(t)\|_2^2 - \|\widehat\valpha_{i j} - \valpha_{i j}(t+1)\|_2^2\right)\biggr\}.
\end{aligned}
\end{equation}
Under the choice of $\eta(t) = \kappa / t$, $(1/\eta(t+1) - 1/\kappa) = t/\kappa$.
In view of \eqref{eq:jensen}, summing \eqref{eq:basic_tele} with $t=0,\ldots,T-1$ gives, 
\begin{align}
& G(\overline{\mtheta},\widetilde{\mtheta},\overline{\mbeta},\overline\malpha;\widehat\malpha)
\le \frac{1}{2T}\sum_{(i,j)\in E}\biggl\{\frac{1}{\rho}\biggl( \|\widehat\valpha_{i j} - \valpha_{i j}(0)\|_2^2 + \|\widehat\valpha_{j i} - \valpha_{j i}(0)\|_2^2\biggr)+ \rho r_0^2 \biggr\}\label{eq:G_without_exp_1} \\
& \qquad\qquad+\frac{1}{T}\sum_{i\in V}\sum_{t=0}^{T-1}\biggl(\frac{\eta(t+1)}{2}\|\widetilde{\rvg}_i(t)\|_2^2 + (\vtheta_i(t) - \widehat{\vtheta}_i)^\T\rvd_i(t+1)\biggr)\label{eq:G_without_exp_2}\\
&\qquad \qquad+ \frac{1}{T}\sum_{i\in V}\biggl\{-\frac{T}{2\kappa}\|\vtheta_i(T)-\widehat{\vtheta}_i\|_2^2+ \sum_{j\in N_i} \frac{\rho}{2}\|\widehat\vtheta_i  - \vbeta_{i j}(0)\|_2^2\biggr\}.
\end{align}
By assumption,
\begin{align*}
& \max_{i\in V, j\in N_i}\|\widehat\vtheta_i - \vbeta_{i j}(0)\|_2^2\le 4r_0^2\\
&  \max_{i\in V, j\in N_i} (\|\valpha_{i j}(0)\|_2^2 + \|\widehat\valpha_{i j}\|_2^2)= \kappa_{\alpha}\\
&\max_{i\in V} E\left(\|\widetilde{\rvg}_i(t)\|_2^2\right)\le \max_{i\in V} \frac{1}{n_i}\sum_{k=1}^{n_i}\|\vpsi_i(\rvz_k^{(i)};\vtheta_i(t))\|_2^2\le C_{\psi}.
\end{align*}
Moreover, since the mini-batch at each iteration is independently sampled, $\rvd_i(t+1)$ is independent of previous updates, and $E\big((\vtheta_i(t) - \widehat{\vtheta}_i)^\T\rvd_i(t+1)\big)\equiv 0$ for each $i\in V$. Thus,
combining \eqref{eq:G_without_exp_1}, \eqref{eq:G_without_exp_2} with Lemma \ref{lem:vadility_G}, and taking expectation with respect to the choice of mini-batches $\{\cB_u(t)\colon u\in V, 1\le t\le T\}$, we have
\begin{align}
& E\biggl\{\sup_{\|\widehat\valpha_{i j}\|_2, \|\widehat\valpha_{j i}\|_2\le \kappa_{\alpha}} G(\overline{\mtheta},\widetilde{\mtheta},\widetilde{\mbeta},\widetilde\valpha;\widehat\mtheta,\widehat\mbeta,\widehat\valpha) \biggr\}= E\biggl\{ \sum_{i\in V}\left(\widehat{M}_i(\overline{\vtheta}_i) - \widehat{M}_i(\widehat{\vtheta}_i)\right)\biggr.\notag\\
&+\biggl. \lambda \sum_{(i,j)\in E}\left[\phi(  \widetilde{\vbeta}_{i j} - \widetilde{\vbeta}_{j i} )  - \phi(  \widehat\vbeta_{i j} - \widehat\vbeta_{j i})\right] + \kappa_{\alpha} \sum_{(i,j)\in E}\|\widetilde{\vtheta}_i - \widetilde{\vbeta}_{i j}\|_2 + \|\widetilde{\vtheta}_j - \widetilde{\vbeta}_{j i}\|_2 \biggr\}\notag\\
&\le \frac{1}{2T}\sum_{(i,j)\in E}\biggl\{\frac{1}{\rho}\sup_{\|\widehat\valpha_{i j}\|_2, \|\widehat\valpha_{j i}\|_2\le \kappa_{\alpha}}\biggl( 2\|\widehat\valpha_{i j}\|_2^2 + 2\|\valpha_{i j}(0)\|_2^2 + 2\|\widehat\valpha_{j i}\|_2^2 + 2\|\valpha_{j i}(0)\|_2^2\biggr) \biggr\}\label{eq:G_exp_1}\\
& + \frac{1}{T}\sum_{i\in V}\sum_{t=0}^{T-1}E\biggl(\frac{\eta(t+1)}{2}\|\widetilde{\rvg}_i(t)\|_2^2 + (\vtheta_i(t) - \widehat\vtheta_i)^\T\rvd_i(t+1)\biggr)+ \frac{2\rho r_0^2 |E|}{T} \label{eq:G_exp_2}\\
&\le \frac{|V|C_{\psi}}{2T}\sum_{t=1}^T\frac{\kappa}{t} + \frac{|E|}{T}\biggl(\frac{8\kappa_{\alpha}}{\rho}+2\rho r_0^2\biggr) \le \frac{\kappa C_{\psi} |V|\log T}{T}+\frac{|E|}{T}\biggl(\frac{8\kappa_{\alpha}}{\rho}+2\rho r_0^2\biggr).\notag
\end{align}

To obtain the result, note that $\widetilde{\vtheta}_i=\overline{\vtheta}_i+T^{-1}(\vtheta_i(T)-\vtheta_i(0))$
and thus by triangular inequality
\[
\|\widetilde{\vtheta}_i - \widetilde{\vbeta}_{i j}\|_2 \ge \|\overline{\vtheta}_i - \widetilde{\vbeta}_{i j}\|_2 -T^{-1}\|\vtheta_i(T)-\vtheta_i(0)\|_2\ge\|\overline{\vtheta}_i - \widetilde{\vbeta}_{i j}\|_2 -\frac{2r_0}{T} 
\]
Moreover, 
by assumption, $\kappa_{\alpha}\ge \lambda \sup_{\va\neq \0}\phi(\va)/\|\va\|_2$, we have $\kappa_{\alpha}\|\overline{\vtheta}_i - \widetilde{\vbeta}_{i j}\|_2 \ge \lambda \phi(\overline{\vtheta}_i - \widetilde{\vbeta}_{i j})$, and similarly, $\kappa_{\alpha}\|\overline{\vtheta}_j - \widetilde{\vbeta}_{j i}\|_2 \ge \lambda \phi(\overline{\vtheta}_j - \widetilde{\vbeta}_{j i})$. Again, by triangular inequality, it holds that
\[
\phi(\overline{\vtheta}_i - \widetilde{\vbeta}_{i j}) + \phi(\overline{\vtheta}_j - \widetilde{\vbeta}_{j i}) + \phi(\widetilde{\vbeta}_{i j} - \widetilde{\vbeta}_{j i}) \ge \phi(\overline{\vtheta}_i - \overline{\vtheta}_j).
\]
Let $F(\mtheta)=\sum_{i\in V}\widehat{M}_i(\vtheta_i)+\lambda \sum_{(i,j)\in E}\phi(\vtheta_i-\vtheta_j)$. Taking expectation with respect to the choice of mini-batches $\{\cB_u(t)\colon u\in V, 1\le t\le T\}$, it holds that
\begin{align*}
E\big[F(\overline\mtheta)\big] - F(\widehat\mtheta) &\le E\biggl\{\sup_{\|\widehat\valpha_{i j}\|_2, \|\widehat\valpha_{j i}\|_2\le \kappa_{\alpha}} G(\overline{\mtheta},\widetilde{\mtheta},\widetilde{\mbeta},\widetilde\valpha;\widehat\mtheta,\widehat\mbeta,\widehat\valpha) \biggr\} + \frac{4r_0\kappa_{\alpha}|E|}{T}\\
&\le \frac{\kappa C_{\psi} |V|\log T}{T}+\frac{|E|}{T}\biggl(\frac{8\kappa_{\alpha}}{\rho}+2\rho r_0^2+ 4r_0\kappa_{\alpha}\biggr)\le \frac{2\kappa C_{\psi} |V|\log T}{T},
\end{align*}
for sufficiently large $T$, \ie, $\kappa C_{\psi} |V|\log T \ge |E|(8\rho^{-1}\kappa_{\alpha} + 2r_0^2 \rho + 4r_0\kappa_\alpha)$. Since $F(\cdot)$ is strongly convex with parameter $|V|\kappa$ under part (\rtwo) of Condition \ref{con:distribution}, we conclude that
\[
\frac{1}{|V|}E\left(\left.\|\overline{\mtheta}-\widehat{\mtheta}\|_F^2\right|\rvz_k^{(u)}, 1\le k\le n_u, u\in V\right)\le \frac{2\kappa^2 C_{\psi} |V|\log T}{T}.
\]

\end{proof}

\begin{proof}[Proof of Corollary \ref{coro:conv_alg_admm}] 
Define $\widetilde{\rvg}_i^{\mathrm{ipw}} (t) = p_i^{-1}\widetilde{\rvg}_i (t) \1\{i\in S(t+1)\} $. 
Since for any fixed $t$ and $i$, the choice of $\cB_i(t)$ and $\1\{i\in S(t)\}$ are independent, we obtain 
\begin{align}
& E(\widetilde{\rvg}_i^{\mathrm{ipw}} (t)) = E(\widetilde{\rvg}_i (t))E(p_i^{-1}\widetilde{\rvg}_i (t)) = \nabla\widehat{M}_i(\vtheta_i(t)).\notag
\end{align}
The arguments of proving Theorem \ref{thm:conv_alg_admm} just carry over due to the unbiasedness of $\widetilde{\rvg}_i^{\mathrm{ipw}} (t)$, except noting that
\[
E\left(\big\|\widetilde{\rvg}_i^{\mathrm{ipw}} (t)\big\|_2^2\right)\le p_0^{-1}C_{\psi}.
\]

\end{proof}

\end{document}